\title[Almost Optimal Algorithms for Two-player Zero-Sum Linear Mixture Markov Games]{Almost Optimal Algorithms for Two-player Zero-Sum Linear Mixture Markov Games}
\begin{document}

\maketitle

\begin{abstract}%
We study reinforcement learning for two-player zero-sum Markov games with simultaneous moves in the finite-horizon setting, where the transition kernel of the underlying Markov games
can be parameterized by a linear function over the current state, both players' actions and the next state. In particular, we assume that we can control both players and aim to find the Nash Equilibrium by minimizing the duality gap. We propose an algorithm Nash-UCRL based on the principle ``Optimism-in-Face-of-Uncertainty''. Our algorithm only needs to find a Coarse Correlated Equilibrium (CCE), which is computationally efficient. Specifically, we show that Nash-UCRL can provably achieve an $\tilde{O}(dH\sqrt{T})$ regret, where $d$ is the linear function dimension, $H$ is the length of the game and $T$ is the total number of steps in the game. To assess the optimality of our algorithm, we also prove an $\tilde{\Omega}( dH\sqrt{T})$ lower bound on the regret. Our upper bound matches the lower bound up to logarithmic factors, which suggests the optimality of our algorithm.
\end{abstract}

\begin{keywords}%
Markov Games; Reinforcement Learning; Linear Function Approximation.
\end{keywords}

\section{Introduction}
Multi-agent reinforcement learning (MARL) has achieved tremendous practical success across a wide range of machine learning tasks, including large-scale strategy games such as GO \citep{silver2016mastering}, TexasHold’em poker \citep{brown2019superhuman}, real-time video games such as Starcraft \citep{vinyals2019grandmaster}, and autonomous driving \citep{shalev2016safe}. Among these models used in MARL, two-player zero-sum Markov games (MG) \citep{shapley1953stochastic,littman1994markov} is probably one of the most widely studied models and can be regarded as a generalization of the Markov Decision Processes (MDP) \citep{puterman2014markov}.


In two-player Markov games, the two players share states, play actions simultaneously and independently, and observe the same reward. One player (i.e., max-player) aims to maximize the return while the other (i.e., min-player) aims to minimize it. 
A special case of general Markov games (i.e., simultaneous-move games) is turn-based games, where only one player can take action in each step, i.e., the max and min players take turns to play the game. The players aim to find the Nash equilibrium for this game. 
Most existing results on learning two-player Markov games either assume the access to a generative model that can sample the next state for an arbitrary state-action pair \citep{jia2019feature,sidford2020solving,cui2020minimax}, or a well-explored behavior policy \citep{lagoudakis2012value,perolat2015approximate,perolat2016softened,perolat2016use,perolat2017learning}, and fail to consider the exploration-exploitation tradeoff \citep{kearns2002near}. 

In order to get rid of the generative model and well-explored behavior policy assumptions, 
\citet{wei2017online} extended the UCRL2 algorithm \citep{jaksch2010near} for MDP to zero-sum simultaneous-move Markov games in the average-reward setting, and proposed the UCSG algorithm that achieves a sublinear regret when competing with an arbitrary opponent. 
Recently, \citet{bai2020provable,bai2020near,liu2020sharp} proposed a series of algorithms for learning tabular episodic two-player zero-sum Markov games (they call it self-play algorithm for competitive reinforcement learning), and proved the upper and lower regret bounds and/or sample complexity.
For Markov games with large state and action spaces, it is natural to use linear function approximation. In particular,
\citet{xie2020learning} proposed the OMNI-VI algorithm for Markov games where the transition kernel and reward function 
possess a linear structure, and achieved an $\tilde O(\sqrt{d^{3}H^{3}T})$ regret bound, with $d$ being the dimension of the linear structure and $H$ being the episode length. However, as we will show in this paper, the information theoretic lower bound for the zero-sum two-player Markov games with linear structures is $\Omega(dH\sqrt{T})$. Therefore, there is still a gap between the upper and lower bounds of  existing algorithms 
for Markov games with linear structures. This raises the following question:

\begin{center}
\emph{
Can we design a minimax optimal algorithm for learning zero-sum Markov games with linear function approximation?
}    
\end{center}

In this paper,  we give an affirmative answer to the above question for a class of episodic Markov games in the offline setting\footnote{Here we follow the same terminology ``offline setting'' as in \citet{xie2020learning}, which is also called ``self-play'' in \citet{bai2020provable,bai2020near}}, where both players are controlled by a central learner. The goal of the central learner is to find an approximate Nash Equilibrium (NE) of the game, with the approximation error measured by a notion of duality gap. In particular, we consider Markov games with a linear mixture structure, where the transition probability kernel is a linear mixture model that is inspired by the linear mixture MDPs studied in \citep{modi2019sample,jia2020model,ayoub2020model,zhou2020provably}. We propose the first nearly minimax optimal algorithm based on the principle of ``Optimism-in-Face-of-Uncertainty'' without assuming the access to the generative model or well-explored behavior policy. 
We summarize the contributions of our work as follows:

\begin{itemize}
\item We propose a $\algname$ algorithm for general Markov games (i.e., simultaneous-move game) that can provably achieve an $\tilde{O}(dH\sqrt{T})$ upper bound on the regret, where $d$ is the dimension of linear mixture structure, $H$ is the length of the game, and $T$ the total number of steps in the Markov game. Our algorithm can be specialized to turn-based games and also achieves $\tilde{O}(dH\sqrt{T})$ regret.
\item To access the optimality of our algorithm $\algname$, we prove an $\Omega( dH\sqrt{T})$ regret lower bound . Our upper bound matches the lower bound up to logarithmic factors, which suggests the optimality of our algorithm. While our lower bound is proved for Markov games with linear mixture structure, we argue that it is also a valid lower bound for Markov games with linear structure \citep{xie2020learning}.
\end{itemize}

\noindent\textbf{Notation} We use lower case letters to denote scalars,  lower and upper case bold letters to denote vectors and matrices. We use $\| \cdot \|$ to indicate Euclidean norm, and for a semi-positive definite matrix $\bSigma$ and any vector $\xb$, $\| \xb \|_{\bSigma} := \| \bSigma^{1/2} \xb \| = \sqrt{\xb^{\top} \bSigma \xb}$. For a real value $x$ and an interval $[a,b]$, we use $[x]_{[a,b]}$ to indicate the projection of $x$ onto $[a,b]$.
We also use the standard $O$ and $\Omega$ notations. We say $a_n = O(b_n)$ if and only if $\exists C > 0, N > 0, \forall n > N, a_n \le C b_n$; $a_n = \Omega(b_n)$ if and only if $\exists C > 0, N > 0, \forall n > N, a_n \ge C b_n$. The notation $\tilde{O}$ is used to hide logarithmic factors. 


\section{Related Work}

\textbf{Tabular Markov game.}
Under the tabular setting,
\citet{littman1996generalized} extended the value iteration and Q-learning algorithms \citep{watkins1989learning} to zero-sum Markov games.
\citet{littman2001friend,greenwald2003correlated,hu2003nash} further extended it to general-sum Markov games with $n$-player. \citet{hansen2013strategy} provided the first strong polynomial algorithm for solving two-player turn-based Markov games.
\citet{sidford2018near} proposed a variance-reduced variant of the minimax Q-learning algorithm with near-optimal sample complexity. 
\citet{lagoudakis2012value, perolat2015approximate, fan2020theoretical} considered value-iteration with function approximation and established finite-time convergence to the NEs of two-player zero-sum Markov games. Their results are based on the framework of fitted value-iteration 
\citep{munos2008finite}.  \citet{jia2019feature} studied turn-based zero-sum Markov games, where the transition model is assumed to be embedded in some $d$-dimensional feature space.  \citet{cui2020minimax} proposed an algorithm for turn-based zero-sum Markov games based on plug-in estimator and achieved minimax sample complexity. For the simultaneous-move zero-sum Markov games, \citet{zhang2020model} proposed an algorithm which achieved minimax sample complexity if the algorithm is reward-agnostic. 
All the above works either assume a generative oracle or a well explored behavioral policy for drawing transitions, therefore bypassing the exploration issue. \citet{bai2020provable} proposed a VI-ULCB algorithm for tabular episodic zero-sum Markov games, which achieves $\tilde O(\sqrt{H^{3}S^{2}ABT})$ regret for simultaneous move (i.e., general Markov game) and $O(\sqrt{H^{3}S^{2}(A+B)T})$ regret for turn-based game, where $A$ and $B$ are the number of actions for each player, $H$ is the length of the game, and $T$ is the total number of steps played in the game. They also proved an $\Omega(\sqrt{H^{2}S(A+B)T})$ lower bound. 
For general Markov game, \citet{bai2020near} proposed an Optimistic Nash Q-learning algorithm with a regret of $\tilde O(\sqrt{H^{4}SABT})$, and an Optimistic Nash V-learning algorithm with a regret of $\tilde O(\sqrt{H^{5}S(A+B)T})$, both of which improve the regret in \citet{bai2020provable} in the dependence on $S,A,B$.
The best known regret is achieved by Nash-VI proposed in \citet{liu2020sharp}, which is $\tilde O(\sqrt{H^{2}SABT})$. As can be seen, without assuming the access to a generative model or a well explored behavioral policy, there is still a gap between the upper and lower regret bounds for existing algorithms, even for the simplest tabular Markov games.


\noindent\textbf{Online RL with linear function approximation.}
There are several lines of work aiming at providing theoretical guarantees for online RL with function approximation. The first line of work focus on the linear function approximation setting, which assumes that the MDP (e.g., transition probability, reward, or value function) can be represented as a linear function of some given feature mapping. These works proposed algorithms which enjoy sample complexity/regret scaling with the dimension of the feature mapping, rather than the cardinality of state and action spaces. For example, \citet{yang2019sample,jin2019provably,wang2019optimism, zanette2020frequentist, he2020logarithmic} considered the linear MDP model, where the transition probability function and reward function are linear in some feature mapping over state-action pairs. \citet{zanette2020learning} studied MDPs with low inherent Bellman error, where the value functions are nearly linear w.r.t. the feature mapping. \citet{yang2019reinforcement, modi2019sample, jia2020model, ayoub2020model, cai2019provably, zhou2020provably, he2020logarithmic} studied the linear mixture MDPs, where the transition probability kernel is a linear mixture of a number of basis kernels. Inspired by linear mixture MDPs, we introduce the linear mixture Markov game. 

\section{Preliminaries}
In this section, we introduce the setup of the episodic two-player zero-sum Markov games with simultaneous moves and the linear mixture structure we use in this paper. 

\subsection{Two-Player Markov Games}
The two-player zero-sum Markov game (MG) \citep{shapley1953stochastic,littman1994markov} is a generalization of the standard Markov decision process (MDP) where the max-player seeks to maximize the total return, and the min-player seeks to minimize the total return.

\noindent\textbf{Simultaneous-move MG.} Formally, we denote a two-player zero-sum simultaneous-moves episodic Markov Game by a tuple $M(\cS, \cA_{\max}, \cA_{\min}, H, \{r_{h}\}_{h=1}^{H}, \{\mathbb{P}_{h}\}_{h=1}^{H})$. $\cS$ is a countable state space, $\cA_{\max}, \cA_{\min}$ are the finite action spaces of the max-player and  the min-player respectively. $H$ is the length of the game/episode. For simplicity, we assume the reward function for the max-player $\{r_{h}\}_{h=1}^{H}$ is deterministic and known function $r_{h}: \cS \times \cA_{\max} \times \cA_{\min} \rightarrow [-1,1]$. $\mathbb{P}_{h}(s'|s, a, b)$ is the transition probability function which denotes the probability for state $s$ to transit to state $s'$ given players' action pair $(a,b)$ at step $h$. 

\noindent\textbf{Markov Policy and Value Function.} We first define the stochastic policies, which give distributions over the actions. A policy $\pi = \{\pi_{h} : \cS \rightarrow \Delta_{\cA_{\max}}\}_{h=1}^{H}$ is a collection of functions which map a state $s \in \cS$ to a distribution of actions. Here $\Delta_{\cA_{\max}}$ is the probability simplex over action set $\cA_{\max}$. Similarly, we can define a policy $\nu = \{\nu_{h} : \cS \rightarrow \Delta_{\cA_{\min}}\}_{h=1}^{H}$ for the min-player, where $\Delta_{\cA_{\min}}$ is the probability simplex over action set $\cA_{\min}$. 
We use the notation $\pi_{h}(a|s)$ and $\nu_{h}(b|s)$ to present the probability of taking action $a$ or $b$ for state $s$ at step $h$ under Markov policy $\pi,\nu$ respectively. We define the action-value function (a.k.a., $Q$ function) $Q_{h}^{\pi, \nu}:\cS\times\cA_{\max}\times\cA_{\min}\rightarrow \mathbb{R}$ as follows
\begin{align*}
Q_{h}^{\pi, \nu}(s,a,b) &= \EE_{\pi,\nu,h,s,a,b}\bigg[\sum_{h'=h}^{H}r(s_{h'}, a_{h'}, b_{h'})\bigg| s_{h}=s, a_{h}=a, b_{h}=b\bigg],
\end{align*}
and the value function $V_{h}^{\pi, \nu}:\cS\rightarrow \mathbb{R}$ as follows
\begin{align*}
V_{h}^{\pi, \nu}(s) &= \EE_{a\sim \pi_{h}(\cdot|s),b\sim \nu_{h}(\cdot|s)} Q_{h}^{\pi, \nu}(s, a, b), \qquad V_{H+1}^{\pi, \nu}(s) = 0.
\end{align*}
In the definition of $Q_{h}^{\pi, \nu}$, $\EE_{\pi,\nu,h,s,a,b}$ is an expectation over state-action pairs of length $H - h + 1$ induced by the policy $(\pi, \nu)$ and the transition probability of the MG $M$, when initializing the process with the triplet $(s,a,b)$ at step $h$. Because $r_{h}(\cdot,\cdot,\cdot) \in [-1,1]$, it is easy see that both $Q$ functions and value functions are bounded
\begin{align*}
|Q_{h}^{\pi, \nu}(\cdot,\cdot,\cdot)|\leq H, \qquad |V_{h}^{\pi, \nu}(\cdot)|\leq H.
\end{align*}
Furthermore, for any joint distribution $\sigma \in \Delta(\cA_{\text{max}}\times \cA_{\text{min}})$, we denote by $\cP_{\text{max}}\sigma$ the marginal distribution for the max-player and by $\cP_{\text{min}}\sigma$ the marginal distribution for the min-player. 

\noindent\textbf{Best Response and Bellman Equation.} The goal of the max-player is to maximize the total rewards. The goal of the min-player is to minimize the total rewards that the max-player will get because this is a zero-sum game. In other words,  the max-player wants to maximize $V_{h}^{\pi, \nu}(\cdot)$ by choosing a good policy $\pi$,  while the min-player wants to minimize $V_{h}^{\pi, \nu}(\cdot)$ by choose a good policy $\nu$. Accordingly, we can define the action-value function and the value function when the max-player gives the best response to a fixed policy $\nu$ of the min-player:
\begin{align*}
Q_{h}^{*, \nu}(s,a,b) &= \max_{\pi}Q_{h}^{\pi, \nu}(s,a,b),\qquad
V_{h}^{*, \nu}(s) = \max_{\pi}V_{h}^{\pi,\nu}(s). 
\end{align*}
By symmetry, we can also define 
\begin{align*}
Q_{h}^{\pi, *}(s,a,b) &= \min_{\nu}Q_{h}^{\pi, \nu}(s,a,b), \qquad
V_{h}^{\pi, *}(s) = \min_{\nu}V_{h}^{\pi,\nu}(s).
\end{align*}
For any function  $V: \cS \rightarrow \mathbb{R}$, we introduce the shorthands:
\begin{align*}
[\mathbb{P}_{h}V](s, a, b) &= \EE_{s' \sim \mathbb{P}_{h}(\cdot|s,a,b)}V(s'), \qquad  
[\mathbb{V}_{h}V](s, a, b) = [\mathbb{P}_{h}V^2](s, a, b) - \big([\mathbb{P}_{h}V](s, a, b)\big)^2,
\end{align*}
where $V^{2}$ stands for the function whose value at $s$ is $V^{2}(s)$. Using these notation, we have following Bellman equations:
\begin{align*}
Q_{h}^{\pi, \nu}(s, a, b) &= r(s ,a, b) + [\PP_{h}V_{h+1}^{\pi, \nu}](s, a, b),
\end{align*}
and the Bellman optimality equation \citep{shapley1953stochastic}:
\begin{align*}
Q_{h}^{\pi,*}(s, a, b) &= r(s ,a, b) + [\PP_{h}V_{h+1}^{\pi,*}](s, a, b), \qquad
V_{h}^{\pi,*}(s) = \inf_{\sigma \in \Delta_{\min}}\EE_{a\sim \pi_{h}(\cdot|s),b\sim \sigma}Q_{h}^{\pi,\nu}(s, a, b).
\end{align*}

\noindent\textbf{Nash Equilibrium.} A Nash Equilibrium (NE) of the game 
is a pair of policies $\pi^{*},\nu^{*}$ such that
\begin{align}
V_{1}^{\pi^{*},\nu^{*}}(s) =   V_{1}^{\pi^{*}, *}(s) = V_{1}^{*, \nu^{*}}(s), \text{ for all }s\in \cS.\label{eq:NEdef}
\end{align}
\eqref{eq:NEdef} means that $(\pi^{*},\nu^{*})$ are the best response to each other, so no player can do better by only changing her own policy. Nash equilibrium can also be viewed as ``the best response to the best response''. For most applications, they are the ultimate solutions we want to pursue. We further abbreviate the value of the Nash equilibrium $V_{1}^{\pi^{*},\nu^{*}}(s)$ as  $V_{1}^{*}(s)$. This is because the value of the Nash equilibrium is irrelevant to the choice of $(\pi^{*},\nu^{*})$ which is a direct corollary of the following weak duality property:
\begin{proposition}[Weak Duality,  \citealt{xie2020learning}]\label{prop:weak} Given the NE $(\pi^{*}, \nu^{*})$ of a game, for any policy pair $(\pi, \nu)$ we have that 
\begin{align}
V_{1}^{*, \nu}(s) \geq V_{1}^{\pi^{*}, \nu^{*}}(s) \geq V_{1}^{\pi, *}(s), \text{ for all } s\in \cS.\label{eq:Weak Duality}
\end{align}
\end{proposition}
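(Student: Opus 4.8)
The plan is to prove the two inequalities in \eqref{eq:Weak Duality} separately, since each follows from chaining a definition of a best-response value against a single comparison policy with the Nash fixed-point identity \eqref{eq:NEdef}. The key observation is that I never need to optimize over both players simultaneously: I compare the Nash value to an intermediate value obtained by holding one player at its equilibrium policy and letting the other deviate.

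For the left inequality $V_{1}^{*,\nu}(s) \ge V_{1}^{\pi^{*},\nu^{*}}(s)$, I would first insert the equilibrium max-policy as a feasible (but possibly suboptimal) choice for the outer maximization: by the definition $V_{1}^{*,\nu}(s) = \max_{\pi} V_{1}^{\pi,\nu}(s)$ we immediately have $V_{1}^{*,\nu}(s) \ge V_{1}^{\pi^{*},\nu}(s)$. Next I would use that $\nu^{*}$ is the min-player's best response to $\pi^{*}$ at equilibrium, i.e.\ $V_{1}^{\pi^{*},\nu^{*}}(s) = V_{1}^{\pi^{*},*}(s) = \min_{\nu'} V_{1}^{\pi^{*},\nu'}(s)$ by \eqref{eq:NEdef}, so that $V_{1}^{\pi^{*},\nu}(s) \ge V_{1}^{\pi^{*},\nu^{*}}(s)$. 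Chaining the two comparisons gives the claim.

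For the right inequality $V_{1}^{\pi^{*},\nu^{*}}(s) \ge V_{1}^{\pi,*}(s)$, I would argue symmetrically: by $V_{1}^{\pi,*}(s) = \min_{\nu'} V_{1}^{\pi,\nu'}(s)$ we get $V_{1}^{\pi,*}(s) \le V_{1}^{\pi,\nu^{*}}(s)$, and then by the equilibrium identity $V_{1}^{\pi^{*},\nu^{*}}(s) = V_{1}^{*,\nu^{*}}(s) = \max_{\pi'} V_{1}^{\pi',\nu^{*}}(s) \ge V_{1}^{\pi,\nu^{*}}(s)$. Chaining again yields the bound, completing the sandwich.

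I expect no substantive obstacle: the entire argument is a two-step comparison for each inequality, using only the definitions of the best-response value functions together with the two equalities that constitute the Nash condition. The only point requiring slight care is to route each comparison through the correct intermediate policy pair (holding $\pi^{*}$ fixed on the left, $\nu^{*}$ fixed on the right) so that one side is controlled by a best-response optimization and the other by the equilibrium fixed-point identity. Working directly with the inequalities $\max \ge (\cdot)$ and $\min \le (\cdot)$ also sidesteps any concern about whether the optima over policies are actually attained.
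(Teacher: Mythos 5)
Your proof is correct. The paper does not prove Proposition \ref{prop:weak} itself (it imports it from \citet{xie2020learning}), and your argument is exactly the standard one: for each inequality you instantiate the best-response optimization at the equilibrium policy (giving $V_{1}^{*,\nu}(s) \ge V_{1}^{\pi^{*},\nu}(s)$, resp.\ $V_{1}^{\pi,*}(s) \le V_{1}^{\pi,\nu^{*}}(s)$) and then use the Nash identity \eqref{eq:NEdef} to dominate the intermediate value, with the added merit that you only ever use $\sup/\inf$ comparisons and never need attainment of the optima.
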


\noindent\textbf{Learning Objective.} 
The weak duality in Proposition \ref{prop:weak} suggests that the NE value $V_{1}^{*}(s)$ is sandwiched between
$V_{1}^{*, \nu}(s)$ and $V_{1}^{\pi,*}(s)$. So it is natural to measure the suboptimality of learned policies $(\pi^{k}, \nu^{k})$ at the $k$-th episode by the gap between their performance and the performance of the optimal strategy (i.e., Nash equilibrium) when playing against the best responses respectively:
\begin{align*}
&V_{1}^{*, \nu^{k}}(s) - V_{1}^{\pi^{k},*}(s) = \big[V_{1}^{*, \nu^{k}}(s) - V_{1}^{*}(s)\big] + [V_{1}^{*}(s)-V_{1}^{\pi^{k},*}(s)].
\end{align*}
Accordingly, we aim to design a learning algorithm that outputs a sequence $\{\pi^{k},\nu^{k}\}_{k}$ based on past information, and minimize the regret over first $K$ episodes defined as follows:
\begin{align*}
\text{Regret}(M, K) &= \sum_{k=1}^{K} \Big[ V_{1}^{*, \nu^{k}}(s_{1}^{k})  -  V_{1}^{\pi^{k},*}(s_{1}^{k})\Big].
\end{align*}
This measure has been widely used in previous work that studies the offline learning of two-player game \citep{bai2020near,xie2020learning,liu2020sharp}. Following \citet{bai2020near,xie2020learning,liu2020sharp}, we assume the central controller can choose a joint distribution $\mu^k$ for both the max-player and min-player in each episode as their policies, and we set $\pi^k = \cP_{\text{max}}\mu^k$ and $\nu^k = \cP_{\text{min}}\mu^k$ automatically. In this paper, we focus on proving high probability bounds on the regret $\text{Regret}(M, K)$, as well as lower bounds in expectation.

\noindent\textbf{Episodic Linear Mixture Markov Games.}
In this work, we consider a class of MGs called \emph{linear mixture MGs}, inspired by the linear mixture/kernel MDPs studied in \cite{modi2019sample,jia2020model, ayoub2020model} for the single-agent RL. Linear mixture MGs assume that at each step $h$, the transition probability function $\PP_h(s'|s,a,b)$ is a linear combination of $d$ feature mappings $\phi_i(s'|s,a,b)$, i.e.,
\begin{align}
    \PP_h(s'|s,a,b) = \sum_{i=1}^d \theta_{i,h}\phi_i(s'|s,a,b),\notag
\end{align}
where each feature mapping $\phi_i(s'|s,a,b)$ is a function defined on the state-action-action-state pair $(s,a,b,s') \in \cS \times \cA_{\text{max}} \times \cA_{\text{max}} \times \cS$. For the sake of simplicity, we use a vector function $\bphi = [\phi_1,\cdots, \phi_d] \in \RR^d$ to denote the collection of $\phi_i$. After proper normalization, we assume $\bphi$ satisfy that for any bounded function $V:\cS \rightarrow [-1,1]$ and any tuple $(s,a,b) \in \cS \times \cA_{\max} \times \cA_{\min}$, we have 
\begin{align}
\|\bphi_{V}(s,a, b)\|_{2} \leq 1, \label{mapping}
\end{align}
where $\bphi_{V}(s,a,b) = \sum_{s'\in \cS}\bphi(s'|s,a,b)V(s')$. Formally, we define linear mixture MGs as follows:
\begin{definition}
$M(\cS, \cA_{\max}, \cA_{\min}, H,  \{r_{h}\}_{h=1}^{H}, \{\mathbb{P}_{h}\}_{h=1}^{H})$ is called a time inhomogeneous, episodic $B$-bounded linear mixture MG if there exist $H$ unknown vectors $\btheta_{h} \in \mathbb{R}^{d}$ satisfying for any $h \in [H]$, $\|\btheta_{h}\|_{2} \leq B$, and a known feature mapping $\bphi$ satisfying \eqref{mapping}, such that $\mathbb{P}_{h}(s'|s,a,b) = \langle \bphi(s'|s,a,b), \btheta_{h}\rangle$ for any state-action-action-state triplet $(s, a, b, s')$ and any step $h$. We denote the linear mixture MG by $M_{\btheta}$ for simplicity. 
\end{definition}
In this paper, we assume the underlying linear mixture MG is parameterized by $\{\btheta^{*}_{h}\}_{h=1}^{H}$, denoted by $M_{\btheta^*}$.

\paragraph{Difference between linear and linear mixture MGs.} Linear mixture MGs assume that at each step $h$, the transition probability function $\PP_h(s'|s,a,b)$ is a linear combination of $d$ feature mappings $\bphi_i(s'|s,a,b)$ for $i=1, \ldots,d$, i.e., $\PP_h(s'|s,a,b) =\langle \bphi(s'|s,a,b), \btheta_{h}\rangle$. The linear MG setting considered by \citet{xie2020learning}, however, assumes $\PP_h(s'|s,a,b) =\langle \bphi(s,a,b), \bmu_{h}(s')\rangle$, where $\bmu_{h}(\cdot)$ is an unknown vector-valued measure function on $S$. These two models are different and do not include each other in general. For instance, consider the following MG which is inspired by \citet{zhou2020provably}: $\cS = \mathbb{Z}$, $\cA_{\text{max}} =\cA_{\text{min}}= \mathbb{N}$ and $\PP_h(s'|s,a,b) = \sum_{i=1}^d \theta_i^h p_i(s'|s,a,b)$, $p_i(s'|s,a,b) = \ind(s'>=s) (a+b)^{s'-s}\exp(-(a+b))/(s'-s)!$. This MG is a linear mixture MG but not a linear MG.


\section{Algorithm}

In this section, we propose our algorithm $\algname$ in Algorithm \ref{alg:Offline}. Due to the space limit, we only show the detailed update rules for the max-player in Algorithm \ref{alg:Offline}, and the full algorithm is presented in Algorithm \ref{alg:fullalg} in Appendix~\ref{sec:Full Algorithm}. All the parameters corresponding to the max-player are marked by an overline, while the parameters for the min-player are marked by an underline. 

\begin{algorithm*}[ht]
\caption{$\algname$ }\label{alg:Offline}
\begin{algorithmic}[1]
\STATE \textbf{Input:} Regularization parameter $\lambda$, number of episode $K$, number of horizon $H$,  approximation error $\epsilon$.
\STATE For any $h$, $\overline{\bSigma}^{(i)}_{1,h}  \leftarrow \underline{\bSigma}^{(i)}_{1,h} \leftarrow \lambda \Ib$; $\overline{\bbb}^{(i)}_{1,h} \leftarrow \underline{\bbb}^{(i)}_{1,h} \leftarrow \zero$; $\overline{\btheta}^{(i)}_{1,h} \leftarrow \underline{\btheta}^{(i)}_{1,h} \leftarrow  \zero$, for $i \in \{0,1\}$.  
\FOR{$k = 1,\ldots, K$}
\STATE $\overline{V}_{k,H+1}(\cdot) \leftarrow 0$, $\underline{V}_{k,H+1}(\cdot) \leftarrow 0$.
\FOR{$h = H, \ldots, 1$}
\STATE Set $\overline{Q}_{k,h}(\cdot,\cdot,\cdot)$ as in \eqref{eq:overq}, and $\underline{Q}_{k,h}(\cdot,\cdot,\cdot)$ in a similar way (See Algorithm \ref{alg:fullalg}).
\FOR {$s\in \cS$}
\STATE Let $\mu_{h}^{k}(\cdot,\cdot|s) = \epsilon\text{-CCE}(\overline{Q}_{k,h}(s,\cdot,\cdot),\underline{Q}_{k,h}(s,\cdot,\cdot))$.\alglinelabel{alg:CCE}
\STATE $\overline{V}_{k,h}(s) = \EE_{(a,b) \sim \mu_{h}^{k}(\cdot,\cdot|s)}\overline{Q}_{k,h}(s, a, b)$, $\underline{V}_{k,h}(s) = \EE_{(a,b) \sim \mu_{h}^{k}(\cdot,\cdot|s)}\underline{Q}_{k,h}(s, a, b)$\alglinelabel{alg:defv}.
\STATE $\pi_{h}^{k}(\cdot|s) = \cP_{\max}\mu_{h}^{k}(\cdot,\cdot|s)$, $\nu_{h}^{k}(\cdot|s) = \cP_{\min}\mu_{h}^{k}(\cdot,\cdot|s)$.
\ENDFOR
\ENDFOR
\STATE Receives $s_{1}^{k}$
\FOR{$h = 1, \ldots, H$}
\STATE Take action $(a_{h}^{k}, b_{h}^{k}) \sim \mu_{h}^{k}(\cdot,\cdot|s_{h}^{k})$ and central controller receives $s_{h+1}^{k} \sim \mathbb{P}(\cdot | s_{h}^{k}, a_{h}^{k}, b_{h}^{k})$.
\STATE Set $\mathbb{V}^{\text{est}}\overline{V}_{k,h+1}(s_{h}^{k},a_{h}^{k},b_{h}^{k})$ as in \eqref{eq:Vestimator} and $\overline{E}_{k, h}$ as in \eqref{eq:Eesptimator},  $\overline{\sigma}_{k,h}$ as in \eqref{eq:smallsigma}.
\STATE Set  $\overline{\bSigma}_{k+1,h}^{(0)},  \overline{\bbb}_{k+1,h}^{(0)}$ as in \eqref{eq:BigSigma0} and \eqref{eq:bvector0}, $\overline{\bSigma}_{k+1,h}^{(1)}, \overline{\bbb}_{k+1,h}^{(1)}$ as in \eqref{eq:BigSigma1} and \eqref{eq:bvector1}.
\STATE Set $ \underline{\bSigma}_{k+1,h}^{(0)}, \underline{\bbb}_{k+1,h}^{(0)}, \underline{\bSigma}_{k+1,h}^{(1)}, \underline{\bbb}_{k+1,h}^{(1)}, \mathbb{V}^{\text{est}}\underline{V}_{k,h+1}(s_{h}^{k},a_{h}^{k},b_{h}^{k}), \underline{E}_{k, h}, \underline{\sigma}_{k,h}$ in similar ways (See Algorithm \ref{alg:fullalg}).
\STATE Set $\overline{\btheta}_{k+1,h}^{(i)} \leftarrow \big[\overline{\bSigma}_{k+1,h}^{(i)}\big]^{-1}\overline{\bbb}_{k+1,h}^{(i)} $,  $\underline{\btheta}_{k+1,h}^{(i)} \leftarrow \big[\underline{\bSigma}_{k+1,h}^{(i)}\big]^{-1}\underline{\bbb}_{k+1,h}^{(i)} $, $i=0,1$
\ENDFOR
\ENDFOR
\end{algorithmic}
\end{algorithm*}



 To achieve the near-minimax optimality of solving a linear mixture MG, $\algname$ adopts the following three techniques, which we will introduce in sequence. 

\noindent\textbf{Value-targeted regression}
To find the NE of an MG, it suffices to find good estimates of the optimal value functions $\vvalue_h^{*, \nu^k}$ and $\vvalue_h^{\pi^k, *}$. By the Bellman optimality equations and the definition of linear mixture MGs, it is sufficient to estimate the underlying unknown parameter $\btheta_h^*$ up to good accuracy. Inspired by the UCRL with ``value-targeted regression'' (VTR) proposed by \citet{jia2020model, ayoub2020model}, $\algname$ uses a supervised learning framework to learn $\btheta_h^*$. In the sequel, we introduce how the VTR framework works at episode $k$ and step $h$. At the beginning of episode $k$, $\algname$ maintains two estimated value functions: optimistic value function $\overline{\vvalue}_{k, h+1}$ for the max-player, which overestimates the optimal value function $\vvalue_h^{*, \nu^k}$, and optimistic value function $\underline{\vvalue}_{k, h+1}$ for the min-player, which underestimates the value function $\vvalue_h^{\pi^k, *}$. We focus on the overestimate $\overline{\vvalue}_{k, h+1}$ first. Note that the following equation holds due to the definition of linear mixture MGs:
\begin{align}
    &[\PP_h\overline{\vvalue}_{k, h+1}](s_h^k, a_h^k, b_h^k) = \la \bphi_{\overline{\vvalue}_{k, h+1}}(s_h^k, a_h^k, b_h^k), \btheta_h^*\ra,\label{eq:expect}
\end{align}
which suggests that $(\bphi_{\overline{\vvalue}_{k, h+1}}(s_h^k, a_h^k, b_h^k), \overline{\vvalue}_{k, h+1}(s_{h+1}^k))$ can be regarded as a context and the corresponding targeted value of a linear regression problem with the unknown parameter $\btheta_h^*$.

Therefore, $\algname$ constructs $\overline{\btheta}_{k,h}^{(0)}$ as the estimator of $\btheta_h^*$ based on linear regression on $(\bphi_{\overline{\vvalue}_{k, h+1}}(s_h^k, a_h^k, b_h^k), \overline{\vvalue}_{k, h+1}(s_{h+1}^k))$ (the detailed construction of $\overline{\btheta}_{k,h}^{(0)}$ will be specified later). Due to the randomness of $s_{h+1}^k$, $\overline{\btheta}_{k,h}^{(0)}$ can not estimate $\btheta_h^*$ exactly. Therefore $\algname$ also constructs an ellipsoid $\overline{\cC}_{k,h}^{(0)}$ centered at $\overline{\btheta}_{k,h}^{(0)}$ as the confidence set, which contains $\btheta_h^*$ with high probability:
\begin{align}
\overline{\cC}^{(0)}_{k,h} &:=  \bigg\{\btheta: \bigg\|\Big[\overline{\bSigma}^{(0)}_{k, h}\big]^{1/2}(\btheta - \overline{\btheta}^{(0)}_{k,h})\Big\|_{2} \leq \beta_{k}^{(0)}\bigg\}.\label{eq:Upepp}    
\end{align}
Here $\overline{\bSigma}^{(0)}_{k, h}$ is the ``covariance matrix'' of the context $\bphi_{\overline{\vvalue}_{k, h+1}}(s_h^k, a_h^k, b_h^k)$, and $\beta_{k}^{(0)}$ is the radius of the confidence set. Both of them will be specified later. Then, to encourage the agent to explore, $\algname$ constructs an optimistic action-value function $\overline{Q}_{k,h}$ as follows, following the ``optimism-in-the-face-of-uncertainty'' principle \citep{abbasi2011improved}:
\begin{align}
\overline{Q}_{k,h} &:= \Big[r_{h} + \max_{\btheta \in \overline{\cC}^{(0)}_{k,h}}\langle \btheta_{k,h}, \bphi_{\overline{V}_{k,h+1}} \rangle\Big]_{[-H, H]},\label{eq:overq0}
\end{align}
where the projection onto $[-H,H]$ is because the action-value function of the Markov game lies in $[-H, H]$.
The closed-form solution of \eqref{eq:overq0} is as follows
\begin{align}
\overline{Q}_{k,h}(\cdot,\cdot,\cdot) = \Big[ r_{h}(\cdot,\cdot,\cdot) + \langle \overline{\btheta}^{(0)}_{k,h}, \bphi_{\overline{V}_{k,h+1}}(\cdot,\cdot,\cdot) \rangle + \beta^{(0)}_{k}\Big\|\big[\overline{\bSigma}_{k,h}^{(0)}\big]^{-1/2}\bphi_{\overline{V}_{k,h+1}}(\cdot,\cdot,\cdot)\Big\|_{2}\Big]_{[-H,H]}, \label{eq:overq}    
\end{align}

Similar procedures can be applied to construct the confidence set $\underline{\cC}^{(0)}_{k,h}$ and the optimistic action-value function $\underline{Q}_{k,h}$ for the min-player with parameters $\underline{\btheta}^{(0)}_{k,h}$, $\underline{\bSigma}^{(0)}_{k, h}$. Finally, $\algname$ constructs the optimistic value functions $\overline{V}_{k,h}, \underline{V}_{k,h}$ and the policy $\mu_h^k$ based on $\overline{Q}_{k,h}, \underline{Q}_{k,h}$ for the current episode and step (which will be specified later).

\noindent\textbf{Coarse Correlated Equilibrium (CCE).}
Now we introduce how to compute the $(\pi_h^k, \nu_h^k)$ based on the optimistic action-value functions $\overline{Q}_{k,h}, \underline{Q}_{k,h}$. Unlike the single-agent RL, we cannot certify the policy by independently solving max-min problem on $\overline{Q}$ or $\underline{Q}$. This is  because $\overline{Q}$ and $\underline{Q}$ are not the estimators of action-value function for the NE but the estimators of action-value function for the best response. Thus we must coordinate both players for their choices of actions. After we get  $\overline{Q}_{k,h}(s,\cdot,\cdot)$ for the max-player and $\underline{Q}_{k,h}(s,\cdot,\cdot)$ for the min-player, we solve
a general-sum matrix game to find the Coarse Correlated Equilibrium (CCE), following \cite{xie2020learning}. Here we give the formal definition of CCE as follows:
\begin{definition}[\citealt{moulin1978strategically, aumann1987correlated}]
Given two payoff matrices $Q_{\text{max}}, Q_{\text{min}} \in \RR^{|\cA_{\text{max}}|\cdot |\cA_{\text{min}}|}$, we denote the $\epsilon$-Coarse Correlated Equilibrium ($\epsilon$-CCE) as a joint distribution $\sigma$ over $\cA_{\text{max}}$ and $\cA_{\text{min}}$ satisfying that
\begin{align}
    \EE_{(a,b) \sim \sigma} Q_{\text{max}}(a,b) \geq \max_{a' \in \cA_{\text{max}}}\EE_{b \sim \cP_{\text{min}}\sigma }Q_{\text{max}}(a',b)-\epsilon,\notag \\
    \EE_{(a,b) \sim \sigma} Q_{\text{min}}(a,b) \leq \min_{b' \in \cA_{\text{min}}}\EE_{a \sim \cP_{\text{max}}\sigma }Q_{\text{min}}(a,b')+\epsilon.\notag
\end{align}
\end{definition}
$\algname$ computes the distribution $\mu_h^k(\cdot, \cdot|s)$, a $\epsilon$-CCE of $\overline{Q}_{k,h}, \underline{Q}_{k,h}$ for each state $s$ in Line \ref{alg:CCE}. Then $\algname$ selects the value functions $\overline{V}_{k,h}, \underline{V}_{k,h}$ as the expectation of $\overline{Q}_{k,h}, \underline{Q}_{k,h}$ over the policies $\mu_{h}^{k}$ as in Line \ref{alg:defv} of Algorithm \ref{alg:Offline}. The difference between CCE and NE is whether the policy of each player is independent of each other. The policy $\mu_{h}^k$ given by $\epsilon$-CCE is correlated for each player because it is found in the class $\Delta_{\cA_{\max}\times \cA_{\min}}$ rather than $\Delta_{\cA_{\max}}\times \Delta_{\cA_{\min}}$. After obtaining $\mu^k$, $\algname$ sets $\pi_{h}^{k}(\cdot|s) = \cP_{\max}\mu_{h}^{k}(\cdot,\cdot|s)$ and $\nu_{h}^{k}(\cdot|s) = \cP_{\min}\mu_{h}^{k}(\cdot,\cdot|s)$, i.e., the marginal distributions of $\mu_{h}^k$. Notice that a Nash equilibrium always exists and 
a Nash equilibrium for a general-sum game is also a CCE. Thus a CCE always exists, so does $\epsilon$-CCE. 
\begin{remark}
Since we assume the action spaces are finite, the constraints for $\epsilon$-CCE can be rewritten as $|\cA_{\max}| + |\cA_{\min}|$ linear constraints, which can be efficiently solved by linear programming (See e.g., \citet{bai2020near,liu2020sharp}).
\end{remark}

\noindent\textbf{Weighted linear regression for value function estimation.}
Now we specify how to 
construct the estimators $\overline{\btheta}_{k,h}^{(0)}, \underline{\btheta}_{k,h}^{(0)}$. For the simplicity, we only show the construction for the max-player and the construction for the min-player is presented in Appendix~\ref{sec:Full Algorithm}. With the linear structure of $\overline{\vvalue}_{k, h+1}(s_{h+1}^k)$ in \eqref{eq:expect}, it is natural to set the estimator $\overline{\btheta}_{k,h}$ as the minimizer to the linear regression problem with square loss over context-target pairs $(\bphi_{\overline{\vvalue}_{k, h+1}}(s_h^k, a_h^k, b_h^k), \overline{\vvalue}_{k, h+1}(s_{h+1}^k))$, similar to UCRL \citep{jia2020model, ayoub2020model}. 
However, such an estimator is somehow limited since it treats each context-target pair equally and ignore the difference between these pairs. In principle, one should pay \emph{more} attention to the pairs with \emph{less} target variance since they carry more information about the unknown parameter $\btheta_h^*$. This observation inspires us to adapt the recently proposed  \emph{weighted ridge regression} scheme by \cite{zhou2020nearly} to estimate $\btheta_{h}^*$:
\begin{align}
    \overline{\btheta}_{k,h}^{(0)} &= \argmin_{\btheta \in \RR^d}\lambda\|\btheta\|_2^2 + \sum_{j = 1}^{k-1} \big[\big\la\bphi_{\overline{V}_{j, h+1}}(s_h^j, a_h^j,b_h^j), \btheta\big\ra- \overline{V}_{j, h+1}(s_{h+1}^j)\big]^2/\overline{\sigma}_{j,h}^2,\label{eq:theta0}
\end{align}
where $\overline{\sigma}_{j,h}^{2}$ is an appropriate upper bound on the variance of the value function $[\var_h\overline{\vvalue}_{j, h+1}](s_h^j, a_h^j, b_h^j)$. In particular, we construct $\overline{\sigma}_{k,h}^{2}$ as follows 
\begin{align}
\overline{\sigma}_{k,h} = \sqrt{\max\{H^{2}/d, \mathbb{V}^{\text{est}}\overline{V}_{k,h+1}(s_{h}^{k},a_{h}^{k},b_{h}^{k}) + \overline{E}_{k,h}\}}\label{eq:smallsigma}, 
\end{align}
where $[\var_{k,h}^{\text{est}}\overline{V}_{k, h+1}](s_h^k, a_h^k,b_h^k)$ is a scalar-valued empirical estimate for the variance of the value function $\overline{V}_{k, h+1}$
under the transition probability $\PP_h(\cdot|s_k^{h},a_k^{h},b_k^{h})$, and $\overline{E}_{k,h}$ is an offset term that is used to guarantee that
$\overline{\sigma}_{k,h}^{2}$ upper bounds $[\var_h\overline{\vvalue}_{k, h+1}](s_h^k, a_h^k, b_h^k)$ with high probability.

Weighted ridge regression \eqref{eq:theta0} has a closed-form solution $\overline{\btheta}_{k,h}^{(0)} = \big[\overline{\bSigma}_{k,h}^{(0)}\big]^{-1}\overline{\bbb}_{k,h}^{(0)}$, where the covariance matrix $\overline{\bSigma}_{k,h}^{(0)}$ can be computed by recursion starting at $\overline{\bSigma}_{1,h}^{(0)} = \lambda \Ib$:
\begin{align}
\overline{\bSigma}_{j+1,h}^{(0)} &=  \overline{\sigma}_{j,h}^{-2} \bphi_{\overline{V}_{j,h+1}}(s_{h}^{j},a_{h}^{j},b_{h}^{j})\bphi_{\overline{V}_{j,h+1}}(s_{h}^{j},a_{h}^{j},b_{h}^{j})^{\top} + \overline{\bSigma}_{j,h}^{(0)}\label{eq:BigSigma0},
\end{align}
and the correlation vector $\overline{\bbb}_{k,h}^{(0)}$ can be computed by recursion starting at $\overline{\bbb}_{1,h}^{(0)} = \zero$:
\begin{align}
\overline{\bbb}_{j+1,h}^{(0)} = \overline{\bbb}_{j,h}^{(0)} + \overline{\sigma}_{j,h}^{-2} \bphi_{\overline{V}_{j,h+1}}(s_{h}^{j},a_{h}^{j},b_{h}^{j})\overline{V}_{j,h+1}(s_{h+1}^{j})\label{eq:bvector0}.
\end{align}
By using a Bernstein-type self-normalized concentration inequality for vector-valued martingales proposed in \citet{zhou2020nearly}, one can show then that, with high probability, $\btheta^*_h$ lies in the ellipsoid $\overline{\cC}_{k,h}^{(0)}$ defined in \eqref{eq:Upepp}, where $\beta_k^{(0)}$ is the confidence radius chosen later in Lemma \ref{Thm:inset}.

\noindent\textbf{Variance Estimator.}
It remains to set $\overline{\sigma}_{j,h}^{2}$. We need to specify how to calculate the empirical variance  $[\var_{k,h}^{\text{est}}\overline{V}_{k, h+1}](s_h^k, a_h^k,b_h^k)$ and select $\overline{\error}_{k,h}$ to guarantee $\overline{\sigma}_{j,h}^{2}$ upper bounds $[\var_h\overline{\vvalue}_{k, h+1}](s_h^k, a_h^k, b_h^k)$ with high probability.
Recall the definition of $[\mathbb{V}_{h}V](\cdot,\cdot,\cdot)$ as follows: 
\begin{align*}
[\var_h\overline{\vvalue}_{k, h+1}](s_h^k, a_h^k, b_h^k)
&= [\PP_h\overline{\vvalue}_{k, h+1}^2](s_h^k, a_h^k, b_h^k) - \big([\PP_h\overline{\vvalue}_{k, h+1}](s_h^k, a_h^k, b_h^k)\big)^2\notag \\
& = \underbrace{\big\la\bphi_{\overline{\vvalue}_{k,h+1}^2}(s_h^k, a_h^k, b_h^k), \btheta_h^*\big\ra}_{I_{1}}  - \underbrace{\big[\big\la \bphi_{\overline{\vvalue}_{k,h+1}}(s_h^k, a_h^k, b_h^k), \btheta_h^*\big\ra\big]^2}_{I_{2}}.
\end{align*}
where the second equality holds due to the definition of linear mixture MGs. Notice that the expectation of $\overline{\vvalue}_{k, h+1}^2(s_{h+1}^k)$ over the next state, $s_{h+1}^k$, is a linear function of $\bphi_{\overline{\vvalue}_{k,h+1}^2}(s_h^k, a_h^k, b_h^k)$. Therefore, we use $\la \bphi_{\overline{\vvalue}_{k,h+1}^{2}}(s_h^k, a_h^k,b_h^k), \btheta_{k,h}^{(1)}\ra$ to estimate the term $I_{1}$ where $\btheta_{k,h}^{(1)}$ is the solution to the following ridge regression problem: 
\begin{align}
\btheta_{k,h}^{(1)} &= \argmin_{\btheta \in \RR^d}\lambda\|\btheta\|_2^2 + \sum_{j = 1}^{k-1} \big[\big\la\bphi_{\overline{\vvalue}_{j, h+1}^2}(s_h^j, a_h^j,b_h^j), \btheta\big\ra - \overline{\vvalue}_{j, h+1}^2(s_{h+1}^j)\big]^2\label{eq:theta1}.
\end{align}
For term $I_{2}$, we can use  $\langle\bphi_{\overline{V}_{k,h+1}}(s_{h}^{k},a_{h}^{k},b_{h}^{k}), \overline{\btheta}^{(0)}_{k,h}\rangle$ to estimate it. Thus we have the following variance estimator,
\begin{align}\label{eq:Vestimator}
&\mathbb{V}^{\text{est}}\overline{V}_{k,h+1}(s_{h}^{k},a_{h}^{k},b_{h}^{k})\leftarrow \big[ \langle\bphi_{\overline{V}^{2}_{k,h+1}}(s_{h}^{k},a_{h}^{k},b_{h}^{k}), \overline{\btheta}^{(1)}_{k,h}\rangle \big]_{[0,H^{2}]} - \big[\langle\bphi_{\overline{V}_{k,h+1}}(s_{h}^{k},a_{h}^{k},b_{h}^{k}), \overline{\btheta}^{(0)}_{k,h}\rangle\big]^{2}_{[-H,H]},
\end{align}
where the projection is used to control the range of our variance estimator. Lastly, we can compute $\btheta_{k,h}^{(1)}$ in a closed form $\overline{\btheta}_{k,h}^{(1)} = \big[\overline{\bSigma}_{k,h}^{(1)}\big]^{-1}\overline{\bbb}_{k,h}^{(1)}$, where the covariance matrix $\overline{\bSigma}_{k,h}^{(1)}$ is updated recursively in the following way:
\begin{align}
\overline{\bSigma}_{j+1,h}^{(1)} =  \overline{\bSigma}_{j,h}^{(1)} +  \bphi_{\overline{V}_{j,h+1}^{2}}(s_{h}^{j},a_{h}^{j},b_{h}^{j})\bphi_{\overline{V}_{j,h+1}^{2}}(s_{h}^{j},a_{h}^{j},b_{h}^{j})^{\top}\label{eq:BigSigma1},
\end{align}
and the correlation vector $\overline{\bbb}_{k,h}^{(1)}$ is updated in the following recursive form:
\begin{align}
\overline{\bbb}_{j+1,h}^{(1)} = \overline{\bbb}_{j,h}^{(1)} + \bphi_{\overline{V}_{j,h+1}^{2}}(s_{h}^{j},a_{h}^{j},b_{h}^{j})\overline{V}_{j,h+1}^{2}(s_{h+1}^{j}).\label{eq:bvector1}
\end{align}
 By the standard self-normalized concentration inequality for vector-valued martingales in \citet{abbasi2011improved}, we can show that, with high probability, $\overline{\sigma}_{j,h}^{2}$ upper bounds $[\var_h\overline{\vvalue}_{k, h+1}](s_h^k, a_h^k, b_h^k)$ if we select $\overline{E}_{k,h}$ as follows
\begin{align}\label{eq:Eesptimator}
\overline{E}_{k, h} 
&= \min\big\{H^{2}, \beta^{(1)}_{k}\|\big[\overline{\bSigma}_{k, h}^{(1)}\big]^{-1/2}\bphi_{\overline{V}_{k, h+1}^2}(s_{h}^{k}, a_{h}^{k},b_{h}^{k})\|_{2}\big\}\notag\\
& \qquad+ \min\big\{H^{2}, 2H\beta^{(2)}_{k}\|\big[\overline{\bSigma}_{k, h}^{(0)}\big]^{-1/2} \bphi_{\overline{V}_{k, h+1}}(s_{h}^{k}, a_{h}^{k},b_{h}^{k})\|_{2}\big\},
\end{align}
where $\beta_{k}^{(1)},\beta_{k}^{(2)}$ are constants chosen later in Lemma \ref{Thm:inset}. 

\begin{remark}
Our $\algname$ is computational efficient for specific feature mapping $\bphi$, as \citet{ayoub2020model, zhou2020nearly} suggested. For a special class of $\bphi$, where
\begin{align}
    \bphi(s'|s,a,b) = \bpsi(s')\odot \bmu(s,a,b),\ \bpsi(\cdot): \cS \rightarrow \RR^d,\ \bmu(\cdot, \cdot, \cdot): \cS \times \cA_{\text{max}} \times \cA_{\text{min}}\rightarrow \RR^d,\notag
\end{align}
$\odot$ is the componentwise product, $\algname$ can be implemented within $\text{poly}(d, |\cA_{\text{max}}|, |\cA_{\text{min}}|)\cdot KH$ time complexity with the access to some integration oracle $\cO$. The details are deferred to Appendix \ref{sec:compute}.
\end{remark}

\noindent\textbf{Difference between $\algname$ and previous algorithms}
 Here we compare our $\algname$ with the OMNI-V proposed by \citet{xie2020learning}. First, \citet{xie2020learning} studied the linear MGs while we study the linear mixture MGs. Second, due to the difference between the studied models, OMNI-V needs to maintain a covering set of the estimated Q functions (Eq. (5), \citealt{xie2020learning}), which makes its space complexity exponential in $d$. In sharp contrast, our $\algname$ relies on the value targeted regression \citep{jia2020model, ayoub2020model} and does not need to maintain such a cover set.



\section{Main Results}\label{sec:Main theorem for offline}
In this section, we present the main theoretical results. We first show that under a specific parameter choice, our constructed confidence sets $\overline{\cC}_{k,h}^{(0)}$ and $\underline{\cC}_{k,h}^{(0)}$ include $\btheta^*_h$ with high probability, and the estimated variances $\mathbb{V}^{\text{est}}\overline{V}_{k,h+1}(s_{h}^{k},a_{h}^{k},b_{h}^{k})$ and $\mathbb{V}^{\text{est}}\underline{V}_{k,h+1}(s_{h}^{k},a_{h}^{k},b_{h}^{k})$ deviate from the true variances by at most the offset terms $\overline{E}_{k,h},\underline{E}_{k,h}$.  
\begin{lemma}\label{Thm:inset}
Setting $\beta_{k}^{(0)}$ in \eqref{eq:Upepp} and $\beta_{k}^{(1)},\beta_{k}^{(2)}$ in \eqref{eq:Eesptimator} to
\begin{align*} 
\beta_{k}^{(0)} &= 16\sqrt{d\log(1+k/\lambda)\log(4k^{2}H/\delta)}+ 8\sqrt{d}\log(4k^{2}H/\delta) + \sqrt{\lambda }B\\  
\beta_{k}^{(1)} &= 16\sqrt{dH^{4}\log(1+KH^{4}/(d\lambda))\log(4k^{2}H/d\delta)} + 8H^{2}\log(4k^{2}H/\delta) + \sqrt{\lambda}B\\
\beta_{k}^{(2)} &=  16d\sqrt{\log(1+k/\lambda)\log(4k^{2}H/\delta)} + 8\sqrt{d}\log(4k^{2}H/\delta)+\sqrt{\lambda}B,
\end{align*}then with probability at least $1-3\delta$, we have $\btheta_{h}^{*}\in \overline{\cC}_{k,h}^{(0)}\cap  \underline{\cC}_{k,h}^{(0)}$.
In addition, we have
\begin{align*}
|\mathbb{V}^{\text{est}}\overline{V}_{k,h+1}(s_{h}^{k},a_{h}^{k},b_{h}^{k}) -   \mathbb{V}\overline{V}_{k,h+1}(s_{h}^{k},a_{h}^{k},b_{h}^{k})| &\leq \overline{E}_{k,h}\\
|\mathbb{V}^{\text{est}}\underline{V}_{k,h+1}(s_{h}^{k},a_{h}^{k},b_{h}^{k}) -   \mathbb{V}\underline{V}_{k,h+1}(s_{h}^{k},a_{h}^{k},b_{h}^{k})| &\leq \underline{E}_{k,h}
\end{align*}
\end{lemma}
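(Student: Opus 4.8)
The statement bundles two coupled claims: the confidence-set inclusion $\btheta_h^*\in\overline{\cC}^{(0)}_{k,h}\cap\underline{\cC}^{(0)}_{k,h}$ (call it Part 1) and the variance-deviation bound $|\mathbb{V}^{\text{est}}\overline{V}_{k,h+1}-\mathbb{V}\overline{V}_{k,h+1}|\le\overline{E}_{k,h}$ together with its min-player analogue (Part 2). The plan is to prove them in the order Part 2 then Part 1, the key observation being that Part 2 only needs \emph{crude} Hoeffding-type confidence bounds (the radii $\beta_k^{(1)},\beta_k^{(2)}$), which hold unconditionally, whereas the \emph{sharp} Bernstein-type radius $\beta_k^{(0)}$ needed for Part 1 requires precisely the variance upper bound $\overline{\sigma}_{j,h}^2\ge[\var_h\overline{V}_{j,h+1}]$ that Part 2 delivers. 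I treat the max-player quantities; the min-player case is identical by symmetry. I fix a filtration $\{\fil_{j,h}\}$ so that $\bphi_{\overline{V}_{j,h+1}}(s_h^j,a_h^j,b_h^j)$ is measurable at the time the action at step $h$ of episode $j$ is taken (it depends on $\overline{V}_{j,h+1}$, computed from episodes $<j$, and on $(s_h^j,a_h^j,b_h^j)$), while $\overline{\eta}_{j,h}:=\overline{V}_{j,h+1}(s_{h+1}^j)-[\PP_h\overline{V}_{j,h+1}](s_h^j,a_h^j,b_h^j)$ is a martingale difference with $\EE[\overline{\eta}_{j,h}^2\mid\fil_{j,h}]=[\var_h\overline{V}_{j,h+1}](s_h^j,a_h^j,b_h^j)$.

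For Part 2, I split
$$|\mathbb{V}^{\text{est}}\overline{V}_{k,h+1}-\mathbb{V}\overline{V}_{k,h+1}|\le \mathrm{(A)}+\mathrm{(B)},$$
where (A) compares the second-moment estimate against $I_1=\la\bphi_{\overline{V}^2_{k,h+1}},\btheta_h^*\ra$ and (B) compares the squared first-moment estimate against $I_2=[\la\bphi_{\overline{V}_{k,h+1}},\btheta_h^*\ra]^2$. Since $I_1\in[0,H^2]$ and $[\PP_h\overline{V}_{k,h+1}]\in[-H,H]$, the truncations in \eqref{eq:Vestimator} do not clip the true quantities, so by non-expansiveness of the projection and Cauchy--Schwarz in the $[\overline{\bSigma}^{(1)}_{k,h}]^{-1}$-norm I get $\mathrm{(A)}\le\beta_k^{(1)}\|[\overline{\bSigma}^{(1)}_{k,h}]^{-1/2}\bphi_{\overline{V}^2_{k,h+1}}\|_2$; for (B) I use $|a^2-b^2|\le2H|a-b|$ for $a,b\in[-H,H]$, then non-expansiveness and Cauchy--Schwarz in the $[\overline{\bSigma}^{(0)}_{k,h}]^{-1}$-norm to get $\mathrm{(B)}\le2H\beta_k^{(2)}\|[\overline{\bSigma}^{(0)}_{k,h}]^{-1/2}\bphi_{\overline{V}_{k,h+1}}\|_2$. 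Combining each with the trivial bound $H^2$ reproduces exactly the two summands of $\overline{E}_{k,h}$ in \eqref{eq:Eesptimator}. The radii come from the standard self-normalized martingale bound of \citet{abbasi2011improved}: $\beta_k^{(1)}$ governs the unweighted regression \eqref{eq:theta1}, whose centered targets have noise bounded by $H^2$; $\beta_k^{(2)}$ governs the weighted regression \eqref{eq:theta0} in its transformed variables $\bphi_{\overline{V}_{j,h+1}}/\overline{\sigma}_{j,h}$, whose weighted noise $\overline{\eta}_{j,h}/\overline{\sigma}_{j,h}$ is bounded by $2H/(H/\sqrt d)=2\sqrt d$ because $\overline{\sigma}_{j,h}^2\ge H^2/d$. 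Crucially, neither bound uses any control on the conditional variance, so both are available before Part 2 is established.

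For Part 1, I condition on the event that Part 2 holds at all $(j,h)$. Then the definition \eqref{eq:smallsigma} of $\overline{\sigma}_{j,h}$ gives $\overline{\sigma}_{j,h}^2\ge\mathbb{V}^{\text{est}}\overline{V}_{j,h+1}+\overline{E}_{j,h}\ge[\var_h\overline{V}_{j,h+1}](s_h^j,a_h^j,b_h^j)$. The noise then satisfies $|\overline{\eta}_{j,h}|\le2H$ and $\EE[\overline{\eta}_{j,h}^2\mid\fil_{j,h}]\le\overline{\sigma}_{j,h}^2$, which are exactly the hypotheses of the Bernstein-type self-normalized concentration inequality for vector-valued martingales of \citet{zhou2020nearly}. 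Applying it to the weighted regression \eqref{eq:theta0} yields $\|[\overline{\bSigma}^{(0)}_{k,h}]^{1/2}(\overline{\btheta}^{(0)}_{k,h}-\btheta_h^*)\|_2\le\beta_k^{(0)}$, i.e.\ $\btheta_h^*\in\overline{\cC}^{(0)}_{k,h}$ as defined in \eqref{eq:Upepp}; here the variance term contributes the $\sqrt d$ leading factor and the range $2\sqrt d$ of the weighted noise contributes the $8\sqrt d\log(\cdot)$ term. A union bound over the three concentration events underlying $\beta^{(0)},\beta^{(1)},\beta^{(2)}$ (together with their min-player counterparts), taken over all $(k,h)$, yields the claimed failure probability $3\delta$.

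The main obstacle is the apparent circularity isolated above: the sharp radius $\beta_k^{(0)}$ depends on the variance upper bound, while the variance upper bound (Part 2) is itself proved via confidence bounds on the regression estimators. Resolving it hinges on recognizing that the variance-deviation bound can be derived purely from the \emph{Hoeffding}-grade radii $\beta_k^{(1)},\beta_k^{(2)}$, which require no variance control and are therefore logically prior; the floor $\overline{\sigma}_{j,h}^2\ge H^2/d$ is what keeps the weighted noise bounded by $2\sqrt d$ and hence makes $\beta_k^{(2)}$ finite without any variance assumption. The remaining work is bookkeeping: verifying the martingale structure under the $(j,h)$-indexed filtration in the time-inhomogeneous setting, and tracking the explicit logarithmic and dimensional factors so that the three radii emerge exactly as stated.
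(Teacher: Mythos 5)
Your overall architecture is the same as the paper's: first establish the crude, variance-free confidence sets with radii $\beta_k^{(1)},\beta_k^{(2)}$ (possible because the floor $\overline{\sigma}_{j,h}^2 \geq H^2/d$ bounds the weighted noise by $2\sqrt{d}$), use them to dominate the true variance by $\overline{\sigma}_{j,h}^2$, and only then obtain the sharp radius $\beta_k^{(0)}$ from the Bernstein-type inequality of \citet{zhou2020nearly}. Your Part~2 decomposition into the terms (A) and (B), with the projection non-expansiveness, the identity $|a^2-b^2|\le 2H|a-b|$, and Cauchy--Schwarz in the $[\overline{\bSigma}^{(i)}_{k,h}]^{-1}$-norms, is exactly the paper's Lemma~\ref{lemma: VB} combined with the inclusions $\btheta_h^*\in\overline{\cC}^{(1)}_{k,h}\cap\overline{\cC}^{(2)}_{k,h}$.

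However, there is a genuine gap in your Part~1. You write: ``I condition on the event that Part~2 holds at all $(j,h)$'' and then assert that $\EE[\overline{\eta}_{j,h}^2\mid\fil_{j,h}]\le\overline{\sigma}_{j,h}^2$ gives ``exactly the hypotheses'' of the Bernstein inequality. It does not: Lemma~\ref{lemma:concentration_variance} requires the conditional variance bound to hold \emph{almost surely}, not merely on a high-probability event, and conditioning the process on such an event destroys the martingale-difference structure (the conditional law of $\overline{\eta}_{j,h}$ given $\fil_{j,h}$ and the event is no longer centered, and its second moment is not controlled by $\overline{\sigma}_{j,h}^2$). This is precisely the subtlety the paper's proof is engineered to avoid: it defines the truncated noise
\begin{align*}
\eta_i = \overline{\sigma}_{i,h}^{-1}\ind\{\btheta^*_h \in \overline{\cC}_{i,h}^{(1)} \cap \overline{\cC}_{i,h}^{(2)}\}\big[\overline{V}_{i,h+1}(s_{h+1}^i) - \la \bphi_{\overline{V}_{i,h+1}}(s_h^i,a_h^i,b_h^i), \btheta^*_h\ra\big],
\end{align*}
where the indicator is $\cF_{i,h}$-measurable (the confidence sets at episode $i$ depend only on earlier data), so the truncated process is a legitimate martingale difference whose conditional variance is bounded by $1$ \emph{unconditionally} --- when the indicator is $1$ this follows from Lemma~\ref{lemma: VB} and the definitions of $\overline{E}_{i,h},\overline{\sigma}_{i,h}$, and when it is $0$ the noise vanishes. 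Bernstein then applies to the truncated regression estimator, and on the good event (where all indicators equal $1$) that estimator coincides with $\overline{\btheta}^{(0)}_{k,h}$, which transfers the concentration to the actual algorithm. Your setup already contains everything needed for this fix --- the event you want to condition on is predictable in exactly this sense --- but as written, the appeal to the concentration inequality is not justified, and this predictable-truncation device is the missing idea.
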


Next, we present the regret of $\algname$. 
\begin{theorem}\label{mainthm}
Setting $\lambda = 1/B^{2}$, $\epsilon = O(HT^{-1/2})$, then with probability at least $1-5\delta$,  the regret of Algorithm \ref{alg:Offline} $\text{Regret}(M_{\btheta^{*}}, K)$ is bounded by
\begin{align*}
\tilde{\cO}\big(\sqrt{d^{2}H^{2}+dH^{3}}\sqrt{T}+d^{2}H^{3}+d^{3}H^{2}\big),
\end{align*}
where $T=KH$.
\end{theorem}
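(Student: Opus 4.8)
Throughout I work on the event of Lemma~\ref{Thm:inset}, which holds with probability at least $1-3\delta$, places $\btheta_h^{*}$ in both confidence sets, and controls the variance estimates. The first goal is to prove by backward induction on $h$ that, for every $k$ and $s$,
\[
\overline{V}_{k,h}(s)\ \ge\ V_h^{*,\nu^k}(s)-(H-h+1)\epsilon,\qquad \underline{V}_{k,h}(s)\ \le\ V_h^{\pi^k,*}(s)+(H-h+1)\epsilon .
\]
For the upper bound, since $\btheta_h^{*}\in\overline{\cC}_{k,h}^{(0)}$ the bonus in Line~\ref{alg:overq} gives $\overline{Q}_{k,h}(s,a,b)\ge r_h(s,a,b)+[\PP_h\overline{V}_{k,h+1}](s,a,b)\ge r_h(s,a,b)+[\PP_h V_{h+1}^{*,\nu^k}](s,a,b)$ after using the inductive hypothesis and noting that the clipping to $[-H,H]$ never truncates below the in-range best-response $Q$. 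Taking $\max_{a'}$ against $\nu^k=\cP_{\min}\mu_h^k$ recovers the best-response Bellman operator, and the defining inequality of the $\epsilon$-CCE $\mu_h^k$ (Line~\ref{alg:CCE}) turns this into the claimed lower bound on $\overline{V}_{k,h}$, costing one $\epsilon$ per step; the pessimistic bound is symmetric. Since the regret is by definition $\sum_k[V_1^{*,\nu^k}(s_1^k)-V_1^{\pi^k,*}(s_1^k)]$ (with the two terms ordered by Proposition~\ref{prop:weak}), this yields
\[
\mathrm{Regret}(M_{\btheta^{*}},K)\ \le\ \sum_{k=1}^K\big[\overline{V}_{k,1}(s_1^k)-\underline{V}_{k,1}(s_1^k)\big]+2\epsilon T .
\]

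\textbf{Step 2: a variance-aware recursion for the gap.} Writing $\tilde\delta_{k,h}=\overline{V}_{k,h}(s_h^k)-\underline{V}_{k,h}(s_h^k)$, the two-sided confidence bound applied to $\overline{Q}_{k,h}-\underline{Q}_{k,h}$, followed by taking $\EE_{(a,b)\sim\mu_h^k}$, gives
\[
\tilde\delta_{k,h}\ \le\ [\PP_h(\overline{V}_{k,h+1}-\underline{V}_{k,h+1})](s_h^k,a_h^k,b_h^k)+\xi_{k,h}+\overline{u}_{k,h}+\underline{u}_{k,h},
\]
where $\xi_{k,h}$ is a martingale difference arising from replacing the $\mu_h^k$-expectation and the transition expectation by their realizations $(a_h^k,b_h^k,s_{h+1}^k)$, and $\overline{u}_{k,h}=2\beta_k^{(0)}\|[\overline{\bSigma}_{k,h}^{(0)}]^{-1/2}\bphi_{\overline{V}_{k,h+1}}(s_h^k,a_h^k,b_h^k)\|_2$ (plus a further bonus-martingale), with $\underline{u}_{k,h}$ its min-player analogue. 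Unrolling over $h$ telescopes the transition term into $\tilde\delta_{k,h+1}$, so summing over $k$ gives $\sum_k\tilde\delta_{k,1}\le\sum_{k,h}\xi_{k,h}+\sum_{k,h}(\overline{u}_{k,h}+\underline{u}_{k,h})$, where $\sum_{k,h}\xi_{k,h}=\tilde O(H\sqrt T)$ by Azuma--Hoeffding.

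\textbf{Step 3: bounding the bonus sum (the main obstacle).} Writing $\overline{u}_{k,h}=2\beta_k^{(0)}\overline{\sigma}_{k,h}\cdot\big(\overline{\sigma}_{k,h}^{-1}\|[\overline{\bSigma}_{k,h}^{(0)}]^{-1/2}\bphi_{\overline{V}_{k,h+1}}\|_2\big)$ and applying Cauchy--Schwarz first in $k$ (for each fixed $h$) and then in $h$, together with the elliptical-potential (log-determinant) lemma for the inner sum $\sum_k\overline{\sigma}_{k,h}^{-2}\|[\overline{\bSigma}_{k,h}^{(0)}]^{-1/2}\bphi_{\overline{V}_{k,h+1}}\|_2^2=\tilde O(d)$ (the weights $\overline{\sigma}_{k,h}^{-2}$ are exactly those building $\overline{\bSigma}_{k,h}^{(0)}$ in \eqref{eq:BigSigma0}), yields
\[
\sum_{k,h}\overline{u}_{k,h}\ \lesssim\ \beta_K^{(0)}\,\tilde O(\sqrt d)\,\sqrt H\,\sqrt{\textstyle\sum_{k,h}\overline{\sigma}_{k,h}^2}.
\]
The crux is bounding the total (pseudo-)variance $\sum_{k,h}\overline{\sigma}_{k,h}^2$. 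From $\overline{\sigma}_{k,h}^2=\max\{H^2/d,\ \mathbb{V}^{\text{est}}\overline{V}_{k,h+1}+\overline{E}_{k,h}\}$ and Lemma~\ref{Thm:inset}, each term is at most $H^2/d+[\var_h\overline{V}_{k,h+1}](s_h^k,a_h^k,b_h^k)+2\overline{E}_{k,h}$, so $\sum_{k,h}\overline{\sigma}_{k,h}^2\le TH^2/d+\sum_{k,h}[\var_h\overline{V}_{k,h+1}]+2\sum_{k,h}\overline{E}_{k,h}$. The hard part is the middle sum: because $\overline{V}_{k,h+1}$ is the optimistic iterate rather than a genuine value function, I would first pass to the realized policy via a law-of-total-variance identity, $\sum_h[\var_h V_{h+1}^{\pi^k,\nu^k}]\le H^2$ per episode (hence $\sum_{k,h}[\var_h V_{h+1}^{\pi^k,\nu^k}]\le KH^2=TH$), and then pay for the discrepancy $\sum_{k,h}\big([\var_h\overline{V}_{k,h+1}]-[\var_h V_{h+1}^{\pi^k,\nu^k}]\big)$. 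Since value functions close in sup-norm have close variances, this discrepancy is controlled by the accumulated gap $\sum_{k,h}\tilde\delta_{k,h}$ itself, producing a self-bounding inequality that feeds Step~2 back into Step~3; resolving it gives $\sum_{k,h}\overline{\sigma}_{k,h}^2=\tilde O(TH^2/d+TH)$ plus lower-order terms, and identically for the min-player.

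\textbf{Step 4: assembling the bound.} Substituting $\beta_K^{(0)}=\tilde O(\sqrt d)$ and the total-variance bound yields the leading bonus contribution
\[
\tilde O\!\Big(d\,\sqrt H\,\sqrt{TH^2/d+TH}\Big)=\tilde O\!\big(\sqrt{dH^3+d^2H^2}\,\sqrt T\big),
\]
which is exactly the first term of Theorem~\ref{mainthm}; the reduction slack $2\epsilon T=\tilde O(H\sqrt T)$ and the martingale $\tilde O(H\sqrt T)$ are absorbed. The remaining additive $\tilde O(d^2H^3+d^3H^2)$ comes from the $H^2$-capped pieces of the bonuses and of the offsets $\overline{E}_{k,h},\underline{E}_{k,h}$ (whose elliptical-potential sums are not of $\sqrt T$ order) together with the failure-probability bookkeeping; adding the two martingale concentration events to Lemma~\ref{Thm:inset} makes the whole argument hold with probability at least $1-5\delta$. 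The entire derivation is symmetric in the over/under-lined quantities, so the min-player terms are handled verbatim.
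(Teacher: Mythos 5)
Your proposal follows the paper's proof skeleton essentially step for step: your Step 1 is Lemma~\ref{Lemma:ULCB} (optimism/pessimism with $\epsilon$-CCE slack), your Step 2 is the combination of event $\cE_1$ from Lemma~\ref{lemma:otherevent} with the variance-weighted elliptical-potential bound of Lemma~\ref{Lemma:difference}, and your Steps 3--4 reproduce Lemma~\ref{Lemma:VARBOUND} together with the self-bounding resolution ($x\le a\sqrt{x}+b\Rightarrow x\le 2a^{2}+b$) used in the paper's final assembly. So the route is the same, and most of it is sound.

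There is, however, one step that would fail as literally written. In Step 3 you pass to what you call the ``realized policy'' but denote it $V_{h+1}^{\pi^k,\nu^k}$, the value of the \emph{product} of the marginals. The algorithm executes the \emph{correlated} joint policy $\mu^k$ (it draws $(a_h^k,b_h^k)\sim\mu_h^k$ jointly), and since an $\epsilon$-CCE is in general genuinely correlated, $V^{\pi^k\times\nu^k}\neq V^{\mu^k}$. Two ingredients of your Step 3 are available only for $V^{\mu^k}$: (i) the law-of-total-variance bound $\sum_{k,h}[\var_h V^{\mu^k}_{h+1}](s_h^k,a_h^k,b_h^k)=\tilde O(HT+H^3)$ (the paper's event $\cE_2$) applies to the value function of the policy that actually generates the trajectory; and (ii) controlling the variance discrepancy by the accumulated gap requires the pointwise, $\epsilon$-free sandwich $\underline{V}_{k,h}\le V_h^{\mu^k}\le\overline{V}_{k,h}$, since one must bound $\PP_h\big|\overline{V}_{k,h+1}-V_{h+1}\big|$ by $\PP_h\big[\overline{V}_{k,h+1}-\underline{V}_{k,h+1}\big]$ --- your ``close in sup-norm'' remark does not give this by itself. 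That sandwich is \emph{not} a consequence of your Step 1, which compares against best responses and carries $\epsilon$ errors; the paper proves it as a separate induction (Lemma~\ref{Lemma:ULCB2}), whose key step $\overline{V}_{k,h}(s)=\EE_{(a,b)\sim\mu_h^k}\overline{Q}_{k,h}(s,a,b)\ge\EE_{(a,b)\sim\mu_h^k}Q_h^{\mu^k}(s,a,b)=V_h^{\mu^k}(s)$ works precisely because the expectation is taken over the executed joint distribution; no CCE inequality relates $\EE_{\mu_h^k}\overline{Q}_{k,h}$ to $\EE_{\pi_h^k\times\nu_h^k}\overline{Q}_{k,h}$. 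A second, smaller omission: the self-bounding feedback needs the gap bound for $\sum_{k,h}\PP_h[\overline{V}_{k,h+1}-\underline{V}_{k,h+1}]$ summed over \emph{all} starting steps $h'$ (the second claim of Lemma~\ref{Lemma:difference}, which carries the larger factor $\sqrt{2H^{3}d\log(1+K/\lambda)}$), not only the $h=1$ bound your Step 2 produces; that extra factor of $H$ is exactly what generates the additive $\tilde O(d^{2}H^{3})$ term, so it cannot be elided. Both issues are repairable by exactly the paper's devices, but as stated they are gaps.
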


Theorem \ref{mainthm} suggest that when $d\geq H$ and $T \geq d^{4}H^{2}$, the regret of $\algname$ is bounded by $\tilde O(dH\sqrt{T})$. 

\begin{remark}
Our $\algname$ also enjoys a finite sample complexity. By the standard online-to-batch conversion, we can show that $\algname$ is guaranteed to find an $\epsilon$-approximate NE, i.e., $(\pi, \nu)$ satisfying $V_1^{*, \nu} - V_1^{\pi, *} \leq \epsilon$, within $\tilde O((d^2H^3 + dH^4)/\epsilon^2)$ episodes.
\end{remark}

\begin{remark}\label{Remark1}
We can apply our algorithm to tabular MGs and our results can be reduced to the setting with $|S| = S, |\cA_{\max}| = A, |\cA_{\min}| = B$ by choosing $\bphi(s'|s,a,b)$ as the one-hot representation of $\PP(s'|s,a,b)$. It is easy to verify that \eqref{mapping} holds and $d = S^2AB$. Thus the regret bound given in Theorem \ref{mainthm} reduces to $\tilde{O}(\sqrt{S^{4}H^{2}A^{2}B^{2}T})$, which does not match the lower bound of tabular MGs in \citet{bai2020provable}. 
We would like to point out that by using some techniques specialized to the tabular setting, the regret bound of our algorithm for tabular MGs can be improved, which is beyond the scope of this work. 
\end{remark}

Here, we present a lower bound for linear mixture MGs. It has been shown in \citet{zhou2020nearly} that the regret lower bound for learning linear mixture MDPs is $\Omega(dH\sqrt{T})$, from which we can prove a lower bound for learning linear mixture MGs, since MDPs can be regarded as a special case of MGs with one dummy player, i.e., $\PP_h(s'|s,a,b) = \PP_h(s'|s,a)$ and $r_h(s,a,b) = r_h(s,a)$. Formally, we have the following lower bound:
\begin{theorem}[Regret lower bound]\label{theorem:lowerbound}
Let $B>1$ and $K \geq \max\{(d-1)^2H/2, (d-1)/(32H(B - 1))\}$, $d \geq 4$, $H\geq 3$. Then for any algorithm there exists an episodic, $B$-bounded linear mixture MG $M_{\btheta^{*}}$ such that the expected regret of first $T$ rounds is lower bounded as follows: 
\begin{align}
\EE[\text{Regret}(M_{\btheta^{*}}, K)] \geq \Omega\big(dH\sqrt{T}\big),\notag
\end{align}
where $T= KH$.
\end{theorem}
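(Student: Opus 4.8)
The plan is to prove the lower bound by a reduction from the known regret lower bound for linear mixture MDPs established in \citet{zhou2020nearly}, exploiting the observation stated just before the theorem that an MDP is a degenerate two-player Markov game with an inert min-player. I would invoke the hard instance family of $B$-bounded linear mixture MDPs from \citet{zhou2020nearly} that witnesses the $\Omega(dH\sqrt{T})$ lower bound under exactly the stated ranges of $K, d, H, B$, and lift each instance to a Markov game without changing the dimension.

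First I would construct the embedding. Given a linear mixture MDP with feature map $\bphi^{\text{MDP}}(s'|s,a)$, parameters $\{\btheta_h^*\}_{h=1}^H$, and reward $r_h^{\text{MDP}}(s,a)$, I define a two-player MG on the same state space with the same max-player action set $\cA_{\max}=\cA$ and a trivial min-player action set $\cA_{\min}=\{b_0\}$, setting $\bphi(s'|s,a,b) := \bphi^{\text{MDP}}(s'|s,a)$ and $r_h(s,a,b) := r_h^{\text{MDP}}(s,a)$ for every $b$, while keeping $\btheta_h^*$ unchanged. Since the features do not depend on $b$, the normalization \eqref{mapping} and the bound $\|\btheta_h^*\|_2 \le B$ are inherited verbatim, $\bphi_V(s,a,b)=\bphi^{\text{MDP}}_V(s,a)$, and $\PP_h(s'|s,a,b)=\la \bphi(s'|s,a,b),\btheta_h^*\ra$ is a valid $B$-bounded linear mixture MG of the same dimension $d$. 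In particular, adding the dummy player introduces no new feature coordinates, so the game dimension is exactly the MDP dimension.

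Second, and this is the crux, I would show that the Markov-game regret collapses to the single-agent regret on these instances. Because the min-player influences neither transitions nor rewards, for any max-policy $\pi$ the value $V_1^{\pi,\nu}$ is independent of $\nu$; hence $V_1^{*,\nu^k}(s) = \max_\pi V_1^{\pi}(s) = V_1^*(s)$ and $V_1^{\pi^k,*}(s) = V_1^{\pi^k}(s)$, where the right-hand sides denote the optimal and the $\pi^k$-values of the underlying MDP. Consequently $\text{Regret}(M_{\btheta^*}, K) = \sum_{k=1}^K \big(V_1^*(s_1^k) - V_1^{\pi^k}(s_1^k)\big)$, which is precisely the MDP regret of the max-player policies $\{\pi^k\}$. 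Any MG algorithm therefore induces an MDP algorithm by reading off $\pi^k = \cP_{\max}\mu^k$, and its expected game regret equals the expected MDP regret of the induced algorithm.

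Finally I would close the argument: applying the $\Omega(dH\sqrt{T})$ lower bound of \citet{zhou2020nearly} to the induced MDP algorithm shows that no choice of $\{\pi^k\}$ can drive the worst-case expected MDP regret below $\Omega(dH\sqrt{T})$, and by the identity above the same bound holds for $\EE[\text{Regret}(M_{\btheta^*}, K)]$, which yields the claim for any MG algorithm. The reduction is exact rather than approximate, so I expect no substantive obstacle; the only points requiring care are bookkeeping ones, namely confirming that the quantitative conditions $K \ge \max\{(d-1)^2H/2,\,(d-1)/(32H(B-1))\}$, $d \ge 4$, $H \ge 3$ coincide with the exact ranges under which the MDP lower bound of \citet{zhou2020nearly} is valid, so that the transferred bound is stated over the same parameter regime.
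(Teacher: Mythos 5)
Your proposal is correct and follows essentially the same route as the paper: construct the hard instance by lifting the $\Omega(dH\sqrt{T})$ lower-bound instances for linear mixture MDPs of \citet{zhou2020nearly} to Markov games with a dummy min-player (features and rewards independent of $b$, so \eqref{mapping} and the $B$-bound are inherited), observe that $V_1^{*,\nu^k}=\tilde V_1^{*}$ and $V_1^{\pi^k,*}=\tilde V_1^{\pi^k}$ so the game regret equals the single-agent regret of the induced algorithm, and transfer the bound. The only cosmetic difference is that the paper also remarks on rescaling rewards between $[-1,1]$ and $[0,1]$, a constant-factor bookkeeping point that does not affect the argument.
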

\begin{remark}
When $d\geq H$ and $T \geq d^{4}H^{2}$, the regret of $\algname$ matches the lower bound up to logarithmic factors. Therefore, $\algname$ is nearly minimax optimal. 
\end{remark}
\begin{remark}
Based on a similar argument made in \citet{zhou2020nearly}, we can show that the same lower bound holds for the Markov games with linear structures. Recall that the best-known algorithm for learning MGs with linear structures is OMNI-VI \citep{xie2020learning}, which has an $\tilde O(\sqrt{d^3H^3T})$ regret. This suggests that there is still a gap that needs to be closed for learning MGs with linear structure. Please see the appendix for more details. 
\end{remark}

Turn-based linear mixture MG can be regarded as a special case of linear mixture simultaneous-move MG. Therefore, we can still use Algorithm \ref{alg:Offline} to find the Nash equilibrium and then by Theorem \ref{mainthm}, we can further show that the regret of our turn-based algorithm is also bounded by $\tilde O(dH\sqrt{T})$. Notice that for the turn-based game, at each step only one player can take action. Thus, the $\epsilon$-CCE routine in Line \ref{alg:CCE} of Algorithm \ref{alg:Offline} needs be replaced by two separate subroutines: taking $\pi_h^k$ and $\nu_h^k$ as greedy policies w.r.t. $\overline{Q}_{k,h}$ and  $\underline{Q}_{k,h}$. For completeness, we present the turn-based version of Algorithm \ref{alg:Offline} as Algorithm \ref{alg:turnbased} 
in Appendix \ref{sec:turn}.

\section{Conclusions and Future Work}
In this paper, we proposed the first provably optimal algorithm for learning two-player zero-sum Markov games with linear function approximation and without assuming access to the generative model. Specifically, we show that $\algname$ can provably achieve an $\tilde{O}(dH\sqrt{T})$ regret, where $d$ is the linear function dimension, $H$ is the length of the game/episode, and $T$ is the total number of steps in the Markov game. We also prove an $\tilde{\Omega}( dH\sqrt{T})$ lower bound on the regret. Our upper bound matches the lower bound up to logarithmic factors, which suggests the optimality of our algorithm.

There are several important future directions. First, in the current linear mixture MG, the feature mapping encodes the information of both players. To reproduce the difference between $(A+B)$ and $AB$ in the tabular setting, we may need to construct a separate feature mapping for each player \citep{bai2020near}. Second, while our algorithms can be extended to the decentralized setting, it is not clear if the minimax-optimal regret can still be obtained because of the adversarial policy. How to achieve a near-optimal decentralized algorithm is another important future work.

\acks{We thank the anonymous reviewers for their helpful comments. 
Part of this work was done when ZC, DZ and QG participated the Theory of Reinforcement Learning program at the Simons Institute for the Theory of Computing in Fall 2020. ZC, DZ and QG are partially supported by the National Science Foundation IIS-2008981, CAREER Award 1906169, and AWS Machine Learning Research Award. The views and conclusions contained in this paper are those of the authors and should not be interpreted as representing any funding agencies.}

\bibliography{twoplayer}

\appendix

\section{Computational Efficiency of $\algname$}\label{sec:compute}
As \citet{ayoub2020model, zhou2020nearly} suggested, the computational efficiency of $\algname$ will depend on the feature mapping $\bphi(s'|s,a,b)$. In this section we show that for a specific family of $\bphi$ with the access to some integration oracle $\cO$, $\algname$ can be implemented within polynomial computational complexity. We consider a special class of $\bphi$, where
\begin{align}
    \bphi(s'|s,a,b) = \bpsi(s')\odot \bmu(s,a,b),\ \bpsi(\cdot): \cS \rightarrow \RR^d,\ \bmu(\cdot, \cdot, \cdot): \cS \times \cA_{\text{max}} \times \cA_{\text{min}}\rightarrow \RR^d,\notag
\end{align}
$\odot$ is the componentwise product. Meanwhile, we assume that there exists an oracle $\cO$ such that for any function $V:\cS \rightarrow \RR$, the summation $\sum_{s}\bpsi(s)V(s)$ can be evaluated by considering at most $p(d)$ number of states $s \in \cS$. From now on we show how to compute each key step in $\algname$. First, to compute $\overline{Q}_{k,h}$, note that $\overline{Q}_{k,h}$ can be parameterized by $\hat\btheta_{k,h}^{(0)}$ and $\hat{\bSigma}_{k,h}^{(0)}$ as follows:
\begin{align}
    &\overline{Q}_{k,h}(\cdot, \cdot, \cdot) \notag \\
    &= \Big[ r_{h}(\cdot,\cdot,\cdot) + \langle \overline{\btheta}^{(0)}_{k,h}, \bphi_{\overline{V}_{k,h+1}}(\cdot,\cdot,\cdot) \rangle + \beta^{(0)}_{k}\Big\|\big[\overline{\bSigma}_{k,h}^{(0)}\big]^{-1/2}\bphi_{\overline{V}_{k,h+1}}(\cdot,\cdot,\cdot)\Big\|_{2}\Big]_{[-H,H]}\notag \\
    & = \Big[ r_{h}(\cdot,\cdot,\cdot) + \la\underbrace{\overline{\btheta}^{(0)}_{k,h}\odot \Big(\sum_{s'}\bpsi(s') \overline{V}_{k,h+1}(s')\Big)}_{\hat\btheta_{k,h}^{(0)}}, \bmu(\cdot, \cdot, \cdot)\ra+ \beta^{(0)}_{k} \Big\| \hat{\bSigma}_{k,h}^{(0)}\bmu(\cdot, \cdot, \cdot)\Big\|_{2}\Big]_{[-H,H]},\label{help:1}
\end{align}
where the $(i,j)$-th entry of $\hat{\bSigma}_{k,h}^{(0)}$ is $\big[\overline{\bSigma}_{k,h}^{(0)}\big]^{-1/2}_{i,j}[\sum_{s'}\psi_j(s')\overline{V}_{k,h+1}(s')]$. Given $\hat\btheta_{k,h+1}^{(0)}$ and $\hat{\bSigma}_{k,h+1}^{(0)}$, we need $O(d^2)$ to compute $\overline{Q}_{k,h}$, which is the same as computing $\underline{Q}_{k,h}$. Then, for each $s$, we need $O(|\cA_{\text{max}}||\cA_{\text{min}}|)$ complexity to compute $\mu_h^k(\cdot, \cdot|s)$ and $O(d^2|\cA_{\text{max}}||\cA_{\text{min}}|)$ complexity to compute $\overline{V}_{k,h}$ and $\underline{V}_{k,h}$. Finally, to obtain $\hat\btheta_{k,h}^{(0)}$ and $\hat{\bSigma}_{k,h}^{(0)}$, we need to evaluate $\overline{V}_{k,h}$ over $p(d)$ states, which by \eqref{help:1}, requires $O(p(d)d^2|\cA_{\text{max}}||\cA_{\text{min}}|)$ complexity in total. Therefore, we need $\text{poly}(d, |\cA_{\text{max}}|, |\cA_{\text{min}}|)$ complexity to compute one $\overline{Q}_{k,h}$, and we need $\text{poly}(d, |\cA_{\text{max}}|, |\cA_{\text{min}}|)\cdot KH$ complexity for implementing $\algname$, given all $\overline{\btheta}_{k,h}^{(0)}$, $\underline{\btheta}_{k,h}^{(0)}$. The complexity of computing $\overline{\btheta}_{k,h}^{(0)}$ includes the complexity to solve the regression problem \eqref{eq:theta0}, \eqref{eq:theta1} and to compute the variance estimator $\overline{\sigma}_{k,h}$, $\underline{\sigma}_{k,h}$, which again is at most $\text{poly}(d, |\cA_{\text{max}}|, |\cA_{\text{min}}|)\cdot KH$ according to previous analysis. Therefore, the total complexity of implementing $\algname$ is $\text{poly}(d, |\cA_{\text{max}}|, |\cA_{\text{min}}|)\cdot KH$.

\section{Proof of Results in Section \ref{sec:Main theorem for offline}}\label{sec:AppendixA}

We let $\PP$ be the distribution over $(\cS \times \cA_{\max} \times \cA_{\min})^{\NN}$ induced by the episodic MG $M$, and further denote the sample space $\Omega=(\cS\times \cA_{\max} \times \cA_{\min})^{\NN}$. Thus, we work with the probability space given by the triplet $(\Omega,\cF,\PP)$, where $\cF$ is the product $\sigma$-algebra generated by the discrete $\sigma$-algebras underlying $\cS$,  $\cA_{\max}$ and $\cA_{\min}$.

For $1\le k \le K$, $1\le h \le H$, let $\cF_{k,h}$ be the $\sigma$-algebra generated by the random variables representing the state-action-action pairs up to and including those that appear stage $h$ of episode $k$. That is, $\cF_{k,h}$ is generated by
\begin{align*}
s_1^1,a_1^1, b_1^1, \dots, s_h^1,a_h^1,b_h^1, &\dots, s_H^1,a_H^1, b_H^1\,, \\
s_1^2,a_1^2, b_1^2,  \dots, s_h^2,a_h^2, b_h^2,&\dots, s_H^2,a_H^2, b_H^2\,, \\
\vdots \\
s_1^k,a_1^k,b_1^k,\dots, s_h^k,a_h^k,b_h^k & \,.
\end{align*}

\subsection{Proof of Lemma \ref{Thm:inset}}
For simplicity we denote the following confident sets:
\begin{align}
&\overline{\cC}_{k,h}^{(0)} = \bigg\{\btheta: \Big\|\big[\overline{\bSigma}_{k,h}^{(0)}\big]^{1/2}(\btheta - \overline{\btheta}_{k,h}^{(0)})\Big\|_2 \leq \beta_k^{(0)}\bigg\},\underline{\cC}_{k,h}^{(0)} = \bigg\{\btheta: \Big\|\big[\underline{\bSigma}_{k,h}^{(0)}\big]^{1/2}(\btheta - \underline{\btheta}_{k,h}^{(0)})\Big\|_2 \leq \beta_k^{(0)}\bigg\},\notag \\
& \overline{\cC}_{k,h}^{(1)} = \bigg\{\btheta: \Big\|\big[\overline{\bSigma}_{k,h}^{(1)}\big]^{1/2}(\btheta - \overline{\btheta}_{k,h}^{(1)})\Big\|_2 \leq \beta_k^{(1)}\bigg\},\underline{\cC}_{k,h}^{(1)} = \bigg\{\btheta: \Big\|\big[\underline{\bSigma}_{k,h}^{(1)}\big]^{1/2}(\btheta - \underline{\btheta}_{k,h}^{(1)})\Big\|_2 \leq \beta_k^{(1)}\bigg\},\notag\\
&\overline{\cC}_{k,h}^{(2)} = \bigg\{\btheta: \Big\|\big[\overline{\bSigma}_{k,h}^{(0)}\big]^{1/2}(\btheta - \overline{\btheta}_{k,h}^{(0)})\Big\|_2 \leq \beta_k^{(2)}\bigg\},\underline{\cC}_{k,h}^{(2)} = \bigg\{\btheta: \Big\|\big[\underline{\bSigma}_{k,h}^{(0)}\big]^{1/2}(\btheta - \underline{\btheta}_{k,h}^{(0)})\Big\|_2 \leq \beta_k^{(2)}\bigg\}.\notag 
\end{align}
By the selection $\beta_{k}^{(0)} < \beta_{k}^{(2)}$ in Lemma \ref{Thm:inset}, we have that $\overline{\cC}_{k,h}^{(0)} \subset  \overline{\cC}_{k,h}^{(2)}$ and $\underline{\cC}_{k,h}^{(0)} \subset  \underline{\cC}_{k,h}^{(2)}$.
We first use  standard self-normalized tail inequality to show that $\btheta_h^*$ is included in $\overline{\cC}_{k,h}^{(1)}\cap  \overline{\cC}_{k,h}^{(2)}$ with high probability. Based on that
we can further decrease 
$\beta_k^{(2)}$ to $\beta_{k}^{(1)}$
without significantly increasing the probability of the bad event 
when $\btheta_h^* \not\in \overline{\cC}_{k,h}^{(0)}$ or $\btheta_h^* \not\in \underline{\cC}_{k,h}^{(0)}$.

We start with the following Bernstein-type self-normalized concentration inequality.

\begin{lemma}[Theorem 2, \citealt{zhou2020nearly}]\label{lemma:concentration_variance}
Let $\{\cG_{t}\}_{t=1}^\infty$ be a filtration, $\{\xb_t,\eta_t\}_{t\ge 1}$ a stochastic process so that
$\xb_t \in \RR^d$ is $\cG_t$-measurable and $\eta_t \in \RR$ is $\cG_{t+1}$-measurable. 
Fix $R,L,\sigma,\lambda>0$, $\bmu^*\in \RR^d$. 
For $t\ge 1$ 
let $y_t = \la \bmu^*, \xb_t\ra + \eta_t$ and
suppose that $\eta_t, \xb_t$ also satisfy 
\begin{align}
    |\eta_t| \leq R,\ \EE[\eta_t|\cG_t] = 0,\ \EE [\eta_t^2|\cG_t] \leq \sigma^2,\ \|\xb_t\|_2 \leq L.\notag
\end{align}
Then, for any $0 <\delta<1$, with probability at least $1-\delta$ we have 
\begin{align}
    \forall t>0,\ \bigg\|\sum_{i=1}^t \xb_i \eta_i\bigg\|_{\Zb_t^{-1}} \leq \beta_t,\ \|\bmu_t - \bmu^*\|_{\Zb_t} \leq \beta_t + \sqrt{\lambda}\|\bmu^*\|_2,\label{eq:concentration_variance:xx}
\end{align}
where for $t\ge 1$,
 $\bmu_t = \Zb_t^{-1}\bbb_t$,
 $\Zb_t = \lambda\Ib + \sum_{i=1}^t \xb_i\xb_i^\top$,
$\bbb_t = \sum_{i=1}^ty_i\xb_i$
 and
 \[
\beta_t = 8\sigma\sqrt{d\log(1+tL^2/(d\lambda)) \log(4t^2/\delta)} + 4R \log(4t^2/\delta)\,.
\]
\end{lemma}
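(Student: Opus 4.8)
The statement is a self-normalized, Bernstein-type (variance-aware) tail bound, and the plan is to adapt the \emph{method of mixtures} of \citet{abbasi2011improved} so that the noise is controlled through its conditional variance $\sigma^2$ rather than through a sub-Gaussian parameter. Throughout write $\Sb_t := \sum_{i=1}^t \eta_i\xb_i$ for the martingale of interest. First I would reduce the second inequality to the first: expanding $\bbb_t$ using $y_i = \la\bmu^*,\xb_i\ra+\eta_i$ gives $\bbb_t = (\Zb_t-\lambda\Ib)\bmu^* + \Sb_t$, hence $\bmu_t-\bmu^* = \Zb_t^{-1}\Sb_t - \lambda\Zb_t^{-1}\bmu^*$. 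Since $\|\Zb_t^{-1}\Sb_t\|_{\Zb_t}=\|\Sb_t\|_{\Zb_t^{-1}}$ and $\lambda\|\Zb_t^{-1}\bmu^*\|_{\Zb_t}\le\sqrt{\lambda}\|\bmu^*\|_2$ (using $\Zb_t^{-1}\preceq\lambda^{-1}\Ib$), the triangle inequality yields $\|\bmu_t-\bmu^*\|_{\Zb_t}\le\|\Sb_t\|_{\Zb_t^{-1}}+\sqrt{\lambda}\|\bmu^*\|_2$. Thus it suffices to prove $\|\Sb_t\|_{\Zb_t^{-1}}\le\beta_t$ for all $t$ with probability $1-\delta$.

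The next ingredient is a one-dimensional Bernstein moment generating function bound: for every $i$ and every scalar $\alpha$ with $|\alpha|\le 1/R$, the estimate $|\alpha\eta_i|\le 1$ together with $e^x\le 1+x+x^2$ for $|x|\le 1$, $\EE[\eta_i|\cG_i]=0$ and $\EE[\eta_i^2|\cG_i]\le\sigma^2$ gives $\EE[\exp(\alpha\eta_i)|\cG_i]\le\exp(\alpha^2\sigma^2)$. This control holds only on the bounded range $|\alpha|\le 1/R$. I would then fix $\btheta\in\RR^d$ with $\|\btheta\|_2\le 1/(RL)$, so that $|\la\btheta,\xb_i\ra|\le\|\btheta\|_2 L\le 1/R$ and the above applies with $\alpha=\la\btheta,\xb_i\ra$, and define $M_t(\btheta)=\exp\big(\la\btheta,\Sb_t\ra-\sigma^2\sum_{i=1}^t\la\btheta,\xb_i\ra^2\big)$; the previous bound shows $\{M_t(\btheta)\}$ is a nonnegative supermartingale with $M_0(\btheta)=1$. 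Next I would mix over $\btheta$ against a Gaussian prior with covariance $\propto\sigma^{-2}\lambda^{-1}\Ib$ \emph{restricted to the ball} $\|\btheta\|_2\le 1/(RL)$, obtaining $\overline M_t=\int M_t(\btheta)\,d\rho(\btheta)$, again a nonnegative supermartingale with $\overline M_0=1$, and apply Ville's maximal inequality (via optional stopping) to get $\PP(\exists t:\ \overline M_t\ge 1/\delta)\le\delta$.

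It then remains to evaluate the mixture integral by completing the square. Up to the determinant ratio $\det(\lambda\Ib)^{1/2}/\det(\Zb_t)^{1/2}$, the Gaussian integral produces a factor $\exp\big(\tfrac{1}{4\sigma^2}\|\Sb_t\|_{\Zb_t^{-1}}^2\big)$ \emph{provided the unconstrained mode, which is proportional to $\Zb_t^{-1}\Sb_t$, lands inside the truncation ball}; in this small-deviation regime, taking logs on $\{\overline M_t<1/\delta\}$ and bounding $\log\det(\Zb_t/\lambda)\le d\log(1+tL^2/(d\lambda))$ produces the term $8\sigma\sqrt{d\log(1+tL^2/(d\lambda))\log(4t^2/\delta)}$. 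When the mode lies outside the ball, the maximizer over the ball sits on its boundary and contributes the linear term $4R\log(4t^2/\delta)$. Combining the two regimes gives $\|\Sb_t\|_{\Zb_t^{-1}}\le\beta_t$, and with the first paragraph completes the proof.

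The genuine difficulty, absent from the classical sub-Gaussian method of mixtures, is that the Bernstein MGF bound holds only on the bounded range $|\alpha|\le 1/R$. This is precisely what forces the mixing measure to be \emph{truncated} to a ball of radius $\approx 1/(RL)$, and the resulting two-regime (small- versus large-deviation) analysis is exactly what yields the additive $R\log(4t^2/\delta)$ term alongside the Gaussian $\sigma\sqrt{d\log(\cdot)}$ term. Calibrating the prior variance, the truncation radius, and the constants so that the two contributions assemble into the stated $\beta_t$ is the delicate part of the argument.
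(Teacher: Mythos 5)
First, a point of reference: the paper itself does not prove Lemma \ref{lemma:concentration_variance} at all --- it is imported verbatim from \citet{zhou2020nearly}, where it is established by a genuinely different argument: a recursive expansion of $\big\|\sum_{i\le t}\eta_i\xb_i\big\|_{\Zb_t^{-1}}^2$ via the Sherman--Morrison formula, Freedman's scalar martingale inequality applied to the resulting cross terms, the elliptical potential lemma of \citet{abbasi2011improved} for the quadratic-variation terms, and a union bound over $t$ (which is where the $\log(4t^2/\delta)$ factors come from). Your opening reduction is correct: writing $S_t:=\sum_{i=1}^t\eta_i\xb_i$, the identity $\bmu_t-\bmu^*=\Zb_t^{-1}S_t-\lambda\Zb_t^{-1}\bmu^*$ and $\Zb_t^{-1}\preceq\lambda^{-1}\Ib$ do reduce the second inequality in \eqref{eq:concentration_variance:xx} to the first. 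The Bernstein MGF bound and the supermartingale property of $M_t(\btheta)$ for $\|\btheta\|_2\le 1/(RL)$ are also sound.

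The gap is in the evaluation of the truncated mixture, and it is not a technicality but the crux of the lemma. Ville's inequality gives an \emph{upper} bound on the mixture; to extract a deviation bound you need a \emph{lower} bound on the truncated integral in terms of $\|S_t\|_{\Zb_t^{-1}}$, and in the regime where the unconstrained mode $\Zb_t^{-1}S_t/(2\sigma^2)$ lies outside the ball this fails in two compounding ways. (i) The integral is comparable to the maximum of the integrand only after paying a Laplace-type volume factor, which under a $d$-dimensional Gaussian prior costs an additive $O(d\log(\cdot))$ in the exponent. (ii) The constrained maximum relates to the self-normalized norm only through a norm conversion: on the ball of radius $1/(RL)$ one can show $\max_{\|\btheta\|_2\le 1/(RL)}\big[\la\btheta,S_t\ra-\sigma^2\btheta^{\top}\Zb_t\btheta\big]\ \geq\ \tfrac{\sqrt{\lambda}}{2RL}\,\|S_t\|_{\Zb_t^{-1}}$, and this linear-in-$\|S_t\|_{\Zb_t^{-1}}$ rate is what the argument actually gives. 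Combining (i) and (ii), the ``mode outside the ball'' regime yields a bound of order $\frac{RL}{\sqrt{\lambda}}\big(d\log(1+tL^2/(d\lambda))+\log(1/\delta)\big)$ on $\|S_t\|_{\Zb_t^{-1}}$: the range parameter $R$ picks up both a dimension factor and an $L/\sqrt{\lambda}$ factor, neither of which appears in the stated $\beta_t=8\sigma\sqrt{d\log(\cdot)\log(4t^2/\delta)}+4R\log(4t^2/\delta)$. This is consistent with what is known: mixture-based self-normalized bounds for bounded noise (e.g., the Bernstein-like tail inequalities used in optimistic logistic-bandit analyses) all have the $\log\det$ term multiplying the range parameter \emph{linearly}. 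The dimension-free $4R\log(4t^2/\delta)$ term is precisely the point of this lemma --- it is what lets the present paper reach $\tilde O(dH\sqrt{T})$ rather than $\tilde O(d^{1.5}H\sqrt{T})$ --- and it is exactly what the Freedman-based proof in \citet{zhou2020nearly} was designed to achieve and what the truncated-mixture route, as sketched, does not deliver. (A smaller issue of the same nature: even when the mode is inside the ball, recovering the factor $\exp\big(\|S_t\|^2_{\Zb_t^{-1}}/(4\sigma^2)\big)$ requires the Gaussian mass around the mode to be contained in the ball, which again costs a dimension-dependent factor you have not accounted for.)
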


\begin{lemma}\label{lemma: VB}
For every $1 \leq k \leq K$ and $1 \leq h \leq H$, we have
\begin{align*}
&|\mathbb{V}^{\text{est}}\overline{V}_{k,h+1}(s_{h}^{k},a_{h}^{k},b_{h}^{k}) -   \mathbb{V}\overline{V}_{k,h+1}(s_{h}^{k},a_{h}^{k},b_{h}^{k})|\\
&\leq \min\Big\{H^{2}, \Big\|\big[\overline{\bSigma}_{k,h}^{(1)}\big]^{-1/2}\bphi_{\overline{V}^{2}_{k,h+1}}(s_{h}^{k},a_{h}^{k},b_{h}^{k})\Big\|_{2}\Big\|\big[\overline{\bSigma}^{(1)}_{k,h}\big]^{1/2}(\overline{\btheta}^{(1)}_{k,h} - \btheta^{*}_{h})\Big\|_{2}\Big\}\\
&\qquad + \min\Big\{H^{2}, 2H\Big\|\big[\overline{\bSigma}^{(0)}_{k,h}\big]^{-1/2}\bphi_{\overline{V}^{2}_{k,h+1}}(s_{h}^{k},a_{h}^{k},b_{h}^{k})\Big\|_{2}\Big\|\big[\overline{\bSigma}_{k,h}^{(0)}\big]^{1/2}(\overline{\btheta}^{(0)}_{k,h} - \btheta^{*}_{h})\Big\|_{2}\Big\},
\end{align*}
and
\begin{align*}
&|\mathbb{V}^{\text{est}}\underline{V}_{k,h+1}(s_{h}^{k},a_{h}^{k},b_{h}^{k}) -   \mathbb{V}\underline{V}_{k,h+1}(s_{h}^{k},a_{h}^{k},b_{h}^{k})|\\
&\leq \min\Big\{H^{2}, \Big\|\big[\underline{\bSigma}_{k,h}^{(1)}\big]^{-1/2}\bphi_{\underline{V}^{2}_{k,h+1}}(s_{h}^{k},a_{h}^{k},b_{h}^{k})\Big\|_{2}\Big\|\big[\underline{\bSigma}^{(1)}_{k,h}\big]^{1/2}(\underline{\btheta}^{(1)}_{k,h} - \btheta^{*}_{h})\Big\|_{2}\Big\}\\
&\qquad + \min\Big\{H^{2}, 2H\Big\|\big[\underline{\bSigma}^{(0)}_{k,h}\big]^{-1/2}\bphi_{\underline{V}^{2}_{k,h+1}}(s_{h}^{k},a_{h}^{k},b_{h}^{k})\Big\|_{2}\Big\|\big[\underline{\bSigma}_{k,h}^{(0)}\big]^{1/2}(\underline{\btheta}^{(0)}_{k,h} - \btheta^{*}_{h})\Big\|_{2}\Big\}.
\end{align*}
\end{lemma}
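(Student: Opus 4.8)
The plan is to prove the bound on $|\mathbb{V}^{\text{est}}\overline{V}_{k,h+1} - \mathbb{V}\overline{V}_{k,h+1}|$ by directly expanding both quantities according to their definitions and comparing them term by term; the bound for the min-player follows by an identical argument, so I would only write out the max-player case. Recall from \eqref{eq:Vestimator} that the empirical variance is
\begin{align*}
\mathbb{V}^{\text{est}}\overline{V}_{k,h+1} = \big[\la\bphi_{\overline{V}^2_{k,h+1}}, \overline{\btheta}^{(1)}_{k,h}\ra\big]_{[0,H^2]} - \big[\la\bphi_{\overline{V}_{k,h+1}}, \overline{\btheta}^{(0)}_{k,h}\ra\big]^2_{[-H,H]},
\end{align*}
while by the linear-mixture structure the true variance decomposes as $\mathbb{V}\overline{V}_{k,h+1} = \la\bphi_{\overline{V}^2_{k,h+1}}, \btheta^*_h\ra - \big(\la\bphi_{\overline{V}_{k,h+1}}, \btheta^*_h\ra\big)^2$ (all evaluated at $(s^k_h,a^k_h,b^k_h)$, which I suppress). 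The strategy is to split the difference via the triangle inequality into a ``first-moment-squared'' part $I_1$ coming from the $\bphi_{\overline{V}^2}$ terms, and a ``second-moment'' part $I_2$ coming from the squared $\bphi_{\overline{V}}$ terms, then bound each separately.

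For the $I_1$ part, I would bound $\big|\big[\la\bphi_{\overline{V}^2_{k,h+1}}, \overline{\btheta}^{(1)}_{k,h}\ra\big]_{[0,H^2]} - \la\bphi_{\overline{V}^2_{k,h+1}}, \btheta^*_h\ra\big|$. Since $\overline{V}_{k,h+1}\in[-H,H]$ (by the projection in Line~\ref{alg:overq}), the true mean $[\PP_h\overline{V}^2_{k,h+1}] = \la\bphi_{\overline{V}^2_{k,h+1}}, \btheta^*_h\ra$ lies in $[0,H^2]$, so projecting onto $[0,H^2]$ is nonexpansive and can only move the estimate closer to the target; hence this difference is at most $|\la\bphi_{\overline{V}^2_{k,h+1}}, \overline{\btheta}^{(1)}_{k,h} - \btheta^*_h\ra|$, which by Cauchy--Schwarz in the $\overline{\bSigma}^{(1)}_{k,h}$-norm is bounded by $\|[\overline{\bSigma}^{(1)}_{k,h}]^{-1/2}\bphi_{\overline{V}^2_{k,h+1}}\|_2\,\|[\overline{\bSigma}^{(1)}_{k,h}]^{1/2}(\overline{\btheta}^{(1)}_{k,h}-\btheta^*_h)\|_2$. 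Taking the minimum with the trivial bound $H^2$ (both terms lie in $[0,H^2]$, so their difference is at most $H^2$ in absolute value) yields the first term of the claimed inequality.

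For the $I_2$ part, I would use the algebraic identity $a^2 - b^2 = (a+b)(a-b)$ with $a = \big[\la\bphi_{\overline{V}_{k,h+1}}, \overline{\btheta}^{(0)}_{k,h}\ra\big]_{[-H,H]}$ and $b = \la\bphi_{\overline{V}_{k,h+1}}, \btheta^*_h\ra$. Here again $b = [\PP_h\overline{V}_{k,h+1}]\in[-H,H]$, so the projection is nonexpansive: $|a - b| \le |\la\bphi_{\overline{V}_{k,h+1}}, \overline{\btheta}^{(0)}_{k,h} - \btheta^*_h\ra|$, and $|a+b|\le 2H$ since both $a,b\in[-H,H]$. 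Combining and applying Cauchy--Schwarz in the $\overline{\bSigma}^{(0)}_{k,h}$-norm gives the bound $2H\,\|[\overline{\bSigma}^{(0)}_{k,h}]^{-1/2}\bphi_{\overline{V}_{k,h+1}}\|_2\,\|[\overline{\bSigma}^{(0)}_{k,h}]^{1/2}(\overline{\btheta}^{(0)}_{k,h}-\btheta^*_h)\|_2$, again capped at $H^2$. The main subtlety I anticipate is verifying that the projection operators genuinely only help, i.e.\ that the true means $[\PP_h\overline{V}^2]$ and $[\PP_h\overline{V}]$ indeed lie inside the projection intervals $[0,H^2]$ and $[-H,H]$ respectively; this is where the boundedness of $\overline{V}_{k,h+1}$ (guaranteed by its own projection in the algorithm) is essential, and it is the one place where I must be careful rather than merely mechanical. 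Note the statement writes $\bphi_{\overline{V}^2_{k,h+1}}$ in the second (the $I_2$) term where $\bphi_{\overline{V}_{k,h+1}}$ would be expected; I would flag this as an apparent typo and carry $\bphi_{\overline{V}_{k,h+1}}$ through the $I_2$ bound as dictated by the algebra.
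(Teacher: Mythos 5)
Your proposal is correct and follows essentially the same route as the paper's own proof: the identical triangle-inequality split into the $\bphi_{\overline{V}^2}$ term and the squared $\bphi_{\overline{V}}$ term, the same use of projection nonexpansiveness (valid because $\la\bphi_{\overline{V}^2_{k,h+1}},\btheta^*_h\ra\in[0,H^2]$ and $\la\bphi_{\overline{V}_{k,h+1}},\btheta^*_h\ra\in[-H,H]$), the factorization $a^2-b^2=(a+b)(a-b)$ with $|a+b|\le 2H$, Cauchy--Schwarz in the weighted norms, and the trivial $H^2$ cap. Your observation that $\bphi_{\overline{V}^2_{k,h+1}}$ in the second min term should be $\bphi_{\overline{V}_{k,h+1}}$ is also right --- the paper's own proof introduces this same typo at its final Cauchy--Schwarz step, while the preceding algebra (like yours) clearly carries the unsquared feature vector.
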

\begin{proof}[Proof of Lemma \ref{Thm:inset}]
For simplicity, we only prove the results for the max-player. Fix $h\in[H]$. 

We first show that with probability at least $1-\delta/(2H)$, $\Big\|\big[\overline{\bSigma}^{(0)}_{k,h}\big]^{1/2}(\overline{\btheta}^{(0)}_{k,h} - \btheta^{*}_{h})\Big\|_{2} \leq \beta^{(2)}_{k}$.To show this, we apply Lemma \ref{lemma:concentration_variance}. Let $\xb_i = \overline{\sigma}_{i,h}^{-1}\bphi_{\overline{V}_{i, h+1}}(s_h^i, a_h^i, b_h^i)$ and $\eta_i =\overline{\sigma}_{i,h}^{-1}\overline{V}_{i, h+1}(s_{h+1}^i) - \overline{\sigma}_{i,h}^{-1}\la\bphi_{\overline{V}_{i, h+1}}(s_{h}^{i}, a_{h}^{i}, b_{h}^{i}), \btheta^*_h \ra$, $\cG_i = \cF_{i,h}$, $\bmu^* = \btheta^*_h$, $y_i = \la \bmu^*, \xb_i\ra + \eta_i$, $\Zb_i = \lambda \Ib + \sum_{i' = 1}^i \xb_{i'}\xb_{i'}^\top$, $\bbb_i  = \sum_{i' = 1}^i \xb_{i'}y_{i'}$ and $\bmu_i = \Zb_i^{-1}\bbb_i$. Then it can be verified that $y_i = \overline{\sigma}_{i,h}^{-1}\overline{V}_{i,h+1}(s_{h+1}^i)$ and $\bmu_{i} = \overline{\btheta}^{(0)}_{i+1, h}$. Moreover,  we have that
\begin{align}
    \|\xb_i\|_2 \leq \overline{\sigma}_{i,h}^{-1}H \leq \sqrt{d}
    ,\ \ 
    |\eta_i| \leq \overline{\sigma}_{i,h}^{-1}2H \leq 2\sqrt{d}
    ,\ \ 
    \EE[\eta_i|\cG_i] = 0
    ,\ \
    \EE[\eta_i^2|\cG_i] \leq 4d
    \,,\notag
\end{align}
where we apply $\| \bphi_{\overline{V}_{i,h+1}}(\cdot,\cdot,\cdot)\|_2 \le H$, $\overline{V}_{i,h+1} \in [-H,H]$ and $\overline{\sigma}_{i,h} \geq H/\sqrt{d}$.
Since we also have that $\bx_i$ is $\cG_i$ measurable and $\eta_i$ is $\cG_{i+1}$ measurable,
by Lemma \ref{lemma:concentration_variance}, we obtain that with probability at least $1-\delta/(2H)$, for all $k \leq K$, $\Big\|\big[\overline{\bSigma}^{(0)}_{k,h}\big]^{1/2}(\overline{\btheta}^{(0)}_{k,h} - \btheta^{*}_{h})\Big\|_{2}$ is bounded by 
\begin{align}
 16d\sqrt{\log(1+k/\lambda) \log(8k^2H/\delta)}+ 8\sqrt{d} \log(8k^2H/\delta) + \sqrt{\lambda}\pnorm = \beta_k^{(2)},\label{eq:concentrate:finite:1}
\end{align}
implying that with probability at least $1-\delta/(2H)$, for any $k \leq K$, $\btheta^*_h \in \overline{\cC}_{k,h}^{(2)}$. 

An argument, which is analogous to the one just used (except that now the range of the ``noise'' matches the range of ``squared values'' and is thus bounded by $H^2$, rather than being bounded by $\sqrt{d}$) gives that with probability at least $1-\delta/(2H)$, for any $k \leq K$ we have $ \Big\|\big[\overline{\bSigma}^{(1)}_{k,h}\big]^{1/2}(\overline{\btheta}^{(1)}_{k,h} - \btheta^{*}_{h})\Big\|_{2}$ bounded by
\begin{align}
 16\sqrt{dH^4\log(1+kH^4/(d\lambda)) \log(8k^2H/\delta)}+ 8H^2 \log(8k^2H/\delta) + \sqrt{\lambda}\pnorm = \beta_k^{(1)},\label{eq:concentrate:finite:1.11}
\end{align}
implying that with probability at least $1-\delta/(2H)$, for any $k \leq K$, $\btheta^*_h \in \overline{\cC}_{k,h}^{(1)}$. 

We now show that $\btheta^*_h \in \overline{\cC}_{k,h}^{(0)}$ with high probability. We again apply Lemma \ref{lemma:concentration_variance}. Let $\xb_i = \overline{\sigma}_{i,h}^{-1}\bphi_{\overline{V}_{i, h+1}}(s_h^i, a_h^i, b_h^i)$ and 
\begin{align}
    \eta_i = \overline{\sigma}_{i,h}^{-1}\ind\{\btheta^*_h \in \overline{\cC}_{i,h}^{(1)} \cap \overline{\cC}_{i,h}^{(2)} \}\big[\overline{V}_{i, h+1}(s_{h+1}^i) - \la\bphi_{\overline{V}_{i, h+1}}(s_h^i, a_h^i, b_h^i), \btheta^*_h \ra\big],\notag
\end{align}
$\cG_i = \cF_{i,h}$, $\bmu^* = \btheta^*_h$, $y_i = \la \bmu^*, \xb_i\ra + \eta_i$, $\Zb_i = \lambda \Ib + \sum_{i' = 1}^i \xb_{i'}\xb_{i'}^\top$, $\bbb_i  = \sum_{i' = 1}^i \xb_{i'}y_{i'}$ and $\bmu_i = \Zb_i^{-1}\bbb_i$. Still we have that $\|\xb_i\|_2 \leq \overline{\sigma}_{i,h}^{-1}H \leq \sqrt{d}$.Because $\ind\{\btheta^*_h \in \overline{\cC}_{i,h}^{(1)} \cap \overline{\cC}_{i,h}^{(2)} \}$ is $\cG_i$-measurable, we have $\EE[\eta_i|\cG_i] = 0$. We also have $|\eta_i| \leq \overline{\sigma}_{i,h}^{-1}2H \leq 2\sqrt{d}$ since $|\overline{V}_{i, h+1}(\cdot)| \leq H$ and $\overline{\sigma}_{i,h} \geq H/\sqrt{d}$. To get better bound $\beta_{k}^{(0)}$ rather than $\beta_{k}^{(2)}$ in \eqref{eq:concentrate:finite:1}, we need more careful computation of  $\EE[\eta_i^2|\cG_i]$ as follows,
\begin{align}
    &\EE[\eta_i^2|\cG_i] = \overline{\sigma}_{i,h}^{-2}\ind\{\btheta^*_h \in \overline{\cC}_{i,h}^{(1)} \cap \overline{\cC}_{i,h}^{(2)} \}[\var_h\overline{V}_{i, h+1}](s_h^i, a_h^i, b_h^i)\notag \\
    & \leq \overline{\sigma}_{i,h}^{-2}\ind\{\btheta^*_h \in \overline{\cC}^{(1)}_{i,h} \cap \overline{\cC}^{(2)}_{i,h} \}\bigg[[\var^{\text{est}}_{i,h}\overline{V}_{i,h+1}](s_h^i, a_h^i, b_h^i) \notag \\
    &\qquad + \min\Big\{H^2, \Big\|\big[\overline{\bSigma}_{i,h}^{(1)}\big]^{-1/2}\bphi_{\overline{V}_{i,h+1}^2}(s_h^i, a_h^i, b_h^i)\Big\|_2 \Big\|\big[\overline{\bSigma}_{i,h}^{(1)}\big]^{1/2}\big(\overline{\btheta}^{(1)}_{i,h} - \btheta^*_h\big)\Big\|_2\Big\}\notag \\
    &\qquad + \min\Big\{H^2,2H\Big\|\big[\overline{\bSigma}_{i,h}^{(0)}\big]^{-1/2}\bphi_{\overline{V}_{i,h+1}}(s_h^i, a_h^i, b_h^i)\Big\|_2 \Big\|\big[\overline{\bSigma}_{i,h}^{(0)}\big]^{1/2}\big(\overline{\btheta}^{(0)}_{i,h} - \btheta^*_h\big)\Big\|_2\Big\}\bigg]\notag \\
    & \leq \overline{\sigma}_{i,h}^{-2}\bigg[[\var_{i,h}^{\text{est}}\overline{V}_{i,h+1}](s_h^i, a_h^i, b_h^i)  + \min\Big\{H^2, \beta_i^{(1)}\Big\|\big[\bSigma_{i,h}^{(1)}\big]^{-1/2}\bphi_{\overline{V}_{i,h+1}^2}(s_h^i, a_h^i, b_h^i)\Big\|_2\Big\}\notag \\
    &\qquad + \min\Big\{H^2,2H\beta_i^{(2)}\Big\|\big[\bSigma_{i,h}^{(0)}\big]^{-1/2}\bphi_{\overline{V}_{i,h+1}}(s_h^i, a_h^i, b_h^i)\Big\|_2\Big\}\bigg]\notag \\
    & = 1, \notag
\end{align}
where the first inequality holds due to Lemma \ref{lemma: VB}, the second inequality holds due to the indicator function, the last equality holds due to the definition of $\overline{\sigma}_{i,h}$. 
Then, by Lemma \ref{lemma:concentration_variance}, with probability at least $1-\delta/(2H)$, $\forall k \leq K$, 
\begin{align}
    \|\bmu_k - \bmu^*\|_{\Zb_i} &\leq 16\sqrt{d\log(1+k/\lambda) \log(8k^2H/\delta)}+ 8\sqrt{d} \log(8k^2H/\delta) + \sqrt{\lambda}\pnorm = \beta_k^{(0)},\label{eq:concentrate:finite:2}
\end{align}
where the  equality uses the definition of $\beta_k^{(0)}$.
Let $\event'$ be the event when
 $\btheta^*_h \in \cap_{k\le K}\overline{\cC}^{(1)}_{k,h} \cap\overline{\cC}^{(2)}_{k,h}$ and
 \eqref{eq:concentrate:finite:2} hold.
By the union bound, $\PP(\event')\ge 1-3\delta/(2H)$.

We now show that $\btheta^*_h \in \overline{\cC}_{k,h}^{(0)}$ holds on $\event'$.
For this note that on $\event'$, 
for any $k \leq K$, $\bmu_k = \overline{\btheta}^{(0)}_{k+1,h}$ and 
for any $i \leq K$, 
\begin{align}
    y_i &= \overline{\sigma}_{i,h}^{-1}\big(\la \btheta^*_h, \bphi_{\overline{V}_{i,h+1}}(s_h^i, a_h^i, b_h^i)\ra + \ind\{\btheta^*_h \in \overline{\cC}^{(1)}_{i,h} \cap \overline{\cC}^{(2)}_{i,h} \}\big[\overline{V}_{i, h+1}(s_{h+1}^i)\notag\\
    &\qquad - \la\bphi_{\overline{V}_{i, h+1}}(s_h^i, a_h^i, b_h^i), \btheta^* \ra\big]\big) \notag \\
    & = \overline{\sigma}_{i,h}^{-1}\overline{V}_{i, h+1}(s_{h+1}^i),\notag
\end{align}
which implies the claim.
Therefore, by the definition  of $\cC_{k,h}^{(0)}$, we get that on $\event'$, $\btheta^*_h \in \cap_{k\leq K} \overline{\cC}^{(0)}_{k,h} \cap \overline{\cC}^{(1)}_{k,h}$. Moreover, $\PP(\event') \geq 1 - 3\delta/(2H)$. 
Finally, taking union bound over $h$
shows that with probability at least $1-3\delta/2$, for all $h\in [H]$,
\begin{align}
\btheta^*_h \in \cap_{k\le K} \overline{\cC}^{(1)}_{k,h} \cap \overline{\cC}^{(2)}_{k,h}
\label{eq:strongerconf}
\end{align}
To finish our proof, it is thus sufficient to show that on the event when \eqref{eq:strongerconf} holds, it also holds
that
\begin{align}
    \big|[\var^{\text{est}}_{k,h}\overline{V}_{k,h+1}](s_h^k, a_h^k,b_h^k) - [\var_h\overline{V}_{k,h+1}](s_h^k, a_h^k,b_h^k)\big| \leq \overline{\error}_{k,h}.\notag
\end{align}
However, by the definition of $\overline{\error}_{k,h}$, this is immediate from substituting \eqref{eq:concentrate:finite:1}, \eqref{eq:concentrate:finite:1.11} into 
Lemma~\ref{lemma: VB}.
\end{proof}

\subsection{Proof of Theorem \ref{mainthm}}
Let the event $\mathcal{E}$ denote the event when the conclusion of Lemma \ref{Thm:inset} holds. Then Lemma \ref{Thm:inset} suggests that $\mathbb{P}(\mathcal{E}) \geq 1-3\delta$. We introduce another two events in the following lemma.
\begin{lemma}\label{lemma:otherevent}
Denote events $\mathcal{E}_{1}$ and $\mathcal{E}_{2}$ as follows
\begin{align*}
\mathcal{E}_{1} &=  \Big\{\forall h'\in [H],  \sum_{k=1}^{K}\sum_{h=h'}^{H}\Big[[\mathbb{P}_{h}\overline{V}_{k,h+1}](s_{h}^{k},a_{h}^{k},b_{h}^{k}) - [\mathbb{P}_{h}\underline{V}_{k,h+1}](s_{h}^{k},a_{h}^{k},b_{h}^{k})\\
&\qquad- \overline{V}_{k,h+1}(s_{h+1}^{k}) + \underline{V}_{k,h+1}(s_{h+1}^{k})\Big] \leq  8H\sqrt{2T
\log(H/\delta)}\Big\}\\
\mathcal{E}_{2} &=  \Big\{\sum_{k=1}^{K}\sum_{h=1}^{H}\mathbb{V}_{h}V_{h+1}^{\mu^{k}}(s_{h}^{k},a_{h}^{k}, b_{h}^{k}) \leq 3(HT + H^{3}\log(1/\delta))\Big\}.
\end{align*}
Then we have $\mathbb{P}(\mathcal{E}_{1})\geq 1-\delta$ and $\mathbb{P}(\mathcal{E}_{2})\geq 1-\delta$.
\end{lemma}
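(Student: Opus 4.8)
The plan is to treat $\mathcal{E}_1$ and $\mathcal{E}_2$ as two separate martingale concentration statements, the first by Azuma--Hoeffding and the second by the law of total variance combined with a Freedman/Bernstein-type inequality. For $\mathcal{E}_1$, first I would use that $s_{h+1}^k \sim \PP_h(\cdot\,|\,s_h^k,a_h^k,b_h^k)$ to rewrite the one-step expectations as a conditional expectation:
\[
[\PP_h\overline{V}_{k,h+1}](s_h^k,a_h^k,b_h^k)-[\PP_h\underline{V}_{k,h+1}](s_h^k,a_h^k,b_h^k) = \EE\big[\overline{V}_{k,h+1}(s_{h+1}^k)-\underline{V}_{k,h+1}(s_{h+1}^k)\,\big|\,\cF_{k,h}\big].
\]
Hence, writing $W_{k,h}:=\overline{V}_{k,h+1}(s_{h+1}^k)-\underline{V}_{k,h+1}(s_{h+1}^k)$, each summand equals the centered quantity $\EE[W_{k,h}\,|\,\cF_{k,h}]-W_{k,h}$, which is a martingale difference with respect to $\{\cF_{k,h}\}$. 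Since $\overline{V},\underline{V}\in[-H,H]$ we have $|W_{k,h}|\le 2H$, so each difference is bounded by $O(H)$. For a fixed $h'$ the index set $\{(k,h):k\le K,\ h\ge h'\}$ has at most $KH=T$ elements, so Azuma--Hoeffding gives a deviation of order $H\sqrt{T\log(1/\delta)}$; a union bound over the $H$ choices of $h'$ turns $\log(1/\delta)$ into $\log(H/\delta)$ and yields the stated bound $8H\sqrt{2T\log(H/\delta)}$, the constant $8$ leaving room for the crude range estimate. This part is routine.

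The substantive part is $\mathcal{E}_2$. Write $J_k:=\sum_{h=1}^H [\var_h V_{h+1}^{\mu^k}](s_h^k,a_h^k,b_h^k)$, where $V^{\mu^k}$ is the \emph{true} value of the executed policy $\mu^k$ (not $\overline{V}$ or $\underline{V}$). The first step is a per-episode expectation bound via the law of total variance. Conditioning on $\cF_{k,1}$ and applying the total-variance decomposition to the Doob martingale of the cumulative reward $R_k=\sum_{h=1}^H r(s_h^k,a_h^k,b_h^k)$, the variance of $R_k$ splits into the sum of one-step transition variances $[\var_h V_{h+1}^{\mu^k}]$ plus the nonnegative one-step action variances of $Q_h^{\mu^k}$ under $\mu^k$. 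Discarding the latter gives $\EE[J_k\,|\,\cF_{k,1}]\le \mathrm{Var}[R_k\,|\,\cF_{k,1}]\le H^2$, since $R_k\in[-H,H]$.

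The second step turns this into a high-probability bound on $\sum_k J_k$. The sequence $\{J_k-\EE[J_k\,|\,\cF_{k,1}]\}_k$ is a martingale difference sequence; each $J_k\in[0,H^3]$ (every term is $\le H^2$ and there are $H$ of them), and $\EE[J_k^2\,|\,\cF_{k,1}]\le H^3\,\EE[J_k\,|\,\cF_{k,1}]$, so the predictable quadratic variation is at most $H^3\sum_k\EE[J_k\,|\,\cF_{k,1}]\le H^3\cdot KH^2$. Freedman's inequality then bounds $\sum_k J_k$ by $\sum_k\EE[J_k\,|\,\cF_{k,1}] + O\big(\sqrt{H^3(\sum_k\EE[J_k\,|\,\cF_{k,1}])\log(1/\delta)}\big)+O(H^3\log(1/\delta))$. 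Using $\sum_k\EE[J_k\,|\,\cF_{k,1}]\le KH^2=HT$ and the AM--GM estimate $\sqrt{H^3\cdot HT\cdot\log(1/\delta)}=\sqrt{(HT)\,(H^3\log(1/\delta))}\le\tfrac12\big(HT+H^3\log(1/\delta)\big)$ folds the middle term into the other two, giving $\sum_{k,h}[\var_h V_{h+1}^{\mu^k}](s_h^k,a_h^k,b_h^k)\le 3\big(HT+H^3\log(1/\delta)\big)$ once Freedman's constants are absorbed into the factor $3$. The main obstacle is exactly this $\mathcal{E}_2$ argument: setting up the total-variance decomposition correctly in the presence of correlated action randomness (the action-variance terms are harmless only because they are nonnegative and can be dropped), and then selecting the concentration inequality and the AM--GM split so the deviation terms collapse precisely into the clean additive form $HT+H^3\log(1/\delta)$. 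The $\mathcal{E}_1$ bound is, by contrast, a direct Azuma--Hoeffding computation.
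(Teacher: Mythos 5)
Your proposal is correct and takes essentially the same route as the paper: for $\mathcal{E}_1$ the paper argues exactly as you do, applying Azuma--Hoeffding to the martingale differences $[\mathbb{P}_{h}\overline{V}_{k,h+1}-\mathbb{P}_{h}\underline{V}_{k,h+1}](s_{h}^{k},a_{h}^{k},b_{h}^{k})-[\overline{V}_{k,h+1}(s_{h+1}^{k})-\underline{V}_{k,h+1}(s_{h+1}^{k})]$ for each fixed $h'$ and then taking a union bound over $h'\in[H]$. For $\mathcal{E}_2$ the paper simply cites Lemma C.5 of \citet{jin2018q} (equivalently Lemma 8 of \citet{azar2017minimax}), and your law-of-total-variance argument (bounding $\EE[J_k\mid\cF_{k,1}]\leq \mathrm{Var}[R_k\mid\cF_{k,1}]\leq H^2$ by discarding the nonnegative action-variance terms, then applying Freedman's inequality with increments bounded by $H^3$ and an AM--GM split) is precisely the proof of that cited lemma, so you have in effect reproved it inline rather than invoking it.
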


We now present three lemmas based on  $\event, \event_1, \event_2$. The following lemma shows that $\overline{Q}$ and $\overline{V}$ provide the good UCB for the best response of the max-player and $\underline{Q}$ and $\underline{V}$ provide the good LCB for the best response of the min-player.

\begin{lemma}\label{Lemma:ULCB}
Suppose the event $\mathcal{E}$ hold, then we have for any s,a,b,k,h following inequalities hold,
\begin{align*}
\underline{Q}_{k,h}(s,a,b) - (H-h+1)\epsilon \leq Q_{h}^{\pi^{k}, *}(s,a,b) \leq Q_{h}^{*, \nu^{k}}(s,a,b) \leq \overline{Q}_{k,h}(s,a,b) + (H-h+1)\epsilon,
\end{align*}
and 
\begin{align*}
\underline{V}_{k,h}(s) -  (H-h+2)\epsilon \leq V_{h}^{\pi^{k},*}(s) \leq  V_{h}^{*,\nu^{k}}(s) \leq \overline{V}_{k,h}(s) + (H-h+2)\epsilon.  
\end{align*}
\end{lemma}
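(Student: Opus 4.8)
The plan is to prove the $Q$-function sandwich by backward induction on $h$, from $h = H+1$ down to $h = 1$, and then derive the $V$-function statement as a direct consequence. At the base case $h = H+1$, all quantities vanish ($\overline{V}_{k,H+1} = \underline{V}_{k,H+1} = 0 = V_{H+1}^{\pi^k,*} = V_{H+1}^{*,\nu^k}$), so the inequalities hold trivially. For the inductive step, I will focus on the upper bound $Q_h^{*,\nu^k}(s,a,b) \leq \overline{Q}_{k,h}(s,a,b) + (H-h+1)\epsilon$; the lower bound for $\underline{Q}$ follows by a symmetric argument, and the middle inequality $Q_h^{\pi^k,*} \leq Q_h^{*,\nu^k}$ is exactly the weak-duality Proposition \ref{prop:weak}.

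The core of the upper-bound step is the optimism of the estimate. On the event $\mathcal{E}$ we know $\btheta_h^* \in \overline{\cC}_{k,h}^{(0)}$, so by the construction of $\overline{Q}_{k,h}$ via $\max_{\btheta \in \overline{\cC}_{k,h}^{(0)}}\langle \btheta, \bphi_{\overline{V}_{k,h+1}}\rangle$ and the closed form in Line \ref{alg:overq}, we get
\begin{align*}
\overline{Q}_{k,h}(s,a,b) \geq \Big[r_h(s,a,b) + \langle \btheta_h^*, \bphi_{\overline{V}_{k,h+1}}(s,a,b)\rangle\Big]_{[-H,H]} = \Big[r_h + [\PP_h\overline{V}_{k,h+1}](s,a,b)\Big]_{[-H,H]},
\end{align*}
using \eqref{eq:expect}. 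The key comparison is then between $[\PP_h \overline{V}_{k,h+1}]$ and $[\PP_h V_{h+1}^{*,\nu^k}]$. Here I would invoke the inductive hypothesis at level $h+1$, namely $V_{h+1}^{*,\nu^k}(s') \leq \overline{V}_{k,h+1}(s') + (H-h+1)\epsilon$, and apply the monotonicity of $\PP_h$ (it is an expectation, hence order-preserving) to push the bound through the transition. Combining with the Bellman optimality equation $Q_h^{*,\nu^k} = r_h + [\PP_h V_{h+1}^{*,\nu^k}]$ and noting that $Q_h^{*,\nu^k}$ already lies in $[-H,H]$ so the projection only helps, the accumulated error telescopes as $(H-h+1)\epsilon + \epsilon = (H-h+2)\epsilon$ at the $V$-level but stays at $(H-h+1)\epsilon$ at the $Q$-level after accounting for the CCE slack.

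For the passage from $Q$ to $V$, I would use the defining property of the $\epsilon$-CCE policy $\mu_h^k$ from Line \ref{alg:CCE}: by the CCE inequality, $\overline{V}_{k,h}(s) = \EE_{(a,b)\sim\mu_h^k}\overline{Q}_{k,h}(s,a,b) \geq \max_{a'}\EE_{b\sim\nu_h^k}\overline{Q}_{k,h}(s,a',b) - \epsilon$, which upper-bounds the best-response value $V_h^{*,\nu^k}(s) = \max_{\pi}\EE_{a\sim\pi,b\sim\nu_h^k}Q_h^{*,\nu^k}(s,a,b)$ after substituting the $Q$-level bound and absorbing one extra $\epsilon$ from the CCE gap. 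The extra additive $\epsilon$ per layer is exactly why the $V$-level error $(H-h+2)\epsilon$ is one step larger than the $Q$-level error. \emph{The main obstacle} I anticipate is handling the interaction between the projection onto $[-H,H]$ and the CCE slack simultaneously — one must verify that the true best-response values remain within $[-H,H]$ so the projection never truncates them in the wrong direction, and that the per-layer $\epsilon$ from the CCE aggregates into exactly the claimed linear-in-$(H-h)$ factor rather than compounding. Keeping the max-player's ``best response to $\nu^k$'' and the min-player's ``best response to $\pi^k$'' bookkeeping consistent through the marginalization $\pi_h^k = \cP_{\max}\mu_h^k$, $\nu_h^k = \cP_{\min}\mu_h^k$ is the delicate part of the argument.
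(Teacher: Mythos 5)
Your proposal is correct and follows essentially the same route as the paper's proof: backward induction on $h$, optimism of $\overline{Q}_{k,h}$ from $\btheta_h^* \in \overline{\cC}_{k,h}^{(0)}$ on the event $\mathcal{E}$ (your max-over-confidence-set phrasing is equivalent to the paper's Cauchy--Schwarz step), the $\epsilon$-CCE property to pass from the $Q$-level to the $V$-level with one extra $\epsilon$ per layer, and the min-max/weak-duality inequality for the middle comparison. The only cosmetic difference is your treatment of the $[-H,H]$ projection via monotonicity and $1$-Lipschitzness, where the paper instead splits into the case $\overline{Q}_{k,h}(s,a,b) \geq H$ and its complement; both are valid.
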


\begin{lemma}\label{Lemma:difference}
Suppose the events $\mathcal{E}\cap \mathcal{E}_{1}$ hold, then we have 
\begin{align*}
\sum_{k=1}^{K}[\overline{V}_{k,1}(s_{k,1}) - \underline{V}_{k,1}(s_{k,1})]&\leq 4\beta^{(0)}_{K}\sqrt{\sum_{k=1}^{K}\sum_{h=1}^{H}\overline{\sigma}_{k,h}^2 + \underline{\sigma}_{k,h}^2}\sqrt{2Hd\log(1+K/\lambda)}\notag\\
&\qquad + 8H\sqrt{2T\log(H/\delta)},
\end{align*}

\begin{align*}
\sum_{k=1}^{K}\sum_{h=1}^{H}\mathbb{P}_{h}[\overline{V}_{k,h+1} - \underline{V}_{k,h+1}](s_{h}^{k},a_{h}^{k},b_{h}^{k})&\leq  4\beta^{(0)}_{K}\sqrt{\sum_{k=1}^{K}\sum_{h=1}^{H}\overline{\sigma}_{k,h}^2+\underline{\sigma}_{k,h}^2}\sqrt{2H^{3}d\log(1+K/\lambda)}\\
&\qquad + 8H^{2}\sqrt{2T\log(H/\delta)},
\end{align*}
\end{lemma}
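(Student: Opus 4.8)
The plan is to telescope a pointwise one-step comparison bound for $\overline{V}_{k,h}-\underline{V}_{k,h}$ along the sampled trajectory, controlling the sampling noise by $\mathcal{E}_1$ and the accumulated exploration bonus by a variance-weighted elliptical-potential argument. \textbf{One-step bound.} On $\mathcal{E}$ we have $\btheta_h^*\in\overline{\cC}_{k,h}^{(0)}\cap\underline{\cC}_{k,h}^{(0)}$, and a backward induction on $h$ (from $\overline{V}_{k,H+1}=\underline{V}_{k,H+1}=0$) gives $\overline{V}_{k,h}\ge\underline{V}_{k,h}$, so the unclipped UCB inside $\overline{Q}_{k,h}$ dominates the unclipped LCB inside $\underline{Q}_{k,h}$. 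Since $x\mapsto[x]_{[-H,H]}$ is nondecreasing and $1$-Lipschitz, the reward terms cancel, and Cauchy–Schwarz together with $[\PP_hV](s,a,b)=\la\btheta_h^*,\bphi_V(s,a,b)\ra$ and $\btheta_h^*\in\overline{\cC}_{k,h}^{(0)}\cap\underline{\cC}_{k,h}^{(0)}$ yields, for all $(s,a,b)$,
\begin{align*}
\overline{Q}_{k,h}(s,a,b)-\underline{Q}_{k,h}(s,a,b)\le [\PP_h(\overline{V}_{k,h+1}-\underline{V}_{k,h+1})](s,a,b)+2\beta_k^{(0)}b_{k,h}(s,a,b),
\end{align*}
where $b_{k,h}(s,a,b):=\|[\overline{\bSigma}_{k,h}^{(0)}]^{-1/2}\bphi_{\overline{V}_{k,h+1}}(s,a,b)\|_2+\|[\underline{\bSigma}_{k,h}^{(0)}]^{-1/2}\bphi_{\underline{V}_{k,h+1}}(s,a,b)\|_2$.

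\textbf{Telescoping (first inequality).} Taking the expectation under $(a,b)\sim\mu_h^k(\cdot,\cdot|s_h^k)$ and writing $W_h^k:=\overline{V}_{k,h}(s_h^k)-\underline{V}_{k,h}(s_h^k)=\EE_{(a,b)\sim\mu_h^k}[\overline{Q}_{k,h}-\underline{Q}_{k,h}](s_h^k,a,b)\ge 0$, I would use that, conditioned on the history before step $h$, $s_{h+1}^k$ is distributed as $\EE_{(a,b)\sim\mu_h^k}\PP_h(\cdot|s_h^k,a,b)$; hence the expected transition term is the conditional mean of $W_{h+1}^k$, giving
\begin{align*}
W_h^k\le W_{h+1}^k+\zeta_h^k+2\beta_k^{(0)}b_{k,h}(s_h^k,a_h^k,b_h^k),
\end{align*}
where $\zeta_h^k$ gathers the martingale differences generated by drawing $(a_h^k,b_h^k)$ and $s_{h+1}^k$ (the gap between the expected-action and the realized-action bonus is also such a difference and is absorbed into $\zeta_h^k$). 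Telescoping over $h$ with $W_{H+1}^k=0$, summing over $k$, and using $\beta_k^{(0)}\le\beta_K^{(0)}$ gives $\sum_k W_1^k\le\sum_{k,h}\zeta_h^k+2\beta_K^{(0)}\sum_{k,h}b_{k,h}(s_h^k,a_h^k,b_h^k)$, and the martingale sum is exactly what $\mathcal{E}_1$ controls by $8H\sqrt{2T\log(H/\delta)}$.

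\textbf{Weighted potential.} For the bonus I would factor out the weight, $\|[\overline{\bSigma}_{k,h}^{(0)}]^{-1/2}\bphi_{\overline{V}_{k,h+1}}(s_h^k,a_h^k,b_h^k)\|_2=\overline{\sigma}_{k,h}\|[\overline{\bSigma}_{k,h}^{(0)}]^{-1/2}(\overline{\sigma}_{k,h}^{-1}\bphi_{\overline{V}_{k,h+1}})(s_h^k,a_h^k,b_h^k)\|_2$, and apply Cauchy–Schwarz over $(k,h)$:
\begin{align*}
\sum_{k,h}\|[\overline{\bSigma}_{k,h}^{(0)}]^{-1/2}\bphi_{\overline{V}_{k,h+1}}(s_h^k,a_h^k,b_h^k)\|_2\le\sqrt{\sum_{k,h}\overline{\sigma}_{k,h}^2}\sqrt{\sum_{k,h}\|[\overline{\bSigma}_{k,h}^{(0)}]^{-1/2}(\overline{\sigma}_{k,h}^{-1}\bphi_{\overline{V}_{k,h+1}})(s_h^k,a_h^k,b_h^k)\|_2^2}.
\end{align*}
The weighted feature $\overline{\sigma}_{k,h}^{-1}\bphi_{\overline{V}_{k,h+1}}$ is precisely the one defining the recursion \eqref{eq:BigSigma0}, so for each fixed $h$ the standard elliptical-potential (log-determinant) lemma bounds the inner sum by $2d\log(1+K/\lambda)$, hence by $2Hd\log(1+K/\lambda)$ over all $h$. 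The same holds for the min-player; combining them with $\sqrt{a}+\sqrt{b}\le 2\sqrt{a+b}$ gives $\sum_{k,h}b_{k,h}\le 2\sqrt{\sum_{k,h}(\overline{\sigma}_{k,h}^2+\underline{\sigma}_{k,h}^2)}\sqrt{2Hd\log(1+K/\lambda)}$, which together with the previous step proves the first inequality.

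\textbf{Second inequality and obstacle.} The second inequality uses the same ingredients but keeps all levels: applying $\mathcal{E}_1$ to $\sum_{k,h}[\PP_h(\overline{V}_{k,h+1}-\underline{V}_{k,h+1})](s_h^k,a_h^k,b_h^k)$ replaces each summand by $W_{h+1}^k$ up to $8H\sqrt{2T\log(H/\delta)}$, and unrolling every $W_{h+1}^k$ via the one-step recursion shows that each bonus/martingale contribution at level $h'$ is counted for all $h\le h'$, i.e.\ at most $H$ times. This extra factor $H$ converts $\sqrt{2Hd\log(1+K/\lambda)}$ into $\sqrt{2H^3d\log(1+K/\lambda)}$ and $8H\sqrt{2T\log(H/\delta)}$ into $8H^2\sqrt{2T\log(H/\delta)}$, matching the claim. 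I expect the main obstacle to be the weighted-potential step: ensuring the variance weights $\overline{\sigma}_{k,h},\underline{\sigma}_{k,h}$ (rather than a crude $H/\sqrt{d}$) survive under the square root, since this is what ultimately separates the final $\sqrt{d^2H^2+dH^3}\sqrt{T}$ rate from a cruder $\sqrt{dH^3T}$ bound, together with the bookkeeping that both the action-sampling and the transition noise are jointly absorbed into $\mathcal{E}_1$.
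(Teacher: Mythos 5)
Your proposal follows the same route as the paper's proof (one-step confidence-set comparison, telescoping along the trajectory, martingale control via $\mathcal{E}_1$, then a variance-weighted Cauchy--Schwarz plus elliptical potential), but the weighted-potential step---which you yourself single out as the main obstacle---has a genuine gap as written. After your Cauchy--Schwarz you must bound
\begin{align*}
\sum_{k=1}^{K}\Big\|\big[\overline{\bSigma}_{k,h}^{(0)}\big]^{-1/2}\,\overline{\sigma}_{k,h}^{-1}\bphi_{\overline{V}_{k,h+1}}(s_h^k,a_h^k,b_h^k)\Big\|_2^{2},
\end{align*}
but the elliptical-potential lemma (Lemma \ref{Lemma:assist2}) bounds only the \emph{truncated} sum $\sum_{k}\min\{1,\|\xb_k\|_{\Ab_{k-1}^{-1}}^2\}$. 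Here the weighted features satisfy only $\|\overline{\sigma}_{k,h}^{-1}\bphi_{\overline{V}_{k,h+1}}\|_2\le\sqrt{d}$ while $\lambda=1/B^2$, so a single summand can be as large as $d/\lambda=dB^2$ (take $k=1$, where $\overline{\bSigma}_{1,h}^{(0)}=\lambda\Ib$); hence the untruncated sum is \emph{not} $O(d\log(1+K/\lambda))$ in general---the best generic bound is worse by a factor of order $dB^2/\log(1+dB^2)$, which would leak into the final regret. The paper inserts the truncation before applying Cauchy--Schwarz: since the clipped $Q$-estimates give $\overline{V}_{k,h}-\underline{V}_{k,h}\le 2H$ and $\PP_h[\overline{V}_{k,h+1}-\underline{V}_{k,h+1}]\ge -2H$, the bonus contribution is at most $\min\{4H,2\beta_k^{(0)}\|\cdot\|_2\}$, and since $\beta_k^{(0)}\overline{\sigma}_{k,h}\ge 8\sqrt{d}\cdot H/\sqrt{d}\ge 2H$, this is at most $2\beta_k^{(0)}\overline{\sigma}_{k,h}\min\{1,\|\cdot\|_2/\overline{\sigma}_{k,h}\}$, which is exactly the form Lemma \ref{Lemma:assist2} accepts. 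Your argument needs this truncation chain; it is not cosmetic.

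A second, smaller mismatch concerns the martingale bookkeeping. Your $\zeta_h^k$ correctly contains the action-sampling noise (the gap between the expectation over $(a,b)\sim\mu_h^k$ and the realized pair $(a_h^k,b_h^k)$, both for the transition term and for the bonus), but you then assert that this sum is ``exactly what $\mathcal{E}_1$ controls.'' It is not: $\mathcal{E}_1$ as defined controls only the transition noise evaluated at the realized actions, and the lemma is conditioned on that specific event; your accounting would require either a redefined $\mathcal{E}_1$ or an additional Azuma event (and absorbing the bonus gap into a bounded martingale again presupposes the truncation above). To be fair, the paper's own write-up silently identifies $\overline{V}_{k,h}(s_h^k)$ with realized-action quantities and so shares this imprecision; but your more explicit decomposition makes the reliance on an event you are not given visible, so you should either restate $\mathcal{E}_1$ with the conditional expectation taken over both $\mu_h^k$ and the transition (the same $8H\sqrt{2T\log(H/\delta)}$ bound holds), or follow the paper and work with realized actions throughout.
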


\begin{lemma}\label{Lemma:VARBOUND}
Suppose the events $\mathcal{E}\cap \mathcal{E}_{2}$ hold, then we have 
\begin{align*}
\sum_{k=1}^{K}\sum_{h=1}^{H}\overline{\sigma}_{k,h}^2 &\leq H^{2}T/d + 3(HT + H^{3}\log(1/\delta)) +  4H\sum_{k=1}^{K}\sum_{h=1}^{H}\mathbb{P}_{h}[\overline{V}_{k,h+1} - V_{h+1}^{\mu^{k}}]\\
&\qquad + 2\beta_{K}^{(2)}\sqrt{T}\sqrt{2dH\log(1+KH^{4}/(d\lambda))} + 7\beta_{K}^{(1)}H^{2}\sqrt{T}\sqrt{2dH\log(1+K/\lambda)}\\
\sum_{k=1}^{K}\sum_{h=1}^{H}\underline{\sigma}_{k,h}^2 &\leq H^{2}T/d + 3(HT + H^{3}\log(1/\delta)) +  4H\sum_{k=1}^{K}\sum_{h=1}^{H}\mathbb{P}_{h}[V_{h+1}^{\mu^{k}} - \underline{V}_{k,h+1}] \\
&\qquad + 2\beta_{K}^{(2)}\sqrt{T}\sqrt{2dH\log(1+KH^{4}/(d\lambda))} + 7\beta_{K}^{(1)}H^{2}\sqrt{T}\sqrt{2dH\log(1+K/\lambda)}\\
\end{align*}
\end{lemma}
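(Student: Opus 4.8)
The plan is to bound $\sum_{k,h}\overline{\sigma}_{k,h}^{2}$ and $\sum_{k,h}\underline{\sigma}_{k,h}^{2}$ separately; by symmetry I only describe the max-player. First I would unfold the definition \eqref{eq:smallsigma} of $\overline{\sigma}_{k,h}$. On the event $\mathcal{E}$, Lemma \ref{Thm:inset} gives both $\mathbb{V}^{\text{est}}\overline{V}_{k,h+1}\le \mathbb{V}_{h}\overline{V}_{k,h+1}+\overline{E}_{k,h}$ and $\mathbb{V}^{\text{est}}\overline{V}_{k,h+1}+\overline{E}_{k,h}\ge \mathbb{V}_{h}\overline{V}_{k,h+1}\ge 0$, so the quantity inside the outer $\max$ is nonnegative and $\overline{\sigma}_{k,h}^{2}=\max\{H^{2}/d,\ \mathbb{V}^{\text{est}}\overline{V}_{k,h+1}+\overline{E}_{k,h}\}\le H^{2}/d+\mathbb{V}_{h}\overline{V}_{k,h+1}(s_{h}^{k},a_{h}^{k},b_{h}^{k})+2\overline{E}_{k,h}$. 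Summing over $k\le K$, $h\le H$ and using $\sum_{k,h}H^{2}/d=H^{2}T/d$, the task reduces to controlling the \emph{true} variance sum $\sum_{k,h}\mathbb{V}_{h}\overline{V}_{k,h+1}$ and the offset sum $\sum_{k,h}\overline{E}_{k,h}$.

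For the variance sum I would transfer from the optimistic value $\overline{V}_{k,h+1}$ to the realized value $V_{h+1}^{\mu^{k}}$, which is exactly what $\mathcal{E}_{2}$ controls. The elementary bound $|\mathbb{V}_{h}W_{1}-\mathbb{V}_{h}W_{2}|\le 4H\,\mathbb{P}_{h}|W_{1}-W_{2}|$ for $\|W_{i}\|_{\infty}\le H$ gives $\mathbb{V}_{h}\overline{V}_{k,h+1}\le \mathbb{V}_{h}V_{h+1}^{\mu^{k}}+4H\,\mathbb{P}_{h}|\overline{V}_{k,h+1}-V_{h+1}^{\mu^{k}}|$. The step that makes this match the stated bound (which has no absolute value) is the pointwise optimism $\overline{V}_{k,h+1}\ge V_{h+1}^{\mu^{k}}$, proved by backward induction on $h$: on $\mathcal{E}$ we have $\btheta_{h}^{*}\in\overline{\cC}_{k,h}^{(0)}$, so the exploration bonus forces $\overline{Q}_{k,h}\ge r_{h}+\mathbb{P}_{h}\overline{V}_{k,h+1}\ge r_{h}+\mathbb{P}_{h}V_{h+1}^{\mu^{k}}=:Q_{h}^{\mu^{k}}$ before the projection, and since the projection onto $[-H,H]$ is monotone and fixes $Q_{h}^{\mu^{k}}\in[-H,H]$, taking the common expectation under $\mu_{h}^{k}$ yields $\overline{V}_{k,h}\ge V_{h}^{\mu^{k}}$. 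Thus the absolute value drops, and summing together with $\mathcal{E}_{2}$ gives $\sum_{k,h}\mathbb{V}_{h}\overline{V}_{k,h+1}\le 3(HT+H^{3}\log(1/\delta))+4H\sum_{k,h}\mathbb{P}_{h}[\overline{V}_{k,h+1}-V_{h+1}^{\mu^{k}}]$.

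It then remains to bound $\sum_{k,h}\overline{E}_{k,h}$, i.e. the two $\min\{H^{2},\cdot\}$ terms of \eqref{eq:Eesptimator}. For the $\overline{\bSigma}^{(1)}$ term I would use $\beta_{k}^{(1)}\ge H^{2}$ to replace it by $\beta_{K}^{(1)}\min\{1,\|[\overline{\bSigma}_{k,h}^{(1)}]^{-1/2}\bphi_{\overline{V}_{k,h+1}^{2}}\|_{2}\}$, apply Cauchy--Schwarz over the $T=KH$ indices, and invoke the elliptical-potential (determinant-trace) lemma; since $\|\bphi_{\overline{V}_{k,h+1}^{2}}\|_{2}\le H^{2}$ the potential is $\le 2dH\log(1+KH^{4}/(d\lambda))$, producing a term of order $\beta_{K}^{(1)}\sqrt{T}\sqrt{2dH\log(1+KH^{4}/(d\lambda))}$. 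For the $\overline{\bSigma}^{(0)}$ term, whose covariance carries the $\overline{\sigma}^{-2}$ weights, I would rewrite $\|[\overline{\bSigma}_{k,h}^{(0)}]^{-1/2}\bphi_{\overline{V}_{k,h+1}}\|_{2}=\overline{\sigma}_{k,h}\,\|[\overline{\bSigma}_{k,h}^{(0)}]^{-1/2}(\overline{\sigma}_{k,h}^{-1}\bphi_{\overline{V}_{k,h+1}})\|_{2}$, use $\overline{\sigma}_{k,h}\ge H/\sqrt{d}$ and $\beta_{K}^{(2)}\ge\sqrt{d}$ to collapse the outer $\min$ to a $\min\{1,\cdot\}$, and then Cauchy--Schwarz splitting off $\sqrt{\sum_{k,h}\overline{\sigma}_{k,h}^{2}}$. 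Since every $\overline{\sigma}_{k,h}^{2}\le 3H^{2}$ (because $\mathbb{V}^{\text{est}}\le H^{2}$ and $\overline{E}_{k,h}\le 2H^{2}$), this factor is crudely $\le\sqrt{3H^{2}T}$, and the potential over the weighted feature $\overline{\sigma}_{k,h}^{-1}\bphi_{\overline{V}_{k,h+1}}$ (norm $\le\sqrt{d}$) is $\le 2dH\log(1+K/\lambda)$, giving a term of order $H^{2}\beta_{K}^{(2)}\sqrt{T}\sqrt{2dH\log(1+K/\lambda)}$. Collecting these with the factor $2$ reproduces the last two terms of the claim.

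I expect the main obstacle to be this last offset step: correctly extracting the $\overline{\sigma}_{k,h}$ factors from the $\overline{\sigma}^{-2}$-weighted matrix $\overline{\bSigma}^{(0)}$ so that Cauchy--Schwarz produces the crude $\sqrt{\sum_{k,h}\overline{\sigma}_{k,h}^{2}}\le O(H\sqrt{T})$ alongside a genuine elliptical potential, and verifying the thresholds $\beta_{k}^{(1)}\ge H^{2}$, $\beta_{k}^{(2)}\ge\sqrt{d}$, $\overline{\sigma}_{k,h}\ge H/\sqrt{d}$ that collapse each $\min\{H^{2},\cdot\}$ to $\min\{1,\cdot\}$. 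In this computation the $\overline{\bSigma}^{(1)}$ term carries the factor $\log(1+KH^{4}/(d\lambda))$ (its feature has norm up to $H^{2}$) and the $\overline{\bSigma}^{(0)}$ term the factor $H^{2}\log(1+K/\lambda)$, which together reproduce the two displayed offset terms up to the labelling of the $\beta^{(\cdot)}$ constants, so some care with that superscript bookkeeping is warranted.
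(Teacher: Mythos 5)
Your proof is correct and is essentially the paper's own argument: the same splitting of $\overline{\sigma}_{k,h}^{2}$ into $H^{2}/d$ plus the true variance plus (twice) the offset $\overline{E}_{k,h}$, the same transfer $\mathbb{V}_{h}\overline{V}_{k,h+1}\le \mathbb{V}_{h}V_{h+1}^{\mu^{k}}+4H\,\mathbb{P}_{h}[\overline{V}_{k,h+1}-V_{h+1}^{\mu^{k}}]$ justified by the pointwise optimism $\overline{V}_{k,h+1}\ge V_{h+1}^{\mu^{k}}$ (the paper's Lemma \ref{Lemma:ULCB2}, proved by exactly the backward induction you describe) together with event $\mathcal{E}_{2}$, and the same treatment of $\sum_{k,h}\overline{E}_{k,h}$ by collapsing each $\min\{H^{2},\cdot\}$ to $\min\{1,\cdot\}$ via $\beta_{k}^{(1)}\ge H^{2}$ and $\beta_{k}^{(2)}\overline{\sigma}_{k,h}\ge H$, Cauchy--Schwarz with the crude bound $\overline{\sigma}_{k,h}^{2}\le 3H^{2}$, and the elliptical potential lemma (Lemma \ref{Lemma:assist2}). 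Your closing caveat about the $\beta^{(1)}/\beta^{(2)}$ bookkeeping is also accurate: the derivation naturally attaches $\beta_{K}^{(1)}$ to $\sqrt{2dH\log(1+KH^{4}/(d\lambda))}$ and $\beta_{K}^{(2)}H^{2}$ to $\sqrt{2dH\log(1+K/\lambda)}$, exactly as the paper's own intermediate display does before its final line (and the lemma statement) silently swaps the two superscripts --- a typo that is immaterial downstream, since the main theorem's use of this lemma is consistent with your (correct) assignment of the constants.
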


With all these lemmas, we can now give the proof of Theorem~\ref{mainthm}.
\begin{proof}[Proof of Theorem \ref{mainthm}]
By definition of Regret we have that 
\begin{align}\label{Mainregert}
\text{Regret}(K) &= \sum_{k=1}^{K} V_{1}^{*, \nu^{k}}(s_{1}^{k})  - \sum_{k=1}^{K} V_{1}^{\pi^{k},*}(s_{1}^{k})\notag\\
&\leq  \sum_{k=1}^{K} \overline{V}_{k,1}(s_{k,1}) - \sum_{k=1}^{K} \underline{V}_{k,1}(s_{k,1}) + 4KH\epsilon\notag\\
&\leq  4\beta^{(0)}_{K}\sqrt{\sum_{k=1}^{K}\sum_{h=1}^{H}\overline{\sigma}_{k,h}^2 + \underline{\sigma}_{k,h}^2}\sqrt{2Hd\log(1+K/\lambda)}\notag + 8H\sqrt{2T\log(H/\delta)}+4KH\epsilon\\
&= \tilde O\bigg(d\sqrt{H}\sqrt{\sum_{k=1}^{K}\sum_{h=1}^{H}\overline{\sigma}_{k,h}^2 + \underline{\sigma}_{k,h}^2} + H\sqrt{T}\bigg),
\end{align}
where the first inequality is by Lemma \ref{Lemma:ULCB}, the second inequality is by the bound of accumulated difference between the UCB and LCB in Lemma \ref{Lemma:difference}, the last inequality is due to  $\epsilon = O(H/\sqrt{T})$, $\lambda = 1/B^{2}$ and the choice of  $\beta^{(0)}_{K} = \tilde O(\sqrt{d})$ in Lemma \ref{Thm:inset}.

Now we bound $\sum_{k=1}^{K}\sum_{h=1}^{H}\overline{\sigma}_{k,h}^2 + \underline{\sigma}_{k,h}^2$, 
\begin{align}\label{aggregatebound}
&\sum_{k=1}^{K}\sum_{h=1}^{H}\overline{\sigma}_{k,h}^2 + \underline{\sigma}_{k,h}^2\\
&\leq 2H^{2}T/d + 6(HT + H^{3}\log(1/\delta))  + 4H\sum_{k=1}^{K}\sum_{h=1}^{H}\mathbb{P}_{h}[\overline{V}_{k,h+1} - \underline{V}_{k,h+1}]\notag\\
&\qquad + 4\beta_{K}^{(2)}\sqrt{T}\sqrt{2dH\log(1+KH^{4}/(d\lambda))} + 14\beta_{K}^{(1)}H^{2}\sqrt{T}\sqrt{2dH\log(1+K/\lambda)}\notag\\
&\leq 2H^{2}T/d + 6(HT + H^{3}\log(1/\delta)) \notag\\
&\qquad+ 4H\bigg(4\beta^{(0)}_{K}\sqrt{\sum_{k=1}^{K}\sum_{h=1}^{H}\overline{\sigma}_{k,h}^2+\underline{\sigma}_{k,h}^2}\sqrt{2H^{3}d\log(1+K/\lambda)}+ 8H^{2}\sqrt{2T\log(H/\delta)}\bigg)\notag\\
&\qquad + 4\beta_{K}^{(2)}\sqrt{T}\sqrt{2dH\log(1+KH^{4}/(d\lambda))} + 14\beta_{K}^{(1)}H^{2}\sqrt{T}\sqrt{2dH\log(1+K/\lambda)}\notag\\
&= \tilde O\bigg(\sqrt{\sum_{k=1}^{K}\sum_{h=1}^{H}\overline{\sigma}_{k,h}^2+\underline{\sigma}_{k,h}^2}\sqrt{d^{2}H^{5}} + H^{2}T/d + TH + \sqrt{T}d^{1.5}H^{2.5} + H^{3}\sqrt{T}\bigg)
\end{align}
where the first inequality is by Lemma \ref{Lemma:VARBOUND}, the second inequality is by Lemma \ref{Lemma:difference} and the last inequality is due to the choice of  $\beta_{K}^{(0)} = \tilde{O}(\sqrt{d})$ in Lemma \ref{Thm:inset}, $\lambda = 1/B^{2}$,
\begin{align*}
\beta_{K}^{(1)} &= 16\sqrt{dH^{4}\log(1 + kH^{4}/d\lambda)\log(8k^{2}H/\delta)} + 8H^{2}\log(8k^{2}H/\delta) + \sqrt{\lambda}B =\tilde O(dH^{2})\\
\beta_{K}^{(2)} &= 16d\sqrt{\log(1 + k/\lambda)\log(8k^{2}H/\delta)} + 8\sqrt{d}\log(8k^{2}H/\delta) + \sqrt{\lambda}B =\tilde O(d)
.
\end{align*}
Therefore by the fact that $x\leq a\sqrt{x} + b \Rightarrow x\leq 2a^{2}+b$, \eqref{aggregatebound} suggests that 
\begin{align}\label{aggregatebound2}
\sum_{k=1}^{K}\sum_{h=1}^{H}\overline{\sigma}_{k,h}^2 + \underline{\sigma}_{k,h}^2 &= \tilde{O}(d^{2}H^{5} + H^{2}T/d + TH + \sqrt{T}d^{1.5}H^{2.5} + H^{3}\sqrt{T})\notag\\
&= \tilde{O}(d^{2}H^{5} + d^{4}H^{3} + TH + H^{2}T/d),
\end{align}
where the inequality holds by $\sqrt{T}d^{1.5}H^{2.5}\leq (TH^{2}/4d+ d^{4}H^{3})/2$ and $ H^{3}\sqrt{T} \leq (d^{2}H^{5}+H^{2}T/d)/2$. Plugging \eqref{aggregatebound2} into \eqref{Mainregert} we have 
\begin{align*}
\text{Regret}(M_{\btheta^{*}}, K) &=\tilde{\cO}(\sqrt{d^{2}H^{2} + dH^{3}}\sqrt{T} + d^{2}H^{3} + d^{3}H^{2}),
\end{align*}
which finishes the proof.
\end{proof}

\subsection{Proof of Theorem \ref{theorem:lowerbound}}
\begin{proof}[Proof of Theorem \ref{theorem:lowerbound}]
For any algorithm, we need to construct a hard-to-learn episodic,B-bounded linear mixture Markov game. We make the min-player dummy: the action of the min-player won't affect the transition ability or reward function. So there exists $\tilde{\PP}_{h}(\cdot|\cdot,\cdot)$ and $\tilde{r_{h}}(\cdot,\cdot)$ such that for any state-action-action-state pair $s',a,b,s$ we have that $\PP_{h}(s'|s,a,b)=\tilde{\PP}_{h}(s'|s,a)$ and $r_{h}(s,a,b) = \tilde{r_{h}}(s,a)$. Thus we can get a new MDP $\tilde{M}(\cS, \cA_{\max}, H, \{\tilde{r}_{h}\}, \{\tilde{\PP}_{h}\})$. We further have $V_{h}^{\pi,*}(s) =\tilde{V}_{h}^{\pi}(s)$ and $V_{h}^{*,\nu}(s) =\tilde{V}_{h}^{*}(s)$. The regret of two-player game can be reduced to the standard regret for single agent reinforcement learning setting. In particular,
\begin{align*}
\mathrm{Regret}(M_{\btheta^{*}},K) &= \sum_{k=1}^{K} V_{1}^{*, \nu^{k}}(s_{1}^{k})  - \sum_{k=1}^{K} V_{1}^{\pi^{k},*}(s_{1}^{k})\\
& = \sum_{k=1}^{K} \tilde{V}_{1}^{*}(s_{1}^{k})  - \sum_{k=1}^{K} \tilde{V}_{1}^{\pi^{k}}(s_{1}^{k}).
\end{align*}
Notice that $\tilde{r}_{h} \in [-1,1]$ rather than $[0,1]$, we can shift the reward by $(1+\tilde{r}_{h})/2$ to make it standard if necessary. Now
recall the Theorem 5.6 in \citet{zhou2020nearly}],
    there exists an episodic, $B$-bounded linear mixture MDP $\tilde{M}(\cS, \cA_{\max}, H, \{\tilde{r}_{h}\}, \{\tilde{\PP}_{h}\})$ with feature $\tilde{\bphi}(\cdot,\cdot)$ parameterized by $\bTheta=(\btheta_1,\dots, \btheta_H)$ such that the expected regret is lower bounded as follows: 
\begin{align}
     \EE_{\bTheta}\text{Regret}\big( \tilde{M}_{\bTheta}, K\big) \geq \Omega\big(dH\sqrt{T}\big),\notag
\end{align}
where $T= KH$ and $\EE_{\bTheta}$ denotes the expectation over 
the probability distribution generated by the interconnection of the algorithm and the MDP.

Now we only need to extend the MDP feature $\tilde{\bphi}(\cdot| \cdot, \cdot)$ to the Markov game feature $\bphi(\cdot| \cdot, \cdot, \cdot)$. In particular, we set
\begin{align*}
\bphi(s'|s,a,b) = \tilde{\bphi}(s'|s,a), \forall s' \in \cS,s \in \cS,a \in \cA_{\max},b \in \cA_{\min}, 
\end{align*}
then we know that $\bphi(\cdot|\cdot, \cdot,\cdot)$ satisfies \eqref{mapping} because by the definition of linear mixture MDP in \citet{zhou2020nearly}, we know that $\tilde{\bphi}(\cdot|\cdot,\cdot)$ satisfies for any bounded function $V:\cS \rightarrow [0,1]$,
\begin{align*}
\|\tilde{\bphi}_{V}(s,a)\|_{2} \leq 1,
\end{align*}
where $\tilde\bphi_V(s,a) = \sum_{s' \in \cS}\tilde\bphi(s'|s,a)V(s').$
\end{proof}

\section{Proof of Lemmas in Appendix \ref{sec:AppendixA}}\label{sec:AppendixB}
\subsection{Proof of Lemma \ref{lemma: VB}}
\begin{proof}[Proof of Lemma \ref{lemma: VB}]
For simplicity, we only prove the results for the max-player.

By the triangle inequality we have that
\begin{align}
&|\mathbb{V}^{\text{est}}\overline{V}_{k,h+1}(s_{h}^{k},a_{h}^{k},b_{h}^{k}) -   \mathbb{V}\overline{V}_{k,h+1}(s_{h}^{k},a_{h}^{k},b_{h}^{k})|\notag\\
&\leq \underbrace{\Big|\langle\bphi_{\overline{V}^{2}_{k,h+1}}(s_{h}^{k},a_{h}^{k},b_{h}^{k}), \btheta^{*}_{h}\rangle  - \big[\langle\bphi_{\overline{V}^{2}_{k,h+1}}(s_{h}^{k},a_{h}^{k},b_{h}^{k}), \overline{\btheta}^{(1)}_{k,h}\rangle\big]_{[0,H^{2}]} \Big|}_{I_{1}}\notag\\
&\qquad + \underbrace{\Big| (\langle\bphi_{\overline{V}_{k,h+1}}(s_{h}^{k},a_{h}^{k},b_{h}^{k}), \btheta^{*}_{h}\rangle)^{2}-  
\big[\langle\bphi_{\overline{V}_{k,h+1}}(s_{h}^{k},a_{h}^{k},b_{h}^{k}), \overline{\btheta}^{(0)}_{k,h}\rangle\big]^{2}_{[-H,H]}\Big|}_{I_{2}} \label{eq:inSet-1}.
\end{align}
We first bound $I_{1}$. Because $\langle\bphi_{\overline{V}^{2}_{k,h+1}}(s_{h}^{k},a_{h}^{k},b_{h}^{k}), \btheta_{h}^{*}\rangle \in [0,H^{2}]$, we have that 
\begin{align*}
I_{1} &\leq \Big| \langle\bphi_{\overline{V}^{2}_{k,h+1}}(s_{h}^{k},a_{h}^{k},b_{h}^{k}), \btheta^{*}_{h}\rangle - \langle\bphi_{\overline{V}^{2}_{k,h+1}}(s_{h}^{k},a_{h}^{k},b_{h}^{k}), \overline{\btheta}^{(1)}_{k,h}\rangle \Big|\\
&\leq \Big\|\big[\overline{\bSigma}^{(1)}_{k,h}\big]^{-1/2}\bphi_{\overline{V}^{2}_{k,h+1}}(s_{h}^{k},a_{h}^{k},b_{h}^{k})\Big\|_{2}\Big\|\big[\overline{\bSigma}^{(1)}_{k,h}\big]^{1/2}(\overline{\btheta}^{(1)}_{k,h} - \btheta^{*}_{h})\Big\|_{2},
\end{align*}
where the first inequality is by the property of projection, the second inequality holds due to Cauchy-Schwarz.
We also have that $I_{1} \leq H^{2}$ since both terms in $I_{1}$ belongs to the interval $[0, H^2]$, so we have that \begin{align}
I_{1} &\leq \min\Big\{H^{2}, \Big\|\big[\overline{\bSigma}^{(1)}_{k,h}\big]^{-1/2}\bphi_{\overline{V}^{2}_{k,h+1}}(s_{h}^{k},a_{h}^{k},b_{h}^{k})\Big\|_{2}\Big\|\big[\overline{\bSigma}^{(1)}_{k,h}\big]^{1/2}(\overline{\btheta}^{(1)}_{k,h} - \btheta^{*}_{h})\Big\|_{2}\Big\}\label{eq:inSet-2},
\end{align}
For the term $I_{2}$,
\begin{align*}
I_{2} &= \Big| \langle\bphi_{\overline{V}_{k,h+1}}(s_{h}^{k},a_{h}^{k},b_{h}^{k}), \btheta^{*}_{h}\rangle-  
\big[\langle\bphi_{\overline{V}_{k,h+1}}(s_{h}^{k},a_{h}^{k},b_{h}^{k}), \overline{\btheta}^{(0)}_{k,h}\rangle\big]_{[-H,H]}\Big|\\
&\qquad \cdot\Big| \langle\bphi_{\overline{V}_{k,h+1}}(s_{h}^{k},a_{h}^{k},b_{h}^{k}), \btheta^{*}_{h}\rangle +  
\big[\langle\bphi_{\overline{V}_{k,h+1}}(s_{h}^{k},a_{h}^{k},b_{h}^{k}), \overline{\btheta}^{(0)}_{k,h}\rangle\big]_{[-H,H]}\Big|\\
&\leq 2H\Big| \langle\bphi_{\overline{V}_{k,h+1}}(s_{h}^{k},a_{h}^{k},b_{h}^{k}), \btheta^{*}_{h}\rangle-  
\langle\bphi_{\overline{V}_{k,h+1}}(s_{h}^{k},a_{h}^{k},b_{h}^{k}), \overline{\btheta}^{(0)}_{k,h}\rangle\Big|\\
&\leq 2H\Big\|\big[\overline{\bSigma}^{(0)}_{k,h}\big]^{-1/2}\bphi_{\overline{V}^{2}_{k,h+1}}(s_{h}^{k},a_{h}^{k},b_{h}^{k})\Big\|_{2}\Big\|\big[\overline{\bSigma}^{(0)}_{k,h}\big]^{1/2}(\overline{\btheta}^{(0)}_{k,h} - \btheta^{*}_{h})\Big\|_{2},
\end{align*}
where the first inequality holds since both terms in this line lies in $[-H,H]$, the second inequality holds since the Cauchy-Schwarz inequality. We also have that $I_{2} \leq H^{2}$, so we have that
\begin{align}
I_{2} \leq \min\Big\{H^{2}, 2H\Big\|\bSigma^{(0)-1/2}_{k,h}\bphi_{\overline{V}^{2}_{k,h+1}}(s_{h}^{k},a_{h}^{k},b_{h}^{k})\Big\|_{2}\Big\|\big[\overline{\bSigma}^{(0)}_{k,h}\big]^{1/2}(\overline{\btheta}^{(0)}_{k,h} - \btheta^{*}_{h})\Big\|_{2}\Big\}\label{eq:inSet-3}. 
\end{align}
Plugging \eqref{eq:inSet-3} and \eqref{eq:inSet-2} into \eqref{eq:inSet-1} gets 
\begin{align*}
&|\mathbb{V}^{\text{est}}\overline{V}_{k,h+1}(s_{h}^{k},a_{h}^{k},b_{h}^{k}) -   \mathbb{V}\overline{V}_{k,h+1}(s_{h}^{k},a_{h}^{k},b_{h}^{k})|\\
&\leq \min\Big\{H^{2}, \Big\|\big[\overline{\bSigma}^{(1)}_{k,h}\big]^{-1/2}\bphi_{\overline{V}^{2}_{k,h+1}}(s_{h}^{k},a_{h}^{k},b_{h}^{k})\Big\|_{2}\Big\|\big[\overline{\bSigma}^{(1)}_{k,h}\big]^{1/2}(\overline{\btheta}^{(1)}_{k,h} - \btheta^{*}_{h})\Big\|_{2}\Big\}\\
&\qquad + \min\Big\{H^{2}, 2H\Big\|\big[\overline{\bSigma}^{(0)}_{k,h}\big]^{-1/2}\bphi_{\overline{V}^{2}_{k,h+1}}(s_{h}^{k},a_{h}^{k},b_{h}^{k})\Big\|_{2}\Big\|\big[\overline{\bSigma}^{(0)}_{k,h}\big]^{1/2}(\overline{\btheta}^{(0)}_{k,h} - \btheta^{*}_{h})\Big\|_{2}\Big\}.
\end{align*}
\end{proof}

\subsection{Proof of Lemma \ref{lemma:otherevent}}
We first present the Azuma-Hoeffding inequality: \begin{lemma}[Azuma-Hoeffding inequality, \citealt{azuma1967weighted}]\label{lemma:azuma} 
Let $M>0$ be a constant.
Let $\{x_i\}_{i=1}^n$ be a martingale difference sequence with respect to a filtration $\{\cG_{i}\}_i$ 
($\EE[x_i|\cG_i]=0$ a.s. and $x_i$ is $\cG_{i+1}$-measurable) such that
for all $i\in [n]$, $|x_i| \leq M$ 
holds almost surely. 
Then, for any $0<\delta<1$, with probability at least $1-\delta$, we have 
\begin{align}
    \sum_{i=1}^n x_i\leq M\sqrt{2n \log (1/\delta)}.\notag
\end{align} 
\end{lemma}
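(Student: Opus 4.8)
The plan is to use the exponential moment method (Chernoff bounding) combined with Hoeffding's lemma and sequential conditioning along the filtration. First I would fix $s>0$ and apply Markov's inequality to the nonnegative random variable $\exp(s\sum_{i=1}^n x_i)$, which gives $\PP(\sum_{i=1}^n x_i \ge t) \le e^{-st}\,\EE[\exp(s\sum_{i=1}^n x_i)]$ for any $t>0$. The whole proof then reduces to controlling the moment generating function of the sum and optimizing the two free parameters $s$ and $t$.

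The core step is to bound $\EE[\exp(s\sum_{i=1}^n x_i)]$. Here I would invoke Hoeffding's lemma: any zero-mean random variable taking values in an interval of length $\ell$ has moment generating function at most $\exp(s^2\ell^2/8)$. Since $\EE[x_i\mid\cG_i]=0$ and $|x_i|\le M$ almost surely (so $x_i$ lies in an interval of length $2M$), applying Hoeffding's lemma \emph{conditionally} on $\cG_i$ yields $\EE[\exp(sx_i)\mid\cG_i]\le\exp(s^2M^2/2)$ almost surely, for every $i\in[n]$.

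Next I would peel off the summands one at a time using the tower property. Because $x_1,\dots,x_{n-1}$ are each $\cG_n$-measurable, the partial sum $\sum_{i=1}^{n-1}x_i$ is $\cG_n$-measurable, and hence $\EE[\exp(s\sum_{i=1}^n x_i)] = \EE[\exp(s\sum_{i=1}^{n-1}x_i)\,\EE[\exp(sx_n)\mid\cG_n]] \le \exp(s^2M^2/2)\,\EE[\exp(s\sum_{i=1}^{n-1}x_i)]$. Iterating this inequality $n$ times gives $\EE[\exp(s\sum_{i=1}^n x_i)]\le\exp(ns^2M^2/2)$, and combining with the Chernoff bound from the first step produces $\PP(\sum_{i=1}^n x_i\ge t)\le\exp(-st+ns^2M^2/2)$.

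Finally I would optimize over the free parameter $s$. Minimizing $-st+ns^2M^2/2$ over $s>0$ at $s=t/(nM^2)$ produces the sub-Gaussian tail $\PP(\sum_{i=1}^n x_i\ge t)\le\exp(-t^2/(2nM^2))$. Setting this equal to $\delta$ and solving for the threshold gives $t=M\sqrt{2n\log(1/\delta)}$, which is exactly the stated bound. Since this is a classical argument, there is no genuine obstacle; the only point requiring care is the sequential conditioning in the third step, where the $\cG_n$-measurability of the partial sums is precisely what lets the conditional moment generating functions multiply without any independence assumption.
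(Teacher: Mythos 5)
Your proof is correct: the Chernoff bound, the conditional Hoeffding lemma giving $\EE[\exp(sx_i)\mid\cG_i]\le\exp(s^2M^2/2)$, the tower-property peeling (justified by the $\cG_n$-measurability of the partial sums under the paper's convention that $x_i$ is $\cG_{i+1}$-measurable), and the optimization $s=t/(nM^2)$ yielding $t=M\sqrt{2n\log(1/\delta)}$ are all sound. The paper itself gives no proof of this lemma --- it is stated as a classical result with a citation to Azuma (1967) --- and your argument is exactly the standard one that citation stands in for, so there is nothing to compare beyond noting that you have correctly supplied the omitted details.
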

\begin{proof}[Proof of Lemma \ref{lemma:otherevent}]
To prove $\PP(\event_{1})\geq 1- \delta$, we apply the Azuma-Hoeffding inequality (Lemma \ref{lemma:azuma}). Fix $h'\in H$, set $x_{k,h} = [\mathbb{P}_{h}\overline{V}_{k,h+1}](s_{h}^{k},a_{h}^{k},b_{h}^{k}) - [\mathbb{P}_{h}\underline{V}_{k,h+1}](s_{h}^{k},a_{h}^{k},b_{h}^{k}) - [\overline{V}_{k,h+1}(s_{h+1}^{k}) - \underline{V}_{k,h+1}(s_{h+1}^{k})]$. $x_{1,h'},\ldots,x_{1,H},x_{2,h'},\ldots,x_{2,H},\ldots,x_{K,h'},\ldots,x_{K,H}$ forms a martingale difference sequence of which the absolute value is bounded by 8H and length no greater than $T=KH$. Thus with probability at least $1-\delta/H$, we have
\begin{align*}
&\sum_{k=1}^{K}\sum_{h=h'}^{H}\Big[[\mathbb{P}_{h}\overline{V}_{k,h+1}](s_{h}^{k},a_{h}^{k},b_{h}^{k}) - [\mathbb{P}_{h}\underline{V}_{k,h+1}](s_{h}^{k},a_{h}^{k},b_{h}^{k}) - \overline{V}_{k,h+1}(s_{h+1}^{k}) + \underline{V}_{k,h+1}(s_{h+1}^{k})\Big]\\
&\leq  8H\sqrt{2T
\log(H/\delta)}.
\end{align*}
Take union bound for $h'\in [H]$, we get $\PP(\event_{1})\geq 1-\delta$.

$\mathbb{P}(\mathcal{E}_{2})\geq 1-\delta$ holds due to the Lemma C.5 in \citet{jin2018q} or Lemma 8 in \citet{azar2017minimax}.
\end{proof}
\subsection{Proof of Lemma \ref{Lemma:ULCB}}

Following Lemma directly from the definition of $\epsilon$-CCE,
\begin{lemma}\label{lemma:2CCE}
For each $(k,h,s)$, $\mu_{h}^{k}(\cdot,\cdot|s), \pi_{h}^{k}(\cdot|s), \nu_{h}^{k}(\cdot|s)$ satisfy that 
\begin{align*}
&\EE_{(a,b)\sim \mu_{h}^{k}(\cdot,\cdot|s)}\big[\overline{Q}_{k,h}(s,a,b)\big]\geq \EE_{b\sim \nu_{h}^{k}(s)}\big[\overline{Q}_{k,h}(s,a',b)\big] - \epsilon, \forall a'\in \cA_{\max}\\
&\EE_{(a,b)\sim \mu_{h}^{k}(\cdot,\cdot|s)}\big[\underline{Q}_{k,h}(s,a,b)\big]\leq \EE_{a\sim \pi_{h}^{k}(s)}\big[\underline{Q}_{k,h}(s,a,b')\big] - \epsilon, \forall b'\in \cA_{\min}
\end{align*}
\end{lemma}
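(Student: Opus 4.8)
The plan is to observe that the two inequalities are nothing more than the defining conditions of an $\epsilon$-CCE, specialized to the payoff matrices that the algorithm actually feeds into the CCE routine. First I would recall from Line \ref{alg:CCE} of Algorithm \ref{alg:Offline} that $\mu_{h}^{k}(\cdot,\cdot|s)$ is, by construction, an $\epsilon$-CCE of the pair $\big(\overline{Q}_{k,h}(s,\cdot,\cdot), \underline{Q}_{k,h}(s,\cdot,\cdot)\big)$, and that the marginals are fixed as $\pi_{h}^{k}(\cdot|s) = \cP_{\max}\mu_{h}^{k}(\cdot,\cdot|s)$ and $\nu_{h}^{k}(\cdot|s) = \cP_{\min}\mu_{h}^{k}(\cdot,\cdot|s)$. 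With this identification, the entire proof reduces to substituting $Q_{\max} = \overline{Q}_{k,h}(s,\cdot,\cdot)$, $Q_{\min} = \underline{Q}_{k,h}(s,\cdot,\cdot)$, and $\sigma = \mu_{h}^{k}(\cdot,\cdot|s)$ into the definition of $\epsilon$-CCE.

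Concretely, the max-player condition of the CCE definition reads
\[
\EE_{(a,b)\sim \mu_{h}^{k}(\cdot,\cdot|s)}\overline{Q}_{k,h}(s,a,b) \geq \max_{a'\in \cA_{\max}}\EE_{b\sim \cP_{\min}\mu_{h}^{k}(\cdot,\cdot|s)}\overline{Q}_{k,h}(s,a',b) - \epsilon.
\]
Since $\cP_{\min}\mu_{h}^{k}(\cdot,\cdot|s) = \nu_{h}^{k}(\cdot|s)$, and since the maximum over $a'$ dominates the value at any fixed $a'$, dropping the maximum only weakens the right-hand side and therefore yields a valid bound for every $a'\in \cA_{\max}$; this is exactly the first inequality of the lemma. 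The min-player inequality is obtained by the symmetric substitution: replacing $\cP_{\max}\mu_{h}^{k}(\cdot,\cdot|s)$ by $\pi_{h}^{k}(\cdot|s)$ in the second CCE condition and using that the minimum over $b'$ is at most the value at any fixed $b'$ (so the upper bound persists for each $b'\in \cA_{\min}$) gives the second inequality.

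I do not expect a genuine obstacle here, since the statement is an immediate rephrasing of the CCE conditions with the inner $\max$/$\min$ relaxed to a per-action bound. The only point requiring care is the $\epsilon$ sign convention on the min-player side: the definition of $\epsilon$-CCE invoked in Line \ref{alg:CCE} produces the upper bound with $+\epsilon$ on the right-hand side, so I would state the direction of the inequality and the placement of the slack term consistently with that definition when finalizing the argument.
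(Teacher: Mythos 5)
Your proof is correct and is essentially identical to the paper's: the paper gives no proof beyond remarking that the lemma follows directly from the definition of $\epsilon$-CCE, which is exactly your substitution of $Q_{\text{max}} = \overline{Q}_{k,h}(s,\cdot,\cdot)$, $Q_{\text{min}} = \underline{Q}_{k,h}(s,\cdot,\cdot)$, $\sigma = \mu_{h}^{k}(\cdot,\cdot|s)$ into that definition, identifying the marginals with $\pi_h^k, \nu_h^k$, and dropping the inner $\max$/$\min$. Your caveat about the sign is also well taken: the $-\epsilon$ on the min-player side of the lemma as printed is a typo, and the inequality that actually follows from the CCE definition (and that is what the subsequent proof of Lemma \ref{Lemma:ULCB} uses) has $+\epsilon$ on the right-hand side.
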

\begin{proof}[Proof of Lemma \ref{Lemma:ULCB}]
For simplicity, we only prove the following UCB by induction,
\begin{align}
Q_{h}^{*, \nu^{k}}(s,a,b) \leq \overline{Q}_{k,h}(s,a,b) + (H-h+1)\epsilon, V_{h}^{*,\nu^{k}}(s) \leq \overline{V}_{k,h}(s) + (H-h+2)\epsilon. \label{eq: UCB}
\end{align}
The base case $h = H + 1$ holds trivially since the terminal cost is zero. Now we assume that the bounds \eqref{eq: UCB} holds for step $h+1$. That is, 
\begin{align}
Q_{h+1}^{*, \nu^{k}}(s,a,b) \leq \overline{Q}_{k,h+1}(s,a,b) + (H-h)\epsilon, V_{h+1}^{*,\nu^{k}}(s) \leq \overline{V}_{k,h+1}(s) + (H-h+1)\epsilon. \label{eq: UCB2}
\end{align}
If $\overline{Q}_{k,h}(s,a,b) \geq H$, then it is obvious to have $Q_{h}^{*, \nu^{k}}(s,a,b) \leq \overline{Q}_{k,h}(s,a,b) + (H-h)\epsilon$, otherwise we have that
\begin{align}
&\overline{Q}_{k,h}(s,a,b) -  Q_{h}^{*, \nu^{k}}(s,a,b)\notag\\
&=  \langle \overline{\btheta}^{(0)}_{k,h}, \bphi_{\overline{V}_{k,h+1}} \rangle + \beta^{(0)}_{k}\Big\|\big[\overline{\bSigma}_{k,h}^{(0)}\big]^{-1/2}\bphi_{\overline{V}_{k,h+1}}\Big\|_{2} - \langle \btheta_{h}^{*}, \bphi_{\overline{V}_{k,h+1}} \rangle \notag\\
&\qquad+ \mathbb{P}_{h}\overline{V}_{k,h+1}(s,a,b) - \mathbb{P}_{h}V_{h+1}^{*,\nu^{k}}(s)\notag\\
&\geq \beta_{k}^{(0)}\Big\|\big[\overline{\bSigma}_{k,h}^{(0)}\big]^{-1/2}\bphi_{\overline{V}_{k,h+1}}\Big\|_{2} - \Big\|\big[\overline{\bSigma}_{k,h}^{(0)}\big]^{1/2}(\overline{\btheta}^{(0)}_{k,h} - \btheta_{h}^{*})\Big\|_{2}\Big\|\big[\overline{\bSigma}_{k,h}^{(0)}\big]^{-1/2}\bphi_{\overline{V}_{k,h+1}}\Big\|_{2}\notag\\
&\qquad+ \mathbb{P}_{h}\overline{V}_{k,h+1}(s) - \mathbb{P}_{h}V_{h+1}^{*,\nu^{k}}(s)\notag\\
&\geq \mathbb{P}_{h}\overline{V}_{k,h+1}(s) - \mathbb{P}_{h}V_{h+1}^{*,\nu^{k}}(s)\notag\\
&\geq -(H-h+1)\epsilon\label{eq:Q},
\end{align}
where the first inequality holds due to Cauchy-Schwarz inequality, the second inequality holds since
the assumption that $\theta^{*}_{h} \in \overline{\cC}^{(0)}_{k,h}$ on event $\mathcal{E}$, the third inequality holds by the induction assumption.
Finally, let $\text{br}(\nu_{h}^{k}(\cdot|s))$ denote the best response to $\nu_{h}^{k}(\cdot|s)$ with respect to $Q_{h}^{*, \nu^{k}}(s, \cdot, \cdot)$ such that
\begin{align*}
\text{br}(\nu_{h}^{k}(\cdot|s)) &= \argmax_{\sigma \in \Delta_{\cA_{\max}}}\EE_{a\sim \sigma,b\sim \nu^{k}(\cdot|s)}Q_{h}^{*,\nu^{k}}(s, a, b).    
\end{align*}
Then we have that 
\begin{align*}
\overline{V}_{k,h}(s)&= \EE_{(a,b)\sim \mu_{h}^{k}(\cdot,\cdot|s)}\big[\overline{Q}_{k,h}(s,a,b)\big]\\
&\geq \EE_{a'\sim \text{br}(v_{h}^{k}(\cdot|s)), b\sim \nu_{h}^{k}(\cdot|s)}\big[\overline{Q}_{k,h}(s,a',b)\big] - \epsilon\\
&\geq \EE_{a'\sim \text{br}(v_{h}^{k}(\cdot|s)), b\sim \nu_{h}^{k}(\cdot|s)}\big[Q_{h}^{*, \nu^{k}}(s,a',b)\big] - (H - h +2)\epsilon\\
&= V_{h}^{*, \nu^{k}}(s) - (H - h +2)\epsilon,
\end{align*}
where the the first equality is by the property of $\epsilon$-CCE in Lemma \ref{lemma:2CCE}, the second inequality is by \eqref{eq:Q}, the last inequality is due to the Bellman equation.Therefore,
our proof ends.
\end{proof}
\subsection{Proof of Lemma \ref{Lemma:difference}}

\begin{proof}[Proof of Lemma \ref{Lemma:difference}]
\begin{align*}
&\overline{V}_{k,h}(s_{h}^{k}) - \underline{V}_{k,h}(s_{h}^{k})\\
&= \langle \overline{\btheta}^{(0)}_{k,h}, \bphi_{\overline{V}_{k,h+1}} \rangle + \beta^{(0)}_{k}\Big\|\big[\overline{\bSigma}_{k,h}^{(0)}\big]^{-1/2}\bphi_{\overline{V}_{k,h+1}}\Big\|_{2} -\langle\underline{\btheta}^{(0)}_{k,h}, \bphi_{\underline{V}_{k,h+1}} \rangle + \beta^{(0)}_{k}\Big\|\big[\underline{\bSigma}_{k,h}^{(0)}\big]^{-1/2}\bphi_{\underline{V}_{k,h+1}}\Big\|_{2}\\
&= \langle \btheta^{*}_{h}, \bphi_{\overline{V}_{k,h+1}} \rangle + \langle \overline{\btheta}^{(0)}_{k,h} - \btheta^{*}_{h}, \bphi_{\overline{V}_{k,h+1}} \rangle + \beta^{(0)}_{k}\Big\|\big[\overline{\bSigma}_{k,h}^{(0)}\big]^{-1/2}\bphi_{\overline{V}_{k,h+1}}\Big\|_{2} \\
&\qquad -\langle\btheta^{*}_{h}, \bphi_{\underline{V}_{k,h+1}} \rangle - \langle\underline{\btheta}^{(0)}_{k,h} - \btheta^{*}_{h}, \bphi_{\underline{V}_{k,h+1}} \rangle + \beta^{(0)}_{k}\Big\|\big[\underline{\bSigma}_{k,h}^{(0)}\big]^{-1/2}\bphi_{\underline{V}_{k,h+1}}\Big\|_{2}\\
&\leq \langle \btheta^{*}_{h}, \bphi_{\overline{V}_{k,h+1}} \rangle + \Big\|\big[\overline{\bSigma}_{k,h}^{(0)}\big]^{1/2}(\overline{\btheta}^{(0)}_{k,h} - \btheta^{*}_{h})\Big\|_{2}\Big\|\big[\overline{\bSigma}_{k,h}^{(0)}\big]^{-1/2}\bphi_{\overline{V}_{k,h+1}}\Big\|_{2} + \beta^{(0)}_{k}\Big\|\big[\overline{\bSigma}_{k,h}^{(0)}\big]^{-1/2}\bphi_{\overline{V}_{k,h+1}}\Big\|_{2} \\
&\qquad -\langle\btheta^{*}_{h}, \bphi_{\underline{V}_{k,h+1}} \rangle + \Big\|\big[\underline{\bSigma}_{k,h}^{(0)}\big]^{1/2}(\underline{\btheta}^{(0)}_{k,h} - \btheta^{*}_{h})\Big\|_{2}\Big\|\big[\underline{\bSigma}_{k,h}^{(0)}\big]^{-1/2}\bphi_{\underline{V}_{k,h+1}}\Big\|_{2} \\
&\qquad+ \beta^{(0)}_{k}\Big\|\big[\underline{\bSigma}_{k,h}^{(0)}\big]^{-1/2}\bphi_{\underline{V}_{k,h+1}}\Big\|_{2}\\
&\leq [\mathbb{P}_{h}\overline{V}_{k,h+1}](s_{h}^{k},a_{h}^{k},b_{h}^{k}) + 2\beta^{(0)}_{k}\Big\|\big[\overline{\bSigma}_{k,h}^{(0)}\big]^{-1/2}\bphi_{\overline{V}_{k,h+1}}\Big\|_{2} \\
&\qquad- [\mathbb{P}_{h}\underline{V}_{k,h+1}](s_{h}^{k},a_{h}^{k},b_{h}^{k})+ 2\beta^{(0)}_k\Big\|\big[\underline{\bSigma}_{k,h}^{(0)}\big]^{-1/2}\bphi_{\underline{V}_{k,h+1}}\Big\|_{2},
\end{align*}
where the first equation is by the definition of $\overline{V}_{k,h}(s_{h}^{k}),\underline{V}_{k,h}(s_{h}^{k})$ and the second inequality is due to Cauchy-Schwarz inequality, the last inequality is by $\btheta_{h}^{*} \in \overline{\cC}_{k,h}^{(0)} \cap \underline{\cC}_{k,h}^{(0)}$ on the event $\mathcal{E}$.

Meanwhile, since $\overline{V}_{k,h}(s_{h}^{k}) - \underline{V}_{k,h}(s_{h}^{k}) \leq 2H$, we have that 
\begin{align*}
\overline{V}_{k,h}(s_{h}^{k}) - \underline{V}_{k,h}(s_{h}^{k})
&\leq \min\Big\{2H, [\mathbb{P}_{h}\overline{V}_{k,h+1}](s_{h}^{k},a_{h}^{k},b_{h}^{k}) + 2\beta^{(0)}_{k}\Big\|\big[\overline{\bSigma}_{k,h}^{(0)}\big]^{-1/2}\bphi_{\overline{V}_{k,h+1}}\Big\|_{2}\\
&\qquad - [\mathbb{P}_{h}\underline{V}_{k,h+1}](s_{h}^{k},a_{h}^{k},b_{h}^{k}) + 2\beta^{(0)}_{k}\Big\|\big[\underline{\bSigma}_{k,h}^{(0)}\big]^{-1/2}\bphi_{\underline{V}_{k,h+1}}\Big\|_{2}\Big\}\\
&\leq \min\Big\{4H, 2\beta^{(0)}_{k}\Big\|\big[\overline{\bSigma}_{k,h}^{(0)}\big]^{-1/2}\bphi_{\overline{V}_{k,h+1}}\Big\|_{2}+2\beta^{(0)}_{k}\Big\|\big[\underline{\bSigma}_{k,h}^{(0)}\big]^{-1/2}\bphi_{\underline{V}_{k,h+1}}\Big\|_{2}\Big\}\\
&\qquad + [\mathbb{P}_{h}\overline{V}_{k,h+1}](s_{h}^{k},a_{h}^{k},b_{h}^{k}) - [\mathbb{P}_{h}\underline{V}_{k,h+1}](s_{h}^{k},a_{h}^{k},b_{h}^{k})\\
&\leq \min\Big\{4H, 2\beta^{(0)}_{k}\Big\|\big[\overline{\bSigma}_{k,h}^{(0)}\big]^{-1/2}\bphi_{\overline{V}_{k,h+1}}\Big\|_{2}\Big\}\\
&\qquad +\min\Big\{4H,2\beta^{(0)}_{k}\Big\|\big[\underline{\bSigma}_{k,h}^{(0)}\big]^{-1/2}\bphi_{\underline{V}_{k,h+1}}\Big\|_{2}\Big\} \\
&\qquad+ [\mathbb{P}_{h}\overline{V}_{k,h+1}](s_{h}^{k},a_{h}^{k},b_{h}^{k}) - [\mathbb{P}_{h}\underline{V}_{k,h+1}](s_{h}^{k},a_{h}^{k},b_{h}^{k})\\
&\leq 2\beta^{(0)}_{k}\overline{\sigma}_{k,h}\min\Big\{1, \Big\|\big[\overline{\bSigma}_{k,h}^{(0)}\big]^{-1/2}\bphi_{\overline{V}_{k,h+1}}/\overline{\sigma}_{k,h}\Big\|_{2}\Big\}\\
&\qquad +2\beta^{(0)}_{k}\underline{\sigma}_{k,h}\min\Big\{1,\Big\|\big[\underline{\bSigma}_{k,h}^{(0)}\big]^{-1/2}\bphi_{\underline{V}_{k,h+1}}/\underline{\sigma}_{k,h}\Big\|_{2}\Big\} \\
&\qquad+ [\mathbb{P}_{h}\overline{V}_{k,h+1}](s_{h}^{k},a_{h}^{k},b_{h}^{k}) - [\mathbb{P}_{h}\underline{V}_{k,h+1}](s_{h}^{k},a_{h}^{k},b_{h}^{k}),
\end{align*}
where the second inequality holds because $[\mathbb{P}_{h}\overline{V}_{k,h+1}](s_{h}^{k},a_{h}^{k},b_{h}^{k}) - [\mathbb{P}_{h}\underline{V}_{k,h+1}](s_{h}^{k},a_{h}^{k},b_{h}^{k})\geq -2H$, the last inequality holds since $\beta^{(0)}_{k}\overline{\sigma}_{k,h} \geq 2H$, $\beta^{(0)}_{k}\underline{\sigma}_{k,h} \geq 2H$. Subtracting $\overline{V}_{k,h+1}(s_{h+1}^{k}) - \underline{V}_{k,h+1}(s_{h+1}^{k})$ from the both side, we can further get,
\begin{align}\label{single}
&\overline{V}_{k,h}(s_{h}^{k}) - \underline{V}_{k,h}(s_{h}^{k}) - [\overline{V}_{k,h+1}(s_{h}^{k}) - \underline{V}_{k,h+1}(s_{h}^{k})]\notag\\
&\leq 2\beta^{(0)}_{k}\overline{\sigma}_{k,h}\min\Big\{1, \Big\|\big[\overline{\bSigma}_{k,h}^{(0)}\big]^{-1/2}\bphi_{\overline{V}_{k,h+1}}/\overline{\sigma}_{k,h}\Big\|_{2}\Big\}\notag\\
&\qquad+ 2\beta^{(0)}_{k}\underline{\sigma}_{k,h}\min\Big\{1,\Big\|\big[\underline{\bSigma}_{k,h}^{(0)}\big]^{-1/2}\bphi_{\underline{V}_{k,h+1}}/\underline{\sigma}_{k,h}\Big\|_{2}\Big\}\notag \\
&\qquad+ [\mathbb{P}_{h}\overline{V}_{k,h+1}](s_{h}^{k},a_{h}^{k},b_{h}^{k}) - [\mathbb{P}_{h}\underline{V}_{k,h+1}](s_{h}^{k},a_{h}^{k},b_{h}^{k})\notag\\
&\qquad- [\overline{V}_{k,h+1}(s_{h+1}^{k}) - \underline{V}_{k,h+1}(s_{h+1}^{k})],    
\end{align}
Taking summation of \eqref{single} from $k=1 \ldots K$ and $h = h' \ldots H$, we have following inequality holds
\begin{align}\label{summation1}
&\sum_{k=1}^{K}[\overline{V}_{k,h'}(s_{h'}^{k}) - \underline{V}_{k,h'}(s_{h'}^{k})]\notag\\
&\leq 2\beta^{(0)}_{k}\sum_{k=1}^{K}\sum_{h=h'}^{H}\overline{\sigma}_{k,h}\min\Big\{1, \Big\|\big[\overline{\bSigma}_{k,h}^{(0)}\big]^{-1/2}\bphi_{\overline{V}_{k,h+1}}/\overline{\sigma}_{k,h}\Big\|_{2}\Big\}\notag\\
&\qquad+ 2\beta^{(0)}_{k}\sum_{k=1}^{K}\sum_{h=h'}^{H}\underline{\sigma}_{k,h}\min\Big\{1,\Big\|\big[\underline{\bSigma}_{k,h}^{(0)}\big]^{-1/2}\bphi_{\underline{V}_{k,h+1}}/\underline{\sigma}_{k,h}\Big\|_{2}\Big\}\notag \\
&\qquad+ \sum_{k=1}^{K}\sum_{h=h'}^{H}\Big[[\mathbb{P}_{h}\overline{V}_{k,h+1}](s_{h}^{k},a_{h}^{k},b_{h}^{k}) - [\mathbb{P}_{h}\underline{V}_{k,h+1}](s_{h}^{k},a_{h}^{k},b_{h}^{k})\notag\\
&\qquad- [\overline{V}_{k,h+1}(s_{h+1}^{k}) - \underline{V}_{k,h+1}(s_{h+1}^{k})]\Big]\notag\\
&\leq 2\beta^{(0)}_{k}\sum_{k=1}^{K}\sum_{h=1}^{H}\overline{\sigma}_{k,h}\min\Big\{1, \Big\|\big[\overline{\bSigma}_{k,h}^{(0)}\big]^{-1/2}\bphi_{\overline{V}_{k,h+1}}/\overline{\sigma}_{k,h}\Big\|_{2}\Big\}\notag\\
&\qquad+ 2\beta^{(0)}_{k}\sum_{k=1}^{K}\sum_{h=1}^{H}\underline{\sigma}_{k,h}\min\Big\{1,\Big\|\big[\underline{\bSigma}_{k,h}^{(0)}\big]^{-1/2}\bphi_{\underline{V}_{k,h+1}}/\underline{\sigma}_{k,h}\Big\|_{2}\Big\} + 8H\sqrt{2T\log(H/\delta)}\notag\\
&\leq 2\beta^{(0)}_{k}\sqrt{\sum_{k=1}^{K}\sum_{h=1}^{H}\overline{\sigma}_{k,h}^2}\sqrt{\sum_{k=1}^{K}\sum_{h=1}^{H}\min\Big\{1,\Big\|\big[\overline{\bSigma}_{k,h}^{(0)}\big]^{-1/2}\bphi_{\overline{V}_{k,h+1}}/\overline{\sigma}_{k,h}\Big\|_{2}^{2}\Big\}}\notag\\
&\qquad+ 2\beta^{(0)}_{k}\sqrt{\sum_{k=1}^{K}\sum_{h=1}^{H}\underline{\sigma}_{k,h}^2}\sqrt{\sum_{k=1}^{K}\sum_{h=1}^{H}\min\Big\{1,\Big\|\big[\underline{\bSigma}_{k,h}^{(0)}\big]^{-1/2}\bphi_{\underline{V}_{k,h+1}}/\underline{\sigma}_{k,h}\Big\|_{2}^{2}\Big\}}\notag\\
&\qquad + 8H\sqrt{2T\log(H/\delta)}\notag\\
&\leq 2\beta^{(0)}_{K}\sqrt{\sum_{k=1}^{K}\sum_{h=1}^{H}\overline{\sigma}_{k,h}^2}\sqrt{2Hd\log(1+K/\lambda)}\notag\\
&\qquad + 2\beta^{(0)}_{K}\sqrt{\sum_{k=1}^{K}\sum_{h=1}^{H}\underline{\sigma}_{k,h}^2}\sqrt{2Hd\log(1+K/\lambda)} + 8H\sqrt{2T\log(H/\delta)}\notag \\
&\leq  4\beta^{(0)}_{K}\sqrt{\sum_{k=1}^{K}\sum_{h=1}^{H}\overline{\sigma}_{k,h}^2 + \underline{\sigma}_{k,h}^2}\sqrt{2Hd\log(1+K/\lambda)}+ 8H\sqrt{2T\log(H/\delta)},
\end{align}
where the first inequality holds since $\overline{V}_{k, H+1} = \underline{V}_{k, H+1} = 0$, the second inequality holds on event $\mathcal{E}_{1}$, the third inequality holds due to Cauchy-Schwarz inequality, the fourth inequality holds due to Azuma Hoeffding inequality with the fact that $\Big\|\bphi_{\overline{V}_{k,h+1}}(s_{h}^{k},a_{h}^{k},b_{h}^{k})/\overline{\sigma}_{k,h}\Big\|_{2}\leq \Big\|\bphi_{\overline{V}_{k,h+1}}(s_{h}^{k},a_{h}^{k},b_{h}^{k})\Big\|_{2}\cdot \sqrt{d}/H \leq \sqrt{d}$, $\Big\|\bphi_{\underline{V}_{k,h+1}}(s_{h}^{k},a_{h}^{k},b_{h}^{k})/\underline{\sigma}_{k,h}\Big\|_{2}\leq \Big\|\bphi_{\underline{V}_{k,h+1}}(s_{h}^{k},a_{h}^{k},b_{h}^{k})\Big\|_{2}\cdot \sqrt{d}/H \leq \sqrt{d}$, the last inequality is by the fact that $\sqrt{a} + \sqrt{b} \leq 2\sqrt{a+b}$. \eqref{summation1} holds for any $h'$, then we have following inequality holds
\begin{align*}
&\sum_{k=1}^{K}\sum_{h=1}^{H}\mathbb{P}_{h}[\overline{V}_{k,h+1} - \underline{V}_{k,h+1}](s_{h}^{k},a_{h}^{k},b_{h}^{k})\\
&= \sum_{k=1}^{K}\sum_{h=1}^{H}[\overline{V}_{k,h} - \underline{V}_{k,h}](s_{h}^{k}) + \sum_{k=1}^{K}\sum_{h=1}^{H}\Big[[\mathbb{P}_{h}\overline{V}_{k,h+1}](s_{h}^{k},a_{h}^{k},b_{h}^{k}) - [\mathbb{P}_{h}\underline{V}_{k,h+1}](s_{h}^{k},a_{h}^{k},b_{h}^{k})\notag\\
&\qquad- [\overline{V}_{k,h+1}(s_{h+1}^{k}) - \underline{V}_{k,h+1}(s_{h+1}^{k})]\Big]\\
&\leq 4\beta^{(0)}_{K}\sqrt{\sum_{k=1}^{K}\sum_{h=1}^{H}\overline{\sigma}_{k,h}^2+\underline{\sigma}_{k,h}^2}\sqrt{2H^{3}d\log(1+K/\lambda)} + 8H^{2}\sqrt{2T\log(H/\delta)},
\end{align*}
where the inequality holds due to \eqref{summation1} and on event $\event_{1}$.
\end{proof}

\subsection{Proof of Lemma \ref{Lemma:VARBOUND}}
To estimate the variance in weighted ridge regression we need the following lemma, which is similar to the Lemma  \ref{Lemma:ULCB} but without tolerant error $\epsilon$.
\begin{lemma}\label{Lemma:ULCB2}
Suppose the event $\mathcal{E}$ hold. Then we have for any $s,a,b,k,h$, the following inequalities hold,
\begin{align*}
\underline{Q}_{k,h}(s,a,b) \leq Q_{h}^{\mu^{k}}(s,a,b) \leq \overline{Q}_{k,h}(s,a,b),
\end{align*}
and 
\begin{align*}
\underline{V}_{k,h}(s) \leq V_{h}^{\mu^{k}}(s)  \leq \overline{V}_{k,h}(s).  
\end{align*}
\end{lemma}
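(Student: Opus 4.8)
The plan is to prove both chains of inequalities by backward induction on $h$, following the same scaffolding as the proof of Lemma~\ref{Lemma:ULCB} but exploiting the crucial fact that the target value $V_h^{\mu^k}$ is defined through the \emph{same} correlated policy $\mu_h^k$ that the algorithm plays. This is precisely what eliminates the $\epsilon$ slack present in Lemma~\ref{Lemma:ULCB}. I focus on the upper bounds $Q_h^{\mu^k} \leq \overline{Q}_{k,h}$ and $V_h^{\mu^k} \leq \overline{V}_{k,h}$ for the max-player; the lower bounds involving $\underline{Q}_{k,h},\underline{V}_{k,h}$ are entirely symmetric. The base case $h = H+1$ is immediate since $\overline{V}_{k,H+1} = V_{H+1}^{\mu^k} = 0$.

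For the inductive step, assume $V_{h+1}^{\mu^k} \leq \overline{V}_{k,h+1}$ pointwise. I first bound the unprojected action-value. Writing $\tilde{Q} := r_h + \langle \overline{\btheta}_{k,h}^{(0)}, \bphi_{\overline{V}_{k,h+1}}\rangle + \beta_k^{(0)}\|[\overline{\bSigma}_{k,h}^{(0)}]^{-1/2}\bphi_{\overline{V}_{k,h+1}}\|_2$ and using the Bellman equation $Q_h^{\mu^k} = r_h + [\PP_h V_{h+1}^{\mu^k}]$ together with the linear-mixture identity $[\PP_h\overline{V}_{k,h+1}] = \langle\btheta_h^*, \bphi_{\overline{V}_{k,h+1}}\rangle$, I obtain
\begin{align*}
\tilde{Q} - Q_h^{\mu^k} = \langle\overline{\btheta}_{k,h}^{(0)} - \btheta_h^*, \bphi_{\overline{V}_{k,h+1}}\rangle + \beta_k^{(0)}\big\|[\overline{\bSigma}_{k,h}^{(0)}]^{-1/2}\bphi_{\overline{V}_{k,h+1}}\big\|_2 + [\PP_h(\overline{V}_{k,h+1} - V_{h+1}^{\mu^k})].
\end{align*}
On the event $\mathcal{E}$ we have $\btheta_h^* \in \overline{\cC}_{k,h}^{(0)}$, so Cauchy--Schwarz yields $\langle\overline{\btheta}_{k,h}^{(0)} - \btheta_h^*, \bphi_{\overline{V}_{k,h+1}}\rangle \geq -\beta_k^{(0)}\|[\overline{\bSigma}_{k,h}^{(0)}]^{-1/2}\bphi_{\overline{V}_{k,h+1}}\|_2$, which exactly cancels the bonus term; the last term is nonnegative by the inductive hypothesis. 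Hence $\tilde{Q} \geq Q_h^{\mu^k}$. Since $Q_h^{\mu^k} \in [-H, H]$ and the clipping $[\cdot]_{[-H,H]}$ is monotone and fixes this interval, applying it to both sides gives $\overline{Q}_{k,h} = [\tilde{Q}]_{[-H,H]} \geq Q_h^{\mu^k}$.

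Propagating to the value function is the step that becomes trivial here: by Line~\ref{alg:defv} and the Bellman equation, $\overline{V}_{k,h}(s)$ and $V_h^{\mu^k}(s)$ are expectations of $\overline{Q}_{k,h}(s,\cdot,\cdot)$ and $Q_h^{\mu^k}(s,\cdot,\cdot)$ respectively under the \emph{identical} distribution $\mu_h^k(\cdot,\cdot|s)$. Since $\overline{Q}_{k,h} \geq Q_h^{\mu^k}$ at every action pair, taking this common expectation gives $\overline{V}_{k,h}(s) \geq V_h^{\mu^k}(s)$, closing the induction. The main point here is not an obstacle but an observation: unlike Lemma~\ref{Lemma:ULCB}, where the marginal policy was compared against a best response and the CCE property of Lemma~\ref{lemma:2CCE} had to be invoked (incurring an $\epsilon$ per stage), both sides now share the same policy $\mu_h^k$, so no CCE argument and no per-step error accumulation are needed.
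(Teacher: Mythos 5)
Your proof is correct and follows essentially the same route as the paper's: backward induction, Cauchy--Schwarz against the confidence set $\overline{\cC}_{k,h}^{(0)}$ on event $\mathcal{E}$ to cancel the bonus, the inductive hypothesis to handle the transition term, and the observation that $\overline{V}_{k,h}$ and $V_h^{\mu^k}$ are expectations of the respective $Q$-functions under the \emph{same} distribution $\mu_h^k$, which is exactly why no CCE argument or $\epsilon$ slack is needed. Your handling of the clipping via monotonicity of $[\cdot]_{[-H,H]}$ is a marginally cleaner packaging of the paper's case split on $\overline{Q}_{k,h}(s,a,b) \geq H$, but the substance is identical.
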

\begin{proof}
For simplicity, we only prove the following UCB by induction,
\begin{align*}
Q_{h}^{\mu^{k}}(s,a,b) \leq \overline{Q}_{k,h}(s,a,b), V_{h}^{\mu^{k}}(s) \leq \overline{V}_{k,h}(s).
\end{align*}
The base case $h = H + 1$ holds trivially since the terminal cost is zero. Now we assume that the bounds \eqref{eq: UCB} holds for step $h+1$. That is, 
\begin{align*}
Q_{h+1}^{\mu^{k}}(s,a,b) \leq \overline{Q}_{k,h+1}(s,a,b), V_{h+1}^{\mu^{k}}(s) \leq \overline{V}_{k,h+1}(s). 
\end{align*}
If $\overline{Q}_{k,h}(s,a,b) \geq H$, then it is obvious to have $Q_{h}^{\mu^{k}}(s,a,b) \leq \overline{Q}_{k,h}(s,a,b)$, otherwise we have that
\begin{align}
&\overline{Q}_{k,h}(s,a,b) -  Q_{h}^{\mu^{k}}(s,a,b)\notag\\
&=  \langle \overline{\btheta}^{(0)}_{k,h}, \bphi_{\overline{V}_{k,h+1}} \rangle + \beta^{(0)}_{k}\Big\|\big[\overline{\bSigma}_{k,h}^{(0)}\big]^{-1/2}\bphi_{\overline{V}_{k,h+1}}\Big\|_{2} - \langle \btheta_{h}^{*}, \bphi_{\overline{V}_{k,h+1}} \rangle \notag\\
&\qquad+ \mathbb{P}_{h}\overline{V}_{k,h+1}(s,a,b) - \mathbb{P}_{h}V_{h+1}^{*,\nu^{k}}(s)\notag\\
&\geq \beta_{k}^{(0)}\Big\|\big[\overline{\bSigma}_{k,h}^{(0)}\big]^{-1/2}\bphi_{\overline{V}_{k,h+1}}\Big\|_{2} - \Big\|\big[\overline{\bSigma}_{k,h}^{(0)}\big]^{1/2}(\overline{\btheta}^{(0)}_{k,h} - \btheta_{h}^{*})\Big\|_{2}\Big\|\big[\overline{\bSigma}_{k,h}^{(0)}\big]^{-1/2}\bphi_{\overline{V}_{k,h+1}}\Big\|_{2}\notag\\
&\qquad+ \mathbb{P}_{h}\overline{V}_{k,h+1}(s) - \mathbb{P}_{h}V_{h+1}^{\mu^{k}}(s)\notag\\
&\geq \mathbb{P}_{h}\overline{V}_{k,h+1}(s) - \mathbb{P}_{h}V_{h+1}^{\mu^{k}}(s)\notag\\
&\geq 0\label{eq:QQ},
\end{align}
where the first inequality holds due to Cauchy-Schwarz inequality, the second inequality holds since
the assumption that $\theta^{*}_{h} \in \overline{\cC}^{(0)}_{k,h}$ in event $\mathcal{E}$, the third inequality holds by the induction assumption.

Then we have that 
\begin{align*}
\overline{V}_{k,h}(s)&= \EE_{(a,b)\sim \mu_{h}^{k}(\cdot,\cdot|s)}\big[\overline{Q}_{k,h}(s,a,b)\big]\\
&\geq \EE_{(a,b)\sim \mu_{h}^{k}(\cdot,\cdot|s)}\big[Q_{h}^{\mu^{k}}(s,a',b)\big]\\
&= V_{h}^{\mu^{k}}(s),
\end{align*}
where the inequality is by \eqref{eq:QQ}, the last inequality is due to the Bellman equation. Therefore,
our proof is completed.
\end{proof}
\begin{lemma}\label{Lemma:assist2}(Lemma 11, \citealt{abbasi2011improved}). For any $\{\xb_{t}\}_{t=1}^{T} \subset \mathbb{R}^d$ satisfying that $\|\xb_{t}\|_{2}\leq L$, let $\Ab_{0} = \lambda \Ib$ and $\Ab_{t} = \Ab_{0} + \sum_{i=1}^{t}\xb_{i}\xb_{i}^{\top}$, then we have 
\begin{align*}
\sum_{t=1}^{T}\min \{1, \|\xb_{t}\|^{2}_{\Ab_{t-1}^{-1}}\} \leq 2d\log\frac{d\lambda + TL^{2}}{d\lambda}.
\end{align*}
\end{lemma}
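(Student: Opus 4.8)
The plan is to reduce the two-player lower bound to the single-agent linear mixture MDP lower bound of \citet{zhou2020nearly}, exploiting the fact that an MDP is a degenerate Markov game in which one player is inert. First I would make the min-player \emph{dummy}: I construct an MG whose transition kernel and reward do not depend on the min-player's action, i.e.\ there exist $\tilde\PP_h(\cdot|\cdot,\cdot)$ and $\tilde r_h(\cdot,\cdot)$ with $\PP_h(s'|s,a,b)=\tilde\PP_h(s'|s,a)$ and $r_h(s,a,b)=\tilde r_h(s,a)$ for all $(s,a,b,s')$. This defines a single-agent MDP $\tilde M(\cS,\cA_{\max},H,\{\tilde r_h\},\{\tilde\PP_h\})$, and because the min-player's choices are irrelevant to both the dynamics and the reward, the best-response value functions collapse to the MDP value functions: $V_h^{\pi,*}(s)=\tilde V_h^{\pi}(s)$ and $V_h^{*,\nu}(s)=\tilde V_h^{*}(s)$ for every $s$, $\pi$, $\nu$.

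With this identification the game regret becomes exactly the single-agent regret. Indeed,
\begin{align*}
\text{Regret}(M_{\btheta^*},K)=\sum_{k=1}^K V_1^{*,\nu^k}(s_1^k)-\sum_{k=1}^K V_1^{\pi^k,*}(s_1^k)=\sum_{k=1}^K \tilde V_1^{*}(s_1^k)-\sum_{k=1}^K \tilde V_1^{\pi^k}(s_1^k),
\end{align*}
so any algorithm for the MG induces, via its max-marginals $\pi^k=\cP_{\max}\mu_h^k$, an algorithm for $\tilde M$ with identical regret (on the dummy MG the induced trajectory distribution depends only on $\pi^k$). A minor bookkeeping point is that $\tilde r_h\in[-1,1]$ whereas the MDP lower bound is stated for rewards in $[0,1]$; I would absorb this by the affine shift $r\mapsto(1+r)/2$, which rescales the regret by a constant factor and therefore leaves the $\Omega(\cdot)$ rate unchanged.

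The core input is then Theorem 5.6 of \citet{zhou2020nearly}: for $B>1$, $d\ge 4$, $H\ge 3$ and $K\ge\max\{(d-1)^2H/2,(d-1)/(32H(B-1))\}$ there exists a $B$-bounded linear mixture MDP with feature $\tilde\bphi$ on which every algorithm incurs expected regret $\Omega(dH\sqrt T)$, with $T=KH$. It remains to embed this hard instance back into the MG class, i.e.\ to lift the MDP feature $\tilde\bphi(\cdot|\cdot,\cdot)$ to an MG feature by setting $\bphi(s'|s,a,b)=\tilde\bphi(s'|s,a)$ for all $b$, while keeping the same parameters $\{\btheta_h^*\}$. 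The one thing to verify is that the lifted feature still satisfies the normalization \eqref{mapping}: for any $V:\cS\to[-1,1]$ we have $\bphi_V(s,a,b)=\sum_{s'}\bphi(s'|s,a,b)V(s')=\tilde\bphi_V(s,a)$, whose Euclidean norm is at most $1$ by the MDP normalization, and $\langle\bphi(s'|s,a,b),\btheta_h^*\rangle=\tilde\PP_h(s'|s,a)$ is a valid kernel with $\|\btheta_h^*\|_2\le B$.

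I expect this final embedding/verification step to be the only genuine obstacle: one must confirm that the construction of \citet{zhou2020nearly} uses a value range and normalization compatible with \eqref{mapping} (their normalization is stated for $V\in[0,1]$ rather than $[-1,1]$), and that the constraints on $K,d,H,B$ in the statement are exactly those required by their theorem so the reduction transfers verbatim. Since the reduction is value-preserving and the feature lift is norm-preserving, the $\Omega(dH\sqrt T)$ rate carries over directly to $M_{\btheta^*}$, completing the proof.
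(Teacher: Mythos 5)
Your proposal proves the wrong statement. The statement you were asked to prove is Lemma \ref{Lemma:assist2}, the deterministic ``elliptical potential'' bound of \citet{abbasi2011improved} (their Lemma 11): for any vectors $\xb_1,\dots,\xb_T$ with $\|\xb_t\|_2\le L$ and $\Ab_t=\lambda\Ib+\sum_{i\le t}\xb_i\xb_i^\top$, one has $\sum_{t=1}^T\min\{1,\|\xb_t\|^2_{\Ab_{t-1}^{-1}}\}\le 2d\log\frac{d\lambda+TL^2}{d\lambda}$. What you wrote is instead a proof of the regret lower bound, Theorem \ref{theorem:lowerbound} --- the dummy-min-player reduction to the linear mixture MDP lower bound of \citet{zhou2020nearly}, including the feature lift $\bphi(s'|s,a,b)=\tilde\bphi(s'|s,a)$ and the reward shift $r\mapsto(1+r)/2$. (As it happens, that argument coincides almost step-for-step with the paper's own proof of Theorem \ref{theorem:lowerbound}, so your reduction would be fine \emph{for that theorem}; but it has no bearing on Lemma \ref{Lemma:assist2}, which is a purely linear-algebraic fact about a sequence of vectors, with no Markov game, policy, feature normalization, or regret anywhere in it.)

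For the record, the intended argument (which the paper cites rather than re-proves) is the standard determinant--trace computation: since $\Ab_t=\Ab_{t-1}+\xb_t\xb_t^\top$, the matrix determinant lemma gives $\det(\Ab_t)=\det(\Ab_{t-1})\bigl(1+\|\xb_t\|^2_{\Ab_{t-1}^{-1}}\bigr)$; combining this with the elementary inequality $\min\{1,u\}\le 2\log(1+u)$ for $u\ge 0$ and telescoping yields
\begin{align*}
\sum_{t=1}^{T}\min\bigl\{1,\|\xb_t\|^2_{\Ab_{t-1}^{-1}}\bigr\}\le 2\sum_{t=1}^{T}\log\bigl(1+\|\xb_t\|^2_{\Ab_{t-1}^{-1}}\bigr)=2\log\frac{\det(\Ab_T)}{\det(\Ab_0)},
\end{align*}
and by AM--GM on the eigenvalues, $\det(\Ab_T)\le\bigl(\tr(\Ab_T)/d\bigr)^d\le\bigl((d\lambda+TL^2)/d\bigr)^d$ while $\det(\Ab_0)=\lambda^d$, which gives exactly the claimed bound. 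Note also where this lemma lives in the paper's architecture: it is an auxiliary tool used in the proof of Lemma \ref{Lemma:VARBOUND} to bound the accumulated exploration bonuses $\sum_{k,h}\overline{E}_{k,h}$ (and likewise in Lemma \ref{Lemma:difference}); none of the game-theoretic machinery you deployed is needed or relevant here.
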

\begin{proof}[Proof of Lemma \ref{Lemma:VARBOUND}]
Suppose the event in Lemma 9.1 holds, we have the following results:
\begin{align}
\sum_{k=1}^{K}\sum_{h=1}^{H}\overline{\sigma}_{k,h}^2 &= \sum_{k=1}^{K}\sum_{h=1}^{H} \big[H^{2}/d + \mathbb{V}^{\text{est}}_{k,h}\overline{V}_{k,h+1}(s_{h}^{k}, a_{h}^{k}, b_{h}^{k}) + \overline{E}_{k,h} \big]\notag\\
&= H^{2}T/d + \underbrace{\sum_{k=1}^{K}\sum_{h=1}^{H}\big[\mathbb{V}_{h}\overline{V}_{k,h+1}(s_{h}^{k}, a_{h}^{k},b_{h}^{k}) - \mathbb{V}_{h}V_{h+1}^{\mu^{k}}(s_{h}^{k}, a_{h}^{k},b_{h}^{k})\big]}_{I_{1}}\notag\\
&\qquad+ \underbrace{2\sum_{k=1}^{K}\sum_{h=1}^{H}\overline{E}_{k,h}}_{I_{2}}+\underbrace{\sum_{k=1}^{K}\sum_{h=1}^{H}\mathbb{V}_{h}V_{h+1}^{\mu^{k}}(s_{h}^{k}, a_{h}^{k},b_{h}^{k})}_{I_{3}}\notag\\
&\qquad+  \underbrace{\sum_{k=1}^{K}\sum_{h=1}^{H}\big[ \mathbb{V}_{h}^{\text{est}}\overline{V}_{k,h+1}(s_{h}^{k}, a_{h}^{k},b_{h}^{k})- \mathbb{V}_{h}\overline{V}_{k,h+1}(s_{h}^{k}, a_{h}^{k},b_{h}^{k}) - \overline{E}_{k,h}\big]}_{I_{4}}\label{eq:VARBOUND1},
\end{align}
where the first equation is by the definition of $\overline{\sigma}_{k,h}$. To bound $I_{1}$, we have 
\begin{align*}
I_{1} &= \sum_{k=1}^{K}\sum_{h=1}^{H}\big[\mathbb{P}_{h}\overline{V}_{k,h+1}^{2}(s_{h}^{k}, a_{h}^{k},b_{h}^{k}) - \mathbb{P}_{h}[V_{h+1}^{\mu^{k}}]^2(s_{h}^{k}, a_{h}^{k},b_{h}^{k})\big]\\
&\qquad- \sum_{k=1}^{K}\sum_{h=1}^{H}\big[[\mathbb{P}_{h}\overline{V}_{k,h+1}]^{2}(s_{h}^{k}, a_{h}^{k},b_{h}^{k}) - [\mathbb{P}_{h}V_{h+1}^{\mu^{k}}]^2(s_{h}^{k}, a_{h}^{k},b_{h}^{k})\big]\\
&\leq \sum_{k=1}^{K}\sum_{h=1}^{H}\mathbb{P}_{h}[(\overline{V}_{k,h+1} - V_{h+1}^{\mu^{k}})(\overline{V}_{k,h+1} + V_{h+1}^{\mu^{k}})](s_{h}^{k}, a_{h}^{k},b_{h}^{k})\\
&\qquad- \sum_{k=1}^{K}\sum_{h=1}^{H}[(\mathbb{P}_{h}\overline{V}_{k,h+1} - \mathbb{P}_{h} V_{h+1}^{\mu^{k}} )(\mathbb{P}_{h}\overline{V}_{k,h+1}+ \mathbb{P}_{h}V_{h+1}^{\mu^{k}})](s_{h}^{k}, a_{h}^{k},b_{h}^{k})\\
&\leq 4H\sum_{k=1}^{K}\sum_{h=1}^{H}\mathbb{P}_{h}[\overline{V}_{k,h+1} - V_{h+1}^{\mu^{k}}](s_{h}^{k}, a_{h}^{k},b_{h}^{k}) \\
&=  4H\sum_{k=1}^{K}\sum_{h=1}^{H}\mathbb{P}_{h}[\overline{V}_{k,h+1} - V_{h+1}^{\mu^{k}}](s_{h}^{k}, a_{h}^{k},b_{h}^{k}),
\end{align*}
where the first inequality is by $|\overline{V}_{k,h+1}|, |V_{h+1}^{\mu^{k}}|\leq H$, and the second inequality is by $\overline{V}_{k,h+1} - V_{h+1}^{\mu^{k}} \geq 0$ due to Lemma \ref{Lemma:ULCB2}.
To bound $I_{2}$, we have 
\begin{align*}
I_{2} &\leq 2\sum_{k=1}^{K}\sum_{h=1}^{H}\beta_{k}^{(1)}\min\big\{1, \Big\|\big[\overline{\bSigma}_{k, h}^{(1)}\big]^{-1/2}\bphi_{\overline{V}_{k, h+1}^2}(s_{h}^{k}, a_{h}^{k},b_{h}^{k})\Big\|_{2}\big\}\\
&\qquad + 4H\sum_{k=1}^{K}\sum_{h=1}^{H}\beta_{k}^{(2)}\bar{\sigma}_{k,h}\min\big\{1,\Big\|\big[\overline{\bSigma}_{k,h}^{(0)}\big]^{-1/2}\bphi_{\overline{V}_{k, h+1}}(s_{h}^{k}, a_{h}^{k},b_{h}^{k})/\bar{\sigma}_{k,h}\Big\|_{2}\big\}\\
&\leq 2\beta_{K}^{(1)}\sqrt{T}\sqrt{\sum_{k=1}^{K}\sum_{h=1}^{H}\min\big\{1, \Big\|\big[\overline{\bSigma}_{k, h}^{(1)}\big]^{-1/2}\bphi_{\overline{V}_{k, h+1}^2}(s_{h}^{k}, a_{h}^{k},b_{h}^{k})\Big\|_{2}^2\big\}}\\
&\qquad + 7\beta_{K}^{(1)}H^{2}\sqrt{T}\sqrt{\sum_{k=1}^{K}\sum_{h=1}^{H}\min\big\{1,\Big\|\big[\overline{\bSigma}_{k, h}^{(1)}\big]^{-1/2}\bphi_{\overline{V}_{k, h+1}}(s_{h}^{k}, a_{h}^{k},b_{h}^{k})\Big\|_{2}^2/\bar{\sigma}\big\}}\\
&\leq 2\beta_{K}^{(2)}\sqrt{T}\sqrt{2dH\log(1+KH^{4}/(d\lambda))} + 7\beta_{K}^{(1)}H^{2}\sqrt{T}\sqrt{2dH\log(1+K/\lambda)},
\end{align*}
where the first inequality holds due to $\beta_{k}^{(1)}\geq H^{2}$ and $\beta_{k}^{(2)}\bar{\sigma}_{k,h}\geq \sqrt{d}\cdot H/\sqrt{d} = H$, the second inequality holds due to Cauchy-Schwartz inequality, $\beta_{k}^{(1)} \leq \beta_{K}^{(1)}$, $\beta_{k}^{(2)} \leq \beta_{K}^{(2)}$,
\begin{align*}
\overline{\sigma}_{k,h}^{2} = \max\{H^{2}/d, \mathbb{V}^{\text{est}}\overline{V}_{k,h+1}(s_{h}^{k},a_{h}^{k},b_{h}^{k}) + \overline{E}_{k,h}\} \leq \max\{H^{2}/d, H^{2}+2H^{2}\}=3H^{2},
\end{align*}
the third inequality holds due to Lemma \ref{Lemma:assist2}. Next we bound $I_{3}$, since event $\mathcal{E}_{2}$ holds , we have
\begin{align*}
I_{3} \leq 3(HT + H^{3}\log(1/\delta)).
\end{align*}
Finally, due to on event $\event$, we have $I_{4}\leq 0$. We finish the proof by substituting $I_{1}, I_{2}, I_{3}, I_{4}$ into \eqref{eq:VARBOUND1}.
\end{proof}

\section{Full Version of Algorithm \ref{alg:Offline}}\label{sec:Full Algorithm}

In this section, we present the full version of Algorithm \ref{alg:Offline} in Algorithm \ref{alg:fullalg}.

\begin{algorithm}[ht!]
\caption{ $\algname$}\label{alg:fullalg}
\begin{algorithmic}[1]
\STATE \textbf{Input:} Regularization parameter $\lambda$, Number of episode $K$, number of horizon $H$.
\STATE For any $h$, $\overline{\bSigma}^{(i)}_{1,h}  \leftarrow \underline{\bSigma}^{(i)}_{1,h} \leftarrow \lambda \Ib$; $\overline{\bbb}^{(i)}_{1,h} \leftarrow \underline{\bbb}^{(i)}_{1,h} \leftarrow \zero$; $\overline{\btheta}^{(i)}_{1,h} \leftarrow \underline{\btheta}^{(i)}_{1,h} \leftarrow  \zero$, for $i \in \{0,1\}$.  
\FOR{$k = 1,\ldots, K$}
\STATE $\overline{V}_{k,H+1}(\cdot) \leftarrow 0$, $\underline{V}_{k,H+1}(\cdot) \leftarrow 0$
\FOR{$h = H, \ldots, 1$}
\STATE Set  $\overline{Q}_{k,h}(\cdot,\cdot,\cdot)$ and $\underline{Q}_{k,h}(\cdot,\cdot,\cdot)$ as in \eqref{eq:overunderq}.
\FOR {$s\in \cS$}
\STATE Let $\mu_{h}^{k}(\cdot,\cdot|s) = \epsilon\text{-CCE}(\overline{Q}_{k,h}(s,\cdot,\cdot),\underline{Q}_{k,h}(s,\cdot,\cdot))$. \label{line:1}
\STATE $\overline{V}_{k,h}(s) = \EE_{(a,b) \sim \mu_{h}^{k}(\cdot,\cdot|s)}\overline{Q}_{k,h}(s, a, b)$, $\underline{V}_{k,h}(s) = \EE_{(a,b) \sim \mu_{h}^{k}(\cdot,\cdot|s)}\underline{Q}_{k,h}(s, a, b)$
\STATE $\pi_{h}^{k}(\cdot|s) = \cP_{\max}\mu_{h}^{k}(\cdot,\cdot|s)$, $\nu_{h}^{k}(\cdot|s) = \cP_{\min}\mu_{h}^{k}(\cdot,\cdot|s)$
\ENDFOR
\ENDFOR
\STATE receives $s_{1}^{k}$
\FOR{$h = 1, \ldots, H$}
\STATE Take action $a_{h}^{k} \sim \pi_{h}^{k}(s_{h}^{k})$ and $b_{h}^{k} \sim \nu_{h}^{k}(s_{h}^{k})$ and receives $s_{h+1}^{k} \sim \mathbb{P}(\cdot | s_{h}^{k}, a_{h}^{k}, b_{h}^{k})$.
\STATE Set $\mathbb{V}^{\text{est}}\overline{V}_{k,h+1}(s_{h}^{k},a_{h}^{k},b_{h}^{k})$ and $\mathbb{V}^{\text{est}}\underline{V}_{k,h+1}(s_{h}^{k},a_{h}^{k},b_{h}^{k})$ as in \eqref{eq:overunderv}.
\STATE Set $\overline{E}_{k, h}, \underline{E}_{k, h}, \overline{\bsigma}_{k,h}, \underline{\bsigma}_{k,h}, \overline{\bSigma}_{k+1,h}^{(0)}, \underline{\bSigma}_{k+1,h}^{(0)}$, $\overline{\bbb}_{k+1,h}^{(0)}, \underline{\bbb}_{k+1,h}^{(0)}, \overline{\bSigma}_{k+1,h}^{(1)}, \underline{\bSigma}_{k+1,h}^{(1)}, \overline{\bbb}_{k+1,h}^{(1)}, \underline{\bbb}_{k+1,h}^{(1)}$ as defined in \eqref{summ2}.
\STATE Set $\overline{\btheta}_{k+1,h}^{(i)} \leftarrow \big[\overline{\bSigma}_{k+1,h}^{(i)}\big]^{-1}\overline{\bbb}_{k+1,h}^{(i)} $,  $\underline{\btheta}_{k+1,h}^{(i)} \leftarrow \big[\underline{\bSigma}_{k+1,h}^{(i)}\big]^{-1}\underline{\bbb}_{k+1,h}^{(i)} $, $i=0,1$
\ENDFOR
\ENDFOR
\end{algorithmic}
\end{algorithm}

\noindent\textbf{Update of optimistic action-value function:}
\begin{align}
 &\overline{Q}_{k,h}(\cdot,\cdot,\cdot) \leftarrow \Big[ r_{h}(\cdot,\cdot,\cdot) + \langle \overline{\btheta}^{(0)}_{k,h}, \bphi_{\overline{V}_{k,h+1}}(\cdot,\cdot,\cdot) \rangle + \beta^{(0)}_{k}\Big\|\big[\overline{\bSigma}_{k,h}^{(0)}\big]^{-1/2}\bphi_{\overline{V}_{k,h+1}}(\cdot,\cdot,\cdot)\Big\|_{2}\Big]_{[-H,H]}\notag\\
&\underline{Q}_{k,h}(\cdot,\cdot,\cdot) \leftarrow \Big[ r_{h}(\cdot, \cdot, \cdot) + \langle  \underline{\btheta}^{(0)}_{k,h}, \bphi_{\underline{V}_{k,h+1}}(\cdot,\cdot,\cdot) \rangle - \beta^{(0)}_{k}\Big\|\big[\underline{\bSigma}_{k,h}^{(0)}\big]^{-1/2}\bphi_{\underline{V}_{k,h+1}}(\cdot,\cdot,\cdot)\Big\|_{2}\Big]_{[-H,H]}. \label{eq:overunderq}   
\end{align}

\noindent\textbf{Update of variance estimation:}
\begin{align}
&\mathbb{V}^{\text{est}}\overline{V}_{k,h+1}(s_{h}^{k},a_{h}^{k},b_{h}^{k})\leftarrow \big[ \langle\bphi_{\overline{V}^{2}_{k,h+1}}(s_{h}^{k},a_{h}^{k},b_{h}^{k}), \overline{\btheta}^{(1)}_{k,h}\rangle \big]_{[0,H^{2}]} - \big[\langle\bphi_{\overline{V}_{k,h+1}}(s_{h}^{k},a_{h}^{k},b_{h}^{k}), \overline{\btheta}^{(0)}_{k,h}\rangle\big]^{2}_{[-H,H]},\notag \\
    & \mathbb{V}^{\text{est}}\underline{V}_{k,h+1}(s_{h}^{k},a_{h}^{k},b_{h}^{k})\leftarrow \big[ \langle\bphi_{\underline{V}^{2}_{k,h+1}}(s_{h}^{k},a_{h}^{k},b_{h}^{k}), \underline{\btheta}^{(1)}_{k,h}\rangle\big]_{[0,H^{2}]} - \big[ \langle\bphi_{\underline{V}_{k,h+1}}(s_{h}^{k},a_{h}^{k},b_{h}^{k}), \underline{\btheta}^{(0)}_{k,h}\rangle\big]^{2}_{[-H,H]}.\label{eq:overunderv}
\end{align}

\noindent\textbf{Update of other parameters:}
\begin{align}
 &\overline{E}_{k, h} = \min\big\{H^{2}, \beta^{(1)}_{k}\Big\|\big[\overline{\bSigma}_{k, h}^{(1)}\big]^{-1/2}\bphi_{\overline{V}_{k, h+1}^2}(s_{h}^{k}, a_{h}^{k},b_{h}^{k})\Big\|_{2}\big\}\notag\\
&\qquad + \min\big\{H^{2}, 2H\beta^{(2)}_{k}\Big\|\overline{\bSigma}_{k, h}^{(0)-1/2}\bphi_{\overline{V}_{k, h+1}}(s_{h}^{k}, a_{h}^{k},b_{h}^{k})\Big\|_{2}\big\}\notag \\
&\underline{E}_{k, h} = \min\big\{H^{2}, \beta^{(1)}_{k}\Big\|\big[\underline{\bSigma}_{k, h}^{(1)}\big]^{-1/2}\bphi_{\underline{V}_{k, h+1}^2}(s_{h}^{k}, a_{h}^{k},b_{h}^{k})\Big\|_{2}\big\}\notag\\
&\qquad + \min\big\{H^{2}, 2H\beta^{(2)}_{k}\Big\|\underline{\bSigma}_{k, h}^{(0)-1/2}\bphi_{\underline{V}_{k, h+1}}(s_{h}^{k}, a_{h}^{k},b_{h}^{k})\Big\|_{2}\big\},\notag \\
&\overline{\bsigma}_{k,h} = \sqrt{\max\{H^{2}/4d, \mathbb{V}^{\text{est}}\overline{V}_{k,h+1}(s_{h}^{k},a_{h}^{k},b_{h}^{k}) + \overline{E}_{k,h}\}},\notag \\
&\underline{\bsigma}_{k,h} = \sqrt{\max\{H^{2}/4d, \mathbb{V}^{\text{est}}\underline{V}_{k,h+1}(s_{h}^{k},a_{h}^{k},b_{h}^{k}) + \underline{E}_{k,h}\}},\notag \\
&\overline{\bSigma}_{k+1,h}^{(0)} \leftarrow \overline{\bSigma}_{k,h}^{(0)} + \overline{\bsigma}_{k,h}^{-2} \bphi_{\overline{V}_{k,h+1}}(s_{h}^{k},a_{h}^{k},b_{h}^{k})\bphi_{\overline{V}_{k,h+1}}(s_{h}^{k},a_{h}^{k},b_{h}^{k})^{\top},\notag \\
&\underline{\bSigma}_{k+1,h}^{(0)} \leftarrow \underline{\bSigma}_{k,h}^{(0)} + \underline{\bsigma}_{k,h}^{-2} \bphi_{\underline{V}_{k,h+1}}(s_{h}^{k},a_{h}^{k},b_{h}^{k})\bphi_{\underline{V}_{k,h+1}}(s_{h}^{k},a_{h}^{k},b_{h}^{k})^{\top},\notag \\
&\overline{\bbb}_{k+1,h}^{(0)} = \overline{\bbb}_{k,h}^{(0)} + \overline{\bsigma}_{k,h}^{-2} \bphi_{\overline{V}_{k,h+1}}(s_{h}^{k},a_{h}^{k},b_{h}^{k})\overline{V}_{k,h+1}(s_{h+1}^{k}),\notag \\
&\underline{\bbb}_{k+1,h}^{(0)} = \underline{\bbb}_{k,h}^{(0)} + \underline{\bsigma}_{k,h}^{-2} \bphi_{\underline{V}_{k,h+1}}(s_{h}^{k},a_{h}^{k},b_{h}^{k})\underline{V}_{k,h+1}(s_{h+1}^{k}),\notag \\
&\overline{\bSigma}_{k+1,h}^{(1)} \leftarrow \overline{\bSigma}_{k,h}^{(1)} +  \bphi_{\overline{V}_{k,h+1}^{2}}(s_{h}^{k},a_{h}^{k},b_{h}^{k})\bphi_{\overline{V}_{k,h+1}^{2}}(s_{h}^{k},a_{h}^{k},b_{h}^{k})^{\top},\notag \\
&\underline{\bSigma}_{k+1,h}^{(1)} \leftarrow \underline{\bSigma}_{k,h}^{(1)} +  \bphi_{\underline{V}_{k,h+1}^{2}}(s_{h}^{k},a_{h}^{k},b_{h}^{k})\bphi_{\underline{V}_{k,h+1}^{2}}(s_{h}^{k},a_{h}^{k},b_{h}^{k})^{\top},\notag \\
&\overline{\bbb}_{k+1,h}^{(1)} = \overline{\bbb}_{k,h}^{(1)} +  \bphi_{\overline{V}_{k,h+1}^{2}}(s_{h}^{k},a_{h}^{k},b_{h}^{k})\overline{V}_{k,h+1}^{2}(s_{h+1}^{k}),\notag \\
&\underline{\bbb}_{k+1,h}^{(1)} = \underline{\bbb}_{k,h}^{(1)} +  \bphi_{\underline{V}_{k,h+1}^{2}}(s_{h}^{k},a_{h}^{k},b_{h}^{k})\underline{V}_{k,h+1}^{2}(s_{h+1}^{k}).\label{summ2}
\end{align}
In line~\ref{line:1} of Algorithm~\ref{alg:fullalg}, we need to call the  $\epsilon\text{-CCE}$ subroutine. In detail, for any fixed state $s$, and two matrices $\overline{Q}_{k,h}(s,\cdot,\cdot)$ and $\underline{Q}_{k,h}(s,\cdot,\cdot) \in [0,1]^{|\cA_{\max}|\times |\cA_{\min}|}$, the subroutine $\epsilon\text{-CCE}(\cdot,\cdot)$ returns a distribution $\sigma \in \Delta_{|\cA_{\max}|\times |\cA_{\min}|}$ that satisfies 
\begin{align}
    \EE_{(a,b) \sim \sigma} \overline{Q}_{k,h}(s,a,b) \geq \max_{a' \in \cA_{\text{max}}}\EE_{(a,b) \sim \sigma }\overline{Q}_{k,h}(s,a',b)-\epsilon,\notag \\
    \EE_{(a,b) \sim \sigma} \underline{Q}_{k,h}(s,a,b) \leq \min_{b' \in \cA_{\text{min}}}\EE_{(a,b) \sim \sigma }\underline{Q}_{k,h}(s,a,b')+\epsilon.\label{eq:con}
\end{align}
$\eqref{eq:con}$ is a feasibility problem, where the constraints can be rewritten as $|\cA_{\max}| + |\cA_{\min}|$ linear
constraints on $\sigma \in \Delta_{|\cA_{\max}|\times |\cA_{\min}|}$. Thus it can be efficiently resolved by any linear programming algorithms. See also Appendix B in \citet{liu2020sharp}  and \citet{xie2020learning} for more detailed discussions.

\section{Extensions to Turn-based Games}\label{sec:turn}
In this section, we extend our algorithm and results to turn-based Markov games. 

\noindent\textbf{Turn-based MGs} A two-player zero-sum turn-based episodic MG is denoted by a tuple \\$M(\cS, \cA, H, \{r_h\}_{h=1}^H, \{\PP_h\}_{h=1}^H)$, where $\cS = \cS_{\text{max}} \cup \cS_{\text{min}}$, $\cS_{\text{max}}$ ($\cS_{\text{min}}$) are the states where the max (min)-player plays, $\cS_{\text{max}} \cap \cS_{\text{min}} = \emptyset$. Note that the partition of state space suggests that at each step, only one player can play. $\cA$ is the action space, $H$ is the length of game/episode, $r_h:\cS\times \cA \rightarrow [-1,1]$ is the reward function, $\PP_h(s'|s,a)$ denotes the transition probability for the max (min)-player ($s \in \cS_{\text{max}}$ or $\cS_{\text{max}}$) to take action $a$ and transit to next state $s'$. Similar to the linear mixture MGs, we can define linear mixture turn-based MGs as follows. 
\begin{definition}
$M(\cS, \cA, H, \{r_h\}_{h=1}^H, \{\PP_h\}_{h=1}^H)$ is called a time-inhomogeneous, episodic $B$-bounded linear mixture turn-based Markov game if there exist $\{\btheta_h\}_{h=1}^H \subset \RR^d$ and $\tilde\bphi(s'|s,a) \in \RR^d$ satisfying 
\begin{align}
    &\|\btheta_h\|_2 \leq B,\qquad \forall V:\cS \rightarrow [-1,1],\ \bigg\|\sum_{s' \in \cS}\tilde\bphi(s'|s,a)V(s')\bigg\|_2 \leq 1,\ \notag
\end{align}
such that $\mathbb{P}_{h}(s'|s,a) = \langle \bphi(s'|s,a), \btheta_{h}\rangle$ for any state-action-state triplet $(s, a, s')$ and any step $h$.
\end{definition}
Based on above definition, we show that any turn-based linear mixture MG can be regarded as a special case of linear mixture simultaneous-move MG. In fact, for any turn-based linear mixture MG with feature mapping $\tilde\bphi(\cdot|\cdot,\cdot)$ and reward $\tilde r_h(\cdot, \cdot)$, we can define the corresponding  linear mixture simultaneous-move MG with feature mapping $\bphi(\cdot|\cdot, \cdot, \cdot)$ and reward $r_h(\cdot, \cdot, \cdot)$ as follows: for each $s \in \cS_{\max}$, 
\begin{align*}
\bphi(s'|s,a,b) = \tilde{\bphi}_{h}(s'|s,a), \, r_{h}(s'|s,a,b) = \tilde{r}_{h}(s'|s,a),
\end{align*}
and for each $s \in  \cS_{\min}$, 
\begin{align*}
\bphi(s'|s,a,b) = \tilde{\bphi}_{h}(s'|s,b), \, r_{h}(s'|s,a,b) = \tilde{r}_{h}(s'|s,b).
\end{align*}
Therefore, we can still use Algorithm \ref{alg:Offline} to find the Nash equilibrium. Notice that for the turn-based game, at each step only one player can take action. Thus, the $\epsilon$-CCE routine in Line \ref{alg:CCE} of Algorithm \ref{alg:Offline} needs be replaced by two separate subroutines: taking $\pi_h^k$ and $\nu_h^k$ as greedy policies w.r.t. $\overline{Q}_{k,h}$ and  $\underline{Q}_{k,h}$. For completeness, we present the turn-based version of Algorithm \ref{alg:Offline} as Algorithm \ref{alg:turnbased}.

\begin{algorithm}[ht!]
\caption{Turn-based  \algname}\label{alg:turnbased}
\begin{algorithmic}[1]
\STATE For any $h$, $\overline{\bSigma}^{(i)}_{1,h}  \leftarrow \underline{\bSigma}^{(i)}_{1,h} \leftarrow \lambda \Ib$; $\overline{\bbb}^{(i)}_{1,h} \leftarrow \underline{\bbb}^{(i)}_{1,h} \leftarrow \zero$; $\overline{\btheta}^{(i)}_{1,h} \leftarrow \underline{\btheta}^{(i)}_{1,h} \leftarrow  \zero$, for $i \in \{0,1\}$.  
\FOR{$k = 1,\ldots, K$}
\STATE $\overline{V}_{k,H+1}(\cdot) \leftarrow 0$, $\underline{V}_{k,H+1}(\cdot) \leftarrow 0$
\FOR{$h = H, \ldots, 1$}
\STATE Set $\overline{Q}_{k,h}(\cdot,\cdot)$ and $\underline{Q}_{k,h}(\cdot,\cdot)$ as in \eqref{eq:turnoverunderq}.
\FOR {$s\in \cS_{\max}$}
\STATE $\pi_{h}^{k}(\cdot|s) = \max_{a\in \cA}\overline{Q}_{k,h}(s,a)$, $\overline{V}_{k,h}(s) = \EE_{a \sim \pi_{h}^{k}(\cdot|s)}\overline{Q}_{k,h}(s, a)$.
\ENDFOR
\FOR {$s\in \cS_{\min}$}
\STATE $\nu_{h}^{k}(\cdot|s) = \min_{b\in \cA}\underline{Q}_{k,h}(s,b)$, $\underline{V}_{k,h}(s) = \EE_{b \sim \nu_{h}^{k}(\cdot|s)}\underline{Q}_{k,h}(s, b)$.
\ENDFOR
\ENDFOR
\STATE receives $s_{1}^k$
\FOR{$h = 1, \ldots, H$}
\IF{$s_h^k \in \cS_{\max}$}
\STATE Take action $a_{h}^{k} \sim \pi_{h}^{k}(\cdot|s_{h}^{k})$ and 
receives $s_{h+1}^{k} \sim \mathbb{P}(\cdot | s_{h}^{k}, a_{h}^{k})$.
\ELSE
\STATE Take action $a_{h}^{k} \sim \nu_{h}^{k}(\cdot|s_{h}^{k})$ and 
receives $s_{h+1}^{k} \sim \mathbb{P}(\cdot | s_{h}^{k}, a_{h}^{k})$.
\ENDIF
\STATE Set $\mathbb{V}^{\text{est}}\overline{V}_{k,h+1}(s_{h}^{k},a_{h}^{k})$ and $ \mathbb{V}^{\text{est}}\underline{V}_{k,h+1}(s_{h}^{k},a_{h}^{k})$ as in \eqref{eq:turnoverunderv}.

\STATE Set $\overline{E}_{k, h}, \underline{E}_{k, h},\overline{\bsigma}_{k,h}, \underline{\bsigma}_{k,h}$, $\overline{\bSigma}_{k+1,h}^{(0)}$, $\underline{\bSigma}_{k+1,h}^{(0)}$, $\overline{\bbb}_{k+1,h}^{(0)}$, $\underline{\bbb}_{k+1,h}^{(0)}$,  $\overline{\bSigma}_{k+1,h}^{(1)}$, $\underline{\bSigma}_{k+1,h}^{(1)}$, $\overline{\bbb}_{k+1,h}^{(1)}$, $\underline{\bbb}_{k+1,h}^{(1)}$ as defined in \eqref{summ1}.
\STATE Set $\overline{\btheta}_{k+1,h}^{(i)} \leftarrow \big[\overline{\bSigma}_{k+1,h}^{(i)}\big]^{-1}\overline{\bbb}_{k+1,h}^{(i)} $,  $\underline{\btheta}_{k+1,h}^{(i)} \leftarrow \big[\underline{\bSigma}_{k+1,h}^{(i)}\big]^{-1}\underline{\bbb}_{k+1,h}^{(i)} $, $i=0,1$
\ENDFOR
\ENDFOR
\end{algorithmic}
\end{algorithm}

\noindent\textbf{Update of optimistic action-value function:}
\begin{align}
&\overline{Q}_{k,h}(\cdot,\cdot) \leftarrow \min\{H, \tilde{r}_{h}(\cdot,\cdot) + \langle \overline{\btheta}^{(0)}_{k,h}, \tilde{\bphi}_{\overline{V}_{k,h+1}}(\cdot,\cdot) \rangle + \beta^{(0)}_{k}\Big\|\big[\overline{\bSigma}_{k,h}^{(0)}\big]^{-1/2}\tilde{\bphi}_{\overline{V}_{k,h+1}}(\cdot,\cdot)\Big\|_{2}\}\notag\\ &\underline{Q}_{k,h}(\cdot,\cdot) \leftarrow \max\{-H, \tilde{r}_{h}(\cdot,\cdot) + \langle \underline{\btheta}^{(0)}_{k,h}, \tilde{\bphi}_{\underline{V}_{k,h+1}}(\cdot,\cdot) \rangle - \beta^{(0)}_{k}\Big\|\big[\underline{\bSigma}_{k,h}^{(0)}\big]^{-1/2}\tilde{\bphi}_{\underline{V}_{k,h+1}}(\cdot,\cdot)\Big\|_{2}\} \label{eq:turnoverunderq}.    
\end{align}
\noindent\textbf{Update of variance estimation:}
\begin{align}
 &\mathbb{V}^{\text{est}}\overline{V}_{k,h+1}(s_{h}^{k},a_{h}^{k})\leftarrow \big[ \langle\tilde{\bphi}_{\overline{V}^{2}_{k,h+1}}(s_{h}^{k},a_{h}^{k}), \overline{\btheta}^{(1)}_{k,h}\rangle \big]_{[0,H^{2}]} - \big[\langle\tilde{\bphi}_{\overline{V}_{k,h+1}}(s_{h}^{k},a_{h}^{k}), \overline{\btheta}^{(0)}_{k,h}\rangle\big]^{2}_{[-H,H]},\notag \\
    & \mathbb{V}^{\text{est}}\underline{V}_{k,h+1}(s_{h}^{k},a_{h}^{k})\leftarrow \big[ \langle\tilde{\bphi}_{\underline{V}^{2}_{k,h+1}}(s_{h}^{k},a_{h}^{k}), \underline{\btheta}^{(1)}_{k,h}\rangle\big]_{[0,H^{2}]} - \big[ \langle\tilde{\bphi}_{\underline{V}_{k,h+1}}(s_{h}^{k},a_{h}^{k}), \underline{\btheta}^{(0)}_{k,h}\rangle\big]^{2}_{[-H,H]}.\label{eq:turnoverunderv}
\end{align}
\noindent\textbf{Update of other parameters:}
\begin{align}
& \overline{E}_{k, h} = \min\big\{H^{2}, \beta^{(1)}_{k}\Big\|\big[\overline{\bSigma}_{k, h}^{(1)}\big]^{-1/2}\tilde{\bphi}_{\overline{V}_{k, h+1}^2}(s_{h}^{k}, a_{h}^{k})\Big\|_{2}\big\}\notag\\
&\qquad + \min\big\{H^{2}, 2H\beta^{(2)}_{k}\Big\|\overline{\bSigma}_{k, h}^{(0)-1/2}\tilde{\bphi}_{\overline{V}_{k, h+1}}(s_{h}^{k}, a_{h}^{k})\Big\|_{2}\big\},\notag \\
&\underline{E}_{k, h} = \min\big\{H^{2}, \beta^{(1)}_{k}\Big\|\big[\underline{\bSigma}_{k, h}^{(1)}\big]^{-1/2}\tilde{\bphi}_{\underline{V}_{k, h+1}^2}(s_{h}^{k}, a_{h}^{k})\Big\|_{2}\big\}\notag\\
&\qquad + \min\big\{H^{2}, 2H\beta^{(2)}_{k}\Big\|\underline{\bSigma}_{k, h}^{(0)-1/2}\tilde{\bphi}_{\underline{V}_{k, h+1}}(s_{h}^{k}, a_{h}^{k})\Big\|_{2}\big\},\notag\\
&\overline{\bsigma}_{k,h} = \sqrt{\max\{H^{2}/d, \mathbb{V}^{\text{est}}\overline{V}_{k,h+1}(s_{h}^{k},a_{h}^{k}) + \overline{E}_{k,h}\}},\notag \\
& \underline{\bsigma}_{k,h} = \sqrt{\max\{H^{2}/d, \mathbb{V}^{\text{est}}\underline{V}_{k,h+1}(s_{h}^{k},a_h^k) + \underline{E}_{k,h}\}},\notag \\
    &\overline{\bSigma}_{k+1,h}^{(0)} \leftarrow \overline{\bSigma}_{k,h}^{(0)} + \overline{\bsigma}_{k,h}^{-2} \tilde{\bphi}_{\overline{V}_{k,h+1}}(s_{h}^{k},a_{h}^{k})\tilde{\bphi}_{\overline{V}_{k,h+1}}(s_{h}^{k},a_{h}^{k})^{\top}\notag \\
    &\underline{\bSigma}_{k+1,h}^{(0)} \leftarrow \underline{\bSigma}_{k,h}^{(0)} + \underline{\bsigma}_{k,h}^{-2} \tilde{\bphi}_{\underline{V}_{k,h+1}}(s_{h}^{k},b_{h}^{k})\tilde{\bphi}_{\underline{V}_{k,h+1}}(s_{h}^{k},a_{h}^{k})^{\top}\notag \\
    &\overline{\bbb}_{k+1,h}^{(0)} = \overline{\bbb}_{k,h}^{(0)} + \overline{\bsigma}_{k,h}^{-2} \tilde{\bphi}_{\overline{V}_{k,h+1}}(s_{h}^{k},a_{h}^{k})\overline{V}_{k,h+1}(s_{k,h+1})\notag \\
    & \underline{\bbb}_{k+1,h}^{(0)} = \underline{\bbb}_{k,h}^{(0)} + \underline{\bsigma}_{k,h}^{-2} \tilde{\bphi}_{\underline{V}_{k,h+1}}(s_{h}^{k},a_{h}^{k})\underline{V}_{k,h+1}(s_{k,h+1})\notag \\
    & \overline{\bSigma}_{k+1,h}^{(1)} \leftarrow \overline{\bSigma}_{k,h}^{(1)} +  \tilde{\bphi}_{\overline{V}_{k,h+1}^{2}}(s_{h}^{k},a_{h}^{k})\tilde{\bphi}_{\overline{V}_{k,h+1}^{2}}(s_{h}^{k},a_{h}^{k})^{\top}\notag \\
    & \underline{\bSigma}_{k+1,h}^{(1)} \leftarrow \underline{\bSigma}_{k,h}^{(1)} +  \tilde{\bphi}_{\underline{V}_{k,h+1}^{2}}(s_{h}^{k},a_h^k)\tilde{\bphi}_{\underline{V}_{k,h+1}^{2}}(s_{h}^{k},a_h^k)^{\top}\notag \\
    & \overline{\bbb}_{k+1,h}^{(1)} = \overline{\bbb}_{k,h}^{(1)} +  \tilde{\bphi}_{\overline{V}_{k,h+1}^{2}}(s_{h}^{k},a_{h}^{k})\overline{V}_{k,h+1}^{2}(s_{h+1}^{k}),\notag \\
    & \underline{\bbb}_{k+1,h}^{(1)} = \underline{\bbb}_{k,h}^{(1)} +  \tilde{\bphi}_{\underline{V}_{k,h+1}^{2}}(s_{h}^{k},a_h^k)\underline{V}_{k,h+1}^{2}(s_{h+1}^{k})
    .\label{summ1}
\end{align}
By Theorem \ref{mainthm}, we immediately have that the regret of our turn-based algorithm in Algorithm~\ref{alg:turnbased} is also bounded by
\begin{align*}
\tilde{\cO}(\sqrt{d^{2}H^{2}+dH^{3}}\sqrt{T}+d^{2}H^{3}+d^{3}H^{2}),
\end{align*}
where $T=KH$. Similarly, we can show that if $d\geq H$ and $T \geq d^{4}H^{2}$, our turn-based algorithm is nearly minimax optimal.


\end{document}